\newtheorem{theorem}{Theorem}
\newtheorem{lemma}{Lemma}
\newtheorem{proposition}{Proposition}
\newtheorem{corollary}{Corollary}
\newtheorem{definition}{Definition}
\newtheorem{remark}{Remark}
\newcommand{\lp}{\left (} 
\newcommand{\rp}{\right )} 
\newcommand{\lb}{\left [}
\newcommand{\rb}{\right ]}
\newcommand{\mc}[1]{\mathcal{#1}}
\newcommand{\mbb}[1]{\mathbb{#1}}
\newcommand{\mf}[1]{\mathfrak{#1}}
\newcommand{\indi}[1]{\mathbbm{1}_{ \{#1\} }}
\newcommand{\X}{\mathcal{X}}
\DeclareMathOperator*{\argmax}{arg\,max}
\DeclareMathOperator*{\argmin}{arg\,min}
\newcommand{\tbf}[1]{\textbf{#1}}
\newcommand{\pushright}[1]{\ifmeasuring@#1\else\omit\hfill$\displaystyle#1$\fi\ignorespaces}
\newcommand{\pushleft}[1]{\ifmeasuring@#1\else\omit$\displaystyle#1$\hfill\fi\ignorespaces}
\newcommand{\f}{\varphi}
\newcommand{\Dtv}{D_{\text{TV}}} 
\newcommand{\Dkl}{D_{\text{KL}}} 
\newcommand{\Dltwo}{D_{\ell_2}}
\newcommand{\Dlone}{D_{\ell_1}} 
\newcommand{\Dsep}{D_{\text{s}}} 
\newcommand{\Dfdiv}{D_{f}}
\newcommand{\ciltwo}[1][i]{c_{#1}^{(\ell_2)}} 
\newcommand{\cilone}[1][i]{c_{#1}^{(\ell_1)}} 
\newcommand{\cikl}[1][i]{c_{#1}^{(\text{KL})}} 
\newcommand{\cisep}[1][i]{c_{#1}^{(s)}} 
\newcommand{\Cltwo}{C_{\ell_2}} 
\newcommand{\Clone}{C_{\ell_1}} 
\newcommand{\hatciltwo}[1][i]{\hat{c}_{#1, t}^{(\ell_2)}}
\newcommand{\hatcilone}[1][i]{\hat{c}_{#1, t}^{(\ell_1)}}
\newcommand{\eijtlone}{e_{ij, t}^{(\ell_1)}} 
\newcommand{\eitlone}{e_{i,t}^{(\ell_1)}}
\newcommand{\eijtltwo}{e_{ij,t}^{(\ell_2)}}
\newcommand{\eitltwo}{ e_{i,t}^{(\ell_2)}}
\newcommand{\uiltwo}[1][i]{u_{#1,t}^{(\ell_2)}}
\newcommand{\uilone}[1][i]{u_{#1,t}^{(\ell_1)}}
\newcommand{\Tins}[1][i]{\widetilde{T}_{#1}^*} 
\newcommand{\Tit}[1][i]{T_{#1,t}} 
\newcommand{\Algltwo}{\mathcal{A}_{\ell_2}}
\newcommand{\Alglone}{\mc{A}_{\ell_1}}
\newcommand{\Algkl}{\mc{A}_{\text{KL}}} 
\newcommand{\Algsep}{\mc{A}_s}
\newcommand{\Algfdiv}{\mc{A}_f}
\newcommand{\Algapprox}{\mc{\widetilde{A}}^*}
\newcommand{\Mltwo}{M_{\ell_2}}
\newcommand{\Mlone}{M_{\ell_1}}
\newcommand{\Mkl}{M_{\text{KL}}}
\newcommand{\lambdailtwo}[1][i]{\lambda_{#1}^{(\ell_2)}} 
\newcommand{\lambdailone}[1][i]{\lambda_{#1}^{(\ell_1)}}
\newcommand{\Ber}[1]{\text{Ber}\lp #1\rp}
\newcommand{\cfunc}[1]{\mf{c}\lp#1\rp}
\newcommand{\cfuncs}{\mf{c}}
\newcommand{\kl}{\text{kl}}
\title{Adaptive Sampling for Estimating Multiple Probability Distributions}
\date{}
\author{
  Shubhanshu Shekhar\\
  \texttt{shshekha@eng.ucsd.edu} \\
  \and
  Tara Javidi \\
  \texttt{tjavidi@eng.ucsd.edu}
  \and 
  Mohammad Ghavamzadeh\\
  \texttt{mgh@fb.com} 
}
\begin{document}
\maketitle

\begin{abstract}
We consider the problem of allocating samples to a finite set of discrete distributions in order to learn them uniformly well in terms of four common distance measures: $\ell_2^2$, $\ell_1$, $f$-divergence, and separation distance. To present a unified treatment of these distances, we first propose a general optimistic tracking algorithm and analyze its sample allocation performance w.r.t.~an oracle. We then instantiate this algorithm for the four distance measures and derive bounds on the regret of their resulting allocation schemes. We verify our theoretical findings through some experiments. Finally, we show that the techniques developed in the paper can be easily extended to the related setting of minimizing the average error (in terms of the four distances) in learning a set of distributions. 
\end{abstract}

\section{Introduction}
\label{sec:introduction}
Consider the problem in which a learner  must allocate $n$ samples among $K$ discrete distributions to construct \emph{uniformly good} estimates of these distributions in terms of a distance measure $D$. Depending on $D$, certain distributions may require much fewer samples than the others to be estimated with the same precision.  The optimal sampling strategy for a given $n$ requires knowledge of the true distributions. The goal of this paper is to design adaptive allocation strategies that converge to the optimal strategy, without relying on the knowledge of the true distributions.

The problem described above models several practical applications. Here we present two examples. \tbf{(1)~Learning MDP model:} 
In many sequential decision-making problems, the agent's interaction with the environment is modelled as a Markov decision process (MDP). In these problems, it is often quite important to accurately estimate the dynamics (i.e.,~the transition structure of the MDP), given a finite exploration budget. Learning MDP model is equivalent to estimating $S\times A$ distributions, where $S$ and $A$ are the number of states and actions of the (finite) MDP. Therefore, if we assume the existence of a known policy that can efficiently transition the MDP between any two states, then the problem of learning MDP model given a fixed budget of exploration reduces to finding the optimal allocation of samples to these $S\times A$ distributions. Thus, the framework studied in this paper provides the first step towards solving the general problem of constructing accurate models for MDPs. The requirement of a known policy to transition between states can be relaxed by employing the techniques that have been recently developed for efficient exploration in MDPs (e.g.,~\cite{tarbouriech2019active,Hazan19PE}), which we leave for future work. 
\tbf{(2)~Compression of text files:} 
Given a sampling budget of $n$ bytes, consider the problem of  designing codes with minimum average length for text files in $K$ different languages. Since different languages may have different symbol frequencies, this can be formulated as learning $K$ distributions uniformly well in terms of certain $f$-divergences.

The problem considered in this paper lies in the general area of active learning with bandit feedback. Prior work in this area include~\cite{antos2008active,carpentier2011upper,Carpentier11FT,soare2014best,Riquelme17AL}.~\cite{antos2008active} studied the problem of learning the mean of $K$ distributions uniformly well, and proposed and analyzed an algorithm based on \emph{forced exploration} strategy.~\cite{carpentier2011upper} proposed and analyzed an alternative approach for the same problem, based on the UCB algorithm~\citep{auer2002finite}.~\cite{Carpentier11FT} analyzed an optimistic policy for the related problem of \emph{stratified-sampling}, where the goal is to learn $K$ distributions in terms of a weighted average distance (instead of max).~\cite{Soare2013Active} extended the optimistic strategy to the case of uniformly estimating $K$ linearly correlated distributions. \cite{Riquelme17AL} applied the optimistic strategy to the problem of allocating covariates (drawn in an i.i.d.~manner from some distribution) for uniform estimation of $K$ linear models.

The prior work mentioned above have focused on estimating the means of distributions in squared error sense, and those techniques do not extend to learning entire distributions in terms of other distance measures. 
We fill this gap in the literature, and provide a unified and comprehensive treatment of the problem of uniformly estimating $K$ distributions in terms of various distance measures. 

\tbf{Contributions.} 
In Sec.~\ref{sec:general_tracking}, we first identify an appropriate function class $\mc{F}$  within which  the \emph{objective functions}  for various distance measures should lie. We then propose a generic optimistic tracking strategy~(Alg.~\ref{algo:optimistic_tracking1}) for objective functions in $\mc{F}$,  and obtain a bound on its deviation from an (approx-) oracle allocation (defined in Sec.~\ref{sec:general_tracking}). Then in Sec.~\ref{sec:algorithms}, we apply the general approach outlined in Sec.~\ref{sec:general_tracking} to the case of four widely-used distance measures: $\ell_2^2$, $\ell_1$, $f$-divergence, and separation distance. We do so by deriving the objective functions (either exact or approximate) in $\mc{F}$ for these distances, and then instantiating Alg.~\ref{algo:optimistic_tracking1} for the derived objective functions. For each distance measure, we obtain bounds on the regret of the proposed sampling scheme w.r.t.~an oracle strategy. Experiments with synthetic examples in Sec.~\ref{sec:experiments} validate our theoretical results. Finally, while we focus on minimizing the maximum error in learning a set of distributions in this paper, our techniques can easily deal with the related setting of minimizing the average error, which we discuss in Sec.~\ref{sec:discussion}. 

\tbf{Technical tools.} The results of this paper require generalizing existing techniques, as well as introducing new methods. 
More specifically, the proof of Lemma~\ref{lemma:tracking1} abstracts out the arguments of~\cite[Thm.~1]{carpentier2011upper} to deal with a much larger class of objective functions.
Prior work with mean-squared error~\citep{antos2008active, carpentier2011upper} required bounding the first and second moments of random sums that could be achieved by a direct application of Wald's equations~\citep[Thm.~4.8.6]{durrett2019probability}. Our results on $f$-divergence~(Thm.~\ref{theorem:regret_f_div} and Lemma~\ref{lemma:f_div_moments}) require analyzing higher moments of random sums for which Wald's equations are not applicable. Deriving the approximate objective function for separation distance involves estimating the expectation of the maximum of some correlated random variables. We obtain upper and lower bounds on this expectation in Lemma~\ref{lemma:separation1} by first approximating the maximum with certain sums, and then bounding the sums using a normal approximation result~\citep[Thm.~3.2]{ross2011fundamentals}.

\section{Problem Setup}
\label{sec:problem_setup}
Consider $K$ discrete distributions, $(P_i)_{i=1}^K$, that belong to the $(l-1)$-dimensional probability simplex $\Delta_l$, and take values in the set $\mc{X} = \{x_1, \ldots, x_l\}$. Each distribution $P_i$ is equivalently represented by a vector $P_i=(p_{i1},\ldots,p_{il})$ with $p_{ij} \geq 0,\;\forall j\in[l]$, and $\sum_{j=1}^l p_{ij} =1$. For any integer $b>0$, we denote by $[b]$, the set $\{1,\ldots,b\}$. Given a budget of $n \geq K$ samples, we consider the problem of allocating samples to each of the $K$ distributions in such a way that the maximum (over the $K$ distributions) discrepancy between the empirical distributions (estimated from the samples) and the true distributions is minimized.\footnote{The allocation strategies developed in this paper can be extended to the problem of minimizing the average (instead of minimizing the max) discrepancy between estimate and true distributions (see the discussion in Sec.~\ref{sec:discussion}).} To formally define this problem, suppose an allocation scheme assigns $(T_i)_{i=1}^K$ samples to the $K$ distributions, such that $T_i\geq 0,\;\forall i\in [K]$, and $\sum_{i=1}^K T_i = n$. Also suppose that $\hat{P}_i$ is the empirical distribution with $\hat{p}_{ij} = T_{ij}/T_i$, where $T_{ij}$ denotes the number of times the output $x_j$ was observed in the $T_i$ draws from $P_i$, and $D:\Delta_l \times \Delta_l \mapsto [0, \infty)$ is a distribution distance measure. Then, our problem of interest can be defined as finding an allocation scheme $(T_i)_{i=1}^K$ that solves the following constrained optimization problem:

\begin{equation}
\begin{aligned}
\underset{T_1,\ldots,T_K}{\text{ min}} \; \max_{i\in[K]} \;\mbb{E}\big[D(\hat{P}_i, P_i)\big], \;\;\; \text{s.t.} \;\; \sum_{i=1}^K T_i = n. 
\end{aligned}
\label{eq:objective1}
\end{equation}

We refer to the (non-integer) solution of~\eqref{eq:objective1} with full knowledge of $(P_i)_{i=1}^K$ as the \emph{oracle} allocation $(T^*_i)_{i=1}^K$. It is important to note that $(T^*_i)_{i=1}^K$ ensure that the objective functions $\gamma_i(T_i) \coloneqq \mbb{E}\big[D(\hat{P}_i,P_i)\big]$ are equal, for all $i\in[K]$. However, in practice, $(P_i)_{i=1}^K$ are not known. In this case, we refer to~\eqref{eq:objective1} as a \emph{tracking problem} in which the goal is to design adaptive sampling strategies that approximate the oracle allocation using running estimates of $(P_i)_{i=1}^K$.

\tbf{Choice of the Distance Measure.}
It is expected that the optimal allocation will be strongly dependent on the distance measure $D$.
We study four distances: $\ell_2^2$, $\ell_1$ or total variation (TV), $f$-divergence, and \emph{separation distance} in this paper. These distances include all those in~\cite{gibbs2002choosing} that do not require a metric structure on $\X$. 
The $f$-divergence family generalizes the well-known KL-divergence~($D_{\text{KL}})$ and includes a number of other common distances, such as total variation~($D_{\text{TV}})$, Hellinger ($D_H$), and $\chi^2$ ($D_{\chi^2}$). Applications of $f$-divergence include 
source and channel coding problems~\citep{csiszar1967source,csiszar1995channel}, testing goodness-of-fit~\citep{gyorfi2000goodness}, and distribution estimation~\citep{barron1992distribution}. The common $f$-divergences mentioned above satisfy the following chain of inequalities: $D_{\text{TV}} \leq D_H \leq \sqrt{D_{\text{KL}}} \leq \sqrt{D_{\chi^2}}$, that define a hierarchy of convergence among these measures~\citep[Eq.~2.27]{tsybakov2009lower}. 
The separation distance $D_{s}(P,Q)$~(defined formally in Sec.~\ref{subsec:separation_distance}) is the smallest $a\geq 0$, such that $P = (1-a)Q + aV$, for some distribution $V$~\citep[Sec.~2]{aldous1987strong}. This distance arises naturally in the study of the convergence of symmetric finite Markov chains to their stationary distribution. More specifically, if $Q$ is the stationary distribution of the Markov chain and $(P_t)_{t \geq 1}$ is its state distribution at time $t$, such that $Q=P_T$ at a random time $T$, then $D_s(P_t, Q) \leq \mathbb{P}(T>t)$~\citep[Sec.~3]{aldous1987strong}. 

\tbf{Allocation Scheme and Regret.} 
An \emph{adaptive} allocation scheme $\mc{A}$ consists of a sequence of mappings $(\pi_t)_{t \geq 1}$, where each mapping $\pi_t:\big(\mbb{N} \times (\mc{X}\times [K])^{t-1}\big) \mapsto [K]$ selects an arm to pull\footnote{Each distribution can be considered as an arm, and thus, we use the terms sampling from a distribution and pulling an arm interchangeably throughout the paper.} at time $t$, based on the budget $n$ and the history of pulls and observations up to time $t$. 
%
For an allocation scheme $\mc{A}$, a sampling budget $n$, and a distance measure $D$, we define the {\em risk} incurred by $\mc{A}$ as 
\begin{equation}
\label{eq:risk}
\mc{L}_n(\mc{A}, D) =\max_{i\in[K]} \mbb{E}\big[D(\hat{P}_i, P_i)\big].
\end{equation}

We denote by $\mc{A}^*$, the \emph{oracle} allocation rule.
 The performance of an allocation scheme $\mc{A}$ is measured by its suboptimality or \emph{regret} w.r.t.~$\mc{A}^*$, i.e., 
\begin{equation}
\mc{R}_n(\mc{A}, D) \coloneqq  \mc{L}_n(\mc{A}, D) - \mc{L}_n(\mc{A}^*, D). 
\label{eq:regret}
\end{equation}


%
\tbf{Notations.\footnote{We list all the notations used throughout the paper in Table~\ref{tab:table_of_notation} in Appendix~\ref{appendix:table_of_symbols}.}} 
For $0<\eta<1/2$, we define the $\eta$-interior of $(l-1)$-dimensional simplex $\Delta_l$, as $\Delta_l^{(\eta)} \coloneqq \{P \in \Delta_l \mid \eta \leq p_j \leq 1-\eta, \; \forall j\in[l]\}$. 
%
%
We use the Bernoulli random variable $Z_{ij}^{(s)}$ to represent the indicator that the $s^{th}$ draw from arm $i$ is equal to $x_j \in \mc{X}$. Note that for any draw $s$, we have $\mbb{E}[Z_{ij}^{(s)}]=p_{ij}$. For any $t \in [n]$, we define $W_{ij,t} = \sum_{s=1}^t \tilde{Z}_{ij}^{(s)}$, where $\tilde{Z}_{ij}^{(s)} \coloneqq Z_{ij}^{(s)} - p_{ij}$ is a centered Bernoulli variable. 
%
%
We also note that several terms such as $\f$, $A$, $B$, and $\tilde{e}_n$ (to be introduced in Sec.~\ref{sec:general_tracking}) are overloaded for different distance measures. For instance, we use $\f$ for both $\ell_1$ and KL-divergence, instead of writing $\f^{(\ell_1)}$ and $\f^{(\text{KL})}$. The meaning should be clear from the local context.

\section{General Tracking Problem}
\label{sec:general_tracking}

Before proceeding to the analysis of specific distance measures, we first study an abstract class of problems similar to~\eqref{eq:objective1}, in which the objective functions satisfy certain regularity conditions that we define next. 

\begin{definition}
\label{def:regular}
We say a  function $\f: \mbb{R}\times \mbb{R} \mapsto \mbb{R}$ is {\bf\em regular}, if it has the following properties: {\bf 1)} $\f(\cdot, T)$ is concave and non-decreasing for all $T \in \mbb{R}$, {\bf 2)} $\f(c,\cdot)$ is convex and non-increasing for all $c \in \mbb{R}$, and {\bf 3)} $\f(c, \cdot)$ and $\f(\cdot, T)$ are differentiable for all $c, T \in \mbb{R}$. We denote by $\mathcal{F}$, the class of such functions.
\end{definition}
%

An analog of the optimization problem~\eqref{eq:objective1} when its objective function belongs to $\mc{F}$ can be stated as 

\begin{equation}
\label{eq:tracking1}
\underset{T_1,\ldots,T_K}{\text{ min}} \;\max_{i\in[K]} \; \f(c_i, T_i), \quad\; \text{s.t. } \;\; \sum_{i=1}^K T_i = n, 
\end{equation}

where the parameters $(c_i)_{i=1}^K$ depend on the distance $D$ and distributions $(P_i)_{i=1}^K$. We refer to the solution of~\eqref{eq:tracking1} with full knowledge of $(c_i)_{i=1}^K$ as $(\Tins)_{i=1}^K$, and to the corresponding allocation scheme as $\Algapprox$. Similar to~\eqref{eq:objective1}, when parameters $(c_i)_{i=1}^K$ are unknown, we refer to~\eqref{eq:tracking1} as a \emph{general tracking problem}.  

\tbf{Optimistic Tracking Algorithm.}
We now propose and analyze an adaptive sampling scheme, motivated by the upper-confidence bound (UCB) algorithm~\citep{auer2002finite} in multi-armed bandits, for solving the general tracking problem~\eqref{eq:tracking1}. 
The proposed scheme, whose pseudo-code is shown in Algorithm~\ref{algo:optimistic_tracking1}, samples \emph{optimistically} by plugging in high probability upper-bounds of $c_i$ in the objective function $\f$. Formally, for each arm $i\in[K]$ and time $t\in[n]$, we denote by $T_{i,t}$, the number of times that arm $i$ has been pulled prior to time $t$. We define the $(1-\delta)$-probability (high probability) event $\mc{E}\coloneqq\bigcap_{t\in[n]}\bigcap_{i\in[K]}\big\{|\hat{c}_{i,t} - c_i|\leq e_{i,t})\big\}$, where $\hat{c}_{i,t}$ is the empirical estimate of $c_i$ and $e_{i,t}$ is the radius (half of the length) of its confidence interval at time $t$ computed using $T_{i,t}$ samples. 
We define the upper-bound of $c_i$ at time $t$ as $u_{i,t} \coloneqq \hat{c}_{i,t} + e_{i,t}$ with the convention that $u_{i,1} = + \infty$. In the rest of the paper, we use $\hat{P}_{i,t}$ and $\hat{p}_{ij,t}$ to represent the estimates of $P_i$ and $p_{ij}$ at time $t$, computed by $T_{i,t}$ i.i.d.~samples.


\begin{algorithm}[H]
\caption{Optimistic Tracking Algorithm}
\label{algo:optimistic_tracking1}
\begin{algorithmic}[1]
\STATE {\bf Input:} {$K,n,\delta$}
\STATE Initialize $t \leftarrow 1$;
\WHILE{$t \leq n$}
\IF{$t\leq K$}
\STATE $I_t = t$;
\ELSE 
\STATE $I_t = \argmax_{i\in[K]} \; \f(u_{i,t}, T_{i,t})$; \label{line:arm-select-rule}
\ENDIF
\STATE Observe $X\sim P_{I_t}$; $\qquad$ $t \leftarrow t+1$;
\STATE Update $u_{i,t},\;\forall i\in[K]$;
\ENDWHILE
\end{algorithmic}
\end{algorithm}

We now state a lemma that bounds the deviation of the allocation obtained by Algorithm~\ref{algo:optimistic_tracking1} (our optimistic tracking algorithm), $(T_i)_{i=1}^K$, from the allocation $(\Tins)_{i=1}^K$, i.e.,~the solution to~\eqref{eq:tracking1} when the parameters $(c_i)_{i=1}^K$ are known. Before stating the lemma, we define $g_i^* \coloneqq \frac{\partial \f(c, \tilde{T}_i^*)}{\partial c} \big \rvert_{c=c_i}$ and $h_i^* \coloneqq \frac{\partial \f(c_i, T)}{\partial T} \big \rvert_{T=\tilde{T}_i^*}$. 

\begin{lemma}
\label{lemma:tracking1}
Define $A \coloneqq \max_{i\in[K]} g_i^*$, $B \coloneqq \left \lvert \max_{i\in[K]} h_i^* \right \rvert$, and $\tilde{e}_n \coloneqq \max_{i\in[K]} e^*_i$, where $e^*_i$ is the radius of the confidence interval of arm $i$ after $\Tins$ pulls. Then, under the event $\mc{E}$, and assuming that $B>0$ and $\Tins >1,\;\forall i\in[K]$, we have 
\begin{align*}
 - \frac{2A \tilde{e}_n }{B} \leq  \; T_{i} - \Tins \; \leq   \frac{2 A (K-1) \tilde{e}_n}{B}, \quad \forall i \in [K]. 
\end{align*}
\end{lemma}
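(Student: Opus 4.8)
The plan is to reduce the two-sided estimate to a single one-sided (under-sampling) bound, and then to prove that bound by linearizing $\f$ around the oracle point $(c_i,\Tins)$, using the concavity of $\f$ in its first argument on the over-sampled arm and the convexity of $\f$ in its second argument on the under-sampled arm. First I would record the conservation identity: since both the algorithm and the oracle allocate exactly $n$ samples, $\sum_{i\in[K]}T_i=\sum_{i\in[K]}\Tins=n$, hence $\sum_{i\in[K]}(T_i-\Tins)=0$. Consequently, if I establish the lower bound $T_i-\Tins\ge-2A\tilde{e}_n/B$ for every $i$, then for each fixed $i$, $T_i-\Tins=-\sum_{j\neq i}(T_j-\Tins[j])\le (K-1)\cdot 2A\tilde{e}_n/B$, which is exactly the claimed upper bound. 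Thus the whole lemma follows from the under-sampling bound, and that is where all the work lies. I would also abbreviate $\gamma^*\coloneqq\f(c_i,\Tins)$, the common oracle value (equal across $i$ since the oracle equalizes the objective).

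For the core estimate, fix $i$ and assume $T_i<\Tins$ (otherwise the lower bound is trivial). By the conservation identity some arm $j$ is over-sampled, $T_j>\Tins[j]$; let $t$ be the last time $j$ is pulled, so $\Tit[j]=T_j-1\ge\Tins[j]$ (the integrality loss here is addressed below), and by the selection rule on line~\ref{line:arm-select-rule}, $\f(u_{j,t},\Tit[j])\ge\f(u_{i,t},\Tit)$. I would then bound each side by linearizing at the oracle. On the left, monotonicity of $\f(c,\cdot)$ with $\Tit[j]\ge\Tins[j]$ gives $\f(u_{j,t},\Tit[j])\le\f(u_{j,t},\Tins[j])$, and the first-order upper bound from concavity of $\f(\cdot,T)$, together with $u_{j,t}-c_j\le 2e_{j,t}$ (valid on $\mc{E}$) and $g_j^*\le A$, yields $\f(u_{j,t},\Tins[j])\le\gamma^*+g_j^*(u_{j,t}-c_j)\le\gamma^*+2A e_{j,t}\le\gamma^*+2A\tilde{e}_n$, where the last step uses that the confidence radius shrinks in the sample count so $e_{j,t}\le e_j^*\le\tilde{e}_n$. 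On the right, $u_{i,t}\ge c_i$ with monotonicity gives $\f(u_{i,t},\Tit)\ge\f(c_i,\Tit)$, and the first-order lower bound from convexity of $\f(c_i,\cdot)$ gives $\f(c_i,\Tit)\ge\gamma^*+h_i^*(\Tit-\Tins)$.

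Finally I would combine. Chaining the two estimates through the selection inequality cancels $\gamma^*$ and gives $h_i^*(\Tit-\Tins)\le 2A\tilde{e}_n$. Since $\f(c_i,\cdot)$ is non-increasing, $h_i^*\le 0$, so $h_i^*(\Tit-\Tins)=|h_i^*|(\Tins-\Tit)$; and because $B=|\max_k h_k^*|=\min_k|h_k^*|\le|h_i^*|$, I obtain $\Tins-\Tit\le 2A\tilde{e}_n/|h_i^*|\le 2A\tilde{e}_n/B$. Using $\Tit\le T_i$ then gives $\Tins-T_i\le 2A\tilde{e}_n/B$, which is the desired under-sampling bound and completes the proof.

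The main obstacle is the simultaneous linearization: applying concavity in $c$ on the over-sampled arm's side and convexity in $T$ on the under-sampled arm's side so that the confidence width enters \emph{only} through $2A\tilde{e}_n$ while the allocation gap enters \emph{only} through $|h_i^*|\ge B$. Getting this bookkeeping and the signs of $g_i^*$ and $h_i^*$ right is precisely what abstracts \cite[Thm.~1]{carpentier2011upper} to a general regular $\f$, and it relies on $B>0$ (to divide) and $\Tins>1$ (so the last-pull times are well defined). The secondary technical nuisance is integrality: the over-sampled arm only guarantees $\Tit[j]=T_j-1>\Tins[j]-1$ rather than $\ge\Tins[j]$, so the left-side tangent is anchored at $\Tins[j]-1$; this contributes only a single sample's worth of additive slack, which I would absorb by re-linearizing at $\Tins[j]-1$ or by folding it into the constant.
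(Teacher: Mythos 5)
Your proposal is correct and follows essentially the same route as the paper's proof: both hinge on the optimistic selection rule at the last pull of an over-sampled arm, bound that arm's optimistic value via monotonicity and concavity in $c$ by the oracle value plus $2A\tilde{e}_n$, lower-bound the under-sampled arm via convexity in $T$ through $h_i^*$, and then obtain the upper bound from $\sum_i T_i=\sum_i \Tins=n$. The integrality slack you flag ($T_j-1$ versus $\Tins[j]$ when $\Tins[j]$ is non-integer) is glossed over in exactly the same way in the paper's own argument, so it is not a point of divergence.
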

\noindent The proof of Lemma~\ref{lemma:tracking1}, given  in Appendix~\ref{proof_ell2_lemma1},  generalizes the arguments used in~\cite[Thm.~1]{carpentier2011upper} to  handle any regular objective function $\f$. 

The above algorithm and lemma provide us with a road-map to design adaptive sampling algorithms for the tracking problem~\eqref{eq:objective1} for different choices of distribution distance $D$. The first step is to convert~\eqref{eq:objective1} to the corresponding general tracking problem~\eqref{eq:tracking1} by selecting the appropriate regular objective function $\f$. When the objective functions $(\gamma_i)_{i=1}^K$ in~\eqref{eq:objective1} can be computed in closed-form and belong to $\mc{F}$, which is the case for the $\ell_2^2$-distance, we set $\f(c_i,T_i)=\gamma_i(T_i)$. When the objective functions $(\gamma_i)_{i=1}^K$ can be computed in closed-form, but do not belong to $\mc{F}$, which is the case for $\ell_1$-distance, or when they cannot be computed in closed-form, which is the case for the $f$-divergence and the separation distance, we set $\f(c_i, T_i)$ to a good approximation of $\gamma_i(T_i)$. In case $\f$ is an approximate objective function, we shall refer to the solution of~\eqref{eq:tracking1}, $(\Tins)_{i=1}^K$, as the \emph{approx-oracle} allocation, and to $\Algapprox$ as the \emph{approx-oracle} allocation rule. 

Once an appropriate $\f(c_i,T_i)$ for a distance $D$ has been derived, the next step is to construct a high probability upper-bound $u_{i,t}$ for the parameter $c_i$. We can then instantiate Alg.~\ref{algo:optimistic_tracking1} for distance $D$ with the selected $\f$ and constructed upper-bound $u_{i,t}$. Finally, Lemma~\ref{lemma:tracking1} plays a key role in the regret analysis of the allocation scheme resulted from each instantiation of Alg.~\ref{algo:optimistic_tracking1}.

\section{Adaptive Allocation Algorithms}
\label{sec:algorithms}

Following the discussion at the end of Section~\ref{sec:general_tracking}, we now construct and analyze adaptive allocation schemes for the four distance measures considered in this paper.  

\subsection{Adaptive Allocation for  $\ell_2^2$-Distance}
\label{subsec:ell_2}
The squared $\ell_2$-distance between two distributions $P$ and $Q$ is defined as $\Dltwo(P,Q) \coloneqq \sum_{j=1}^l(p_j - q_j)^2$. In this case, we can compute the {\bf objective function} of~\eqref{eq:objective1} in closed-form as 
\begin{align}
\gamma_i(T_i) &= \mbb{E}\big[\Dltwo(\hat{P}_i , P_i)\big] =  \mbb{E}\big[\sum_{j=1}^l (\hat{p}_{ij} - p_{ij})^2\big] \nonumber \\
&= \sum_{j=1}^l \frac{p_{ij}(1-p_{ij})}{T_i} \coloneqq  \frac{c_i^{(\ell_2)}}{T_i} \coloneqq  \f(c_i^{(\ell_2)}, T_i). \nonumber
\end{align}

Note that the function $\gamma_i(T_i) = \f(\ciltwo, T_i) = \ciltwo/T_i$ belongs to the class of regular functions $\mc{F}$. The {\bf oracle allocation} is obtained by equalizing $\ciltwo/T_i$, for all $i\in[K]$, and can be written as $T_i^* = \Tins = (\ciltwo n)/(\sum_{k=1}^K c_k^{(\ell_2)})  \coloneqq \lambda_i^{(\ell_2)}\times n$. We denote by $\lambda_{\min}^{(\ell_2)}$ and $\lambda_{\max}^{(\ell_2)}$, the min and max of $\lambda_i^{(\ell_2)}$ over the arms.


The {\bf adaptive allocation scheme} uses optimistic estimates of the $c_i^{(\ell_2)}$ terms to define the arm-selection rule in Algorithm~\ref{algo:optimistic_tracking1} (Line~\ref{line:arm-select-rule}). More specifically, for a given $\delta>0$, we first define the following event.

\begin{lemma}
\label{lemma:ell2_1}
Define \begin{small}$\delta_t \coloneqq 6 \delta /(K l \pi^2 t^2)$\end{small}, \begin{small}$e^{(\ell_2)}_{ij, t} \coloneqq \sqrt{3 p_{ij} \log (2/\delta_t)/\Tit}$\end{small}, and the event 
\[\mathcal{E}_1 \coloneqq  \bigcap_{t\in[n]}  \bigcap_{i\in[K]} \bigcap_{j\in[l]}\{|\hat{p}_{ij, t} - p_{ij}| \leq e^{(\ell_2)}_{ij, t}\}.\] 
 Then, we have \begin{small}$\mathbb{P}(\mc{E}_1) \geq 1- \delta$\end{small}.
\end{lemma}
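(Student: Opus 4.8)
The plan is to bound $\mbb{P}(\mc{E}_1^c)$ by a union bound and reduce each term to a \emph{fixed}-sample-size concentration inequality on an i.i.d.\ Bernoulli stream. Writing the centered empirical error as $\hat p_{ij,t} - p_{ij} = W_{ij,T_{i,t}}/T_{i,t}$, where $W_{ij,m} = \sum_{s=1}^m \tilde Z_{ij}^{(s)}$ is a sum of centered i.i.d.\ Bernoulli increments, I would first decompose
\[
\mbb{P}(\mc{E}_1^c) \le \sum_{i\in[K]}\sum_{j\in[l]} \mbb{P}\lp \exists\, t\in[n]:\ |\hat p_{ij,t} - p_{ij}| > e^{(\ell_2)}_{ij,t}\rp,
\]
so that it suffices to control, for each fixed pair $(i,j)$, the probability that the time-indexed confidence bound is \emph{ever} violated.

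The main obstacle is that the sample count $T_{i,t}$ is random and data-dependent, so one cannot directly treat $\hat p_{ij,t}$ as an average of a deterministic number of i.i.d.\ samples, nor condition on $\{T_{i,t}=m\}$ without biasing the increments. To sidestep this, I would exploit the monotonicity of $\delta_t = 6\delta/(Kl\pi^2 t^2)$: on the event $\{T_{i,t}=m\}$ at least $m$ pulls of arm $i$ have occurred before time $t$, so $t > m$, hence $\delta_t \le \delta_m$ and therefore $e^{(\ell_2)}_{ij,t} \ge \tilde e_{ij,m} := \sqrt{3 p_{ij}\log(2/\delta_m)/m}$. Since a larger radius is harder to exceed, this yields the set inclusion
\[
\bigcup_{t\in[n]}\{|\hat p_{ij,t} - p_{ij}| > e^{(\ell_2)}_{ij,t}\} \ \subseteq\ \bigcup_{m\ge 1}\{|W_{ij,m}/m| > \tilde e_{ij,m}\},
\]
which replaces the union over global time (and the awkward random $T_{i,t}$) by a union over \emph{deterministic} pull counts $m$, each event involving only the first $m$ terms of arm $i$'s own i.i.d.\ stream.

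It then remains to show $\mbb{P}(|W_{ij,m}/m| > \tilde e_{ij,m}) \le \delta_m$ for each fixed $m$. I would obtain this from a multiplicative Chernoff bound for sums of i.i.d.\ Bernoulli variables, using the variance bound $p_{ij}(1-p_{ij}) \le p_{ij}$; the constant $3$ and the $\sqrt{p_{ij}}$ scaling in $\tilde e_{ij,m}$ are chosen precisely so that, with relative deviation $\epsilon = \tilde e_{ij,m}/p_{ij}$, the upper-tail exponent $m p_{ij}\epsilon^2/3 = \log(2/\delta_m)$ produces a one-sided tail $\le \delta_m/2$ (and the lower tail, with its better constant $1/2$, is even smaller), so that the two-sided bound is $\le \delta_m$. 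Summing over $m$ using $\sum_{m\ge 1} m^{-2} = \pi^2/6$ gives $\sum_{m\ge1}\delta_m = \delta/(Kl)$ for each $(i,j)$, and a final union bound over the $Kl$ pairs yields $\mbb{P}(\mc{E}_1^c)\le \delta$, i.e.\ $\mbb{P}(\mc{E}_1)\ge 1-\delta$.

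The only place requiring genuine care is verifying the exact constant in the tail bound so that the factor $3$ suffices across all regimes of $m$ (in particular checking that the small-$m$/large-deviation regime is handled by the relative Chernoff constant rather than a plain additive Bernstein bound, whose additive denominator term can erode the $\log(2/\delta_m)$ factor). By contrast, the within-arm i.i.d.\ structure together with the monotonicity reduction absorbs all of the adaptivity, so no martingale or stopping-time concentration is needed.
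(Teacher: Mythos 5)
Your proof is correct, and its core ingredients coincide with the paper's: a union bound over $(i,j)$ and a time-like index with the $\sum_t t^{-2}=\pi^2/6$ accounting, plus the multiplicative (relative) Chernoff bound with the constant $3$ tuned so that the exponent equals $\log(2/\delta)$ and each two-sided tail is at most the allotted $\delta$-mass. Where you genuinely differ is in how the random pull count $T_{i,t}$ is neutralized. The paper keeps the union over global time $t$ and, for each fixed $t$, partitions over the events $\{T_{i,t}=s\}$, bounding $\mbb{P}\big(|\hat p_{ij,t}-p_{ij}|>e^{(\ell_2)}_{ij,t},\ T_{i,t}=s\big)$ by $\mbb{P}(T_{i,t}=s)$ times the fixed-$s$ Chernoff tail; this is the standard bandit shortcut, but as written it implicitly treats the deviation of the first $s$ draws as independent of the adaptive event $\{T_{i,t}=s\}$ --- precisely the subtlety you flag. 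You instead use the monotonicity $\delta_t\le\delta_m$ for $t>m$ to fold the union over $t$ into a union over deterministic pull counts $m$, so that each residual event concerns only the first $m$ terms of arm $i$'s pre-generated i.i.d.\ stream and the fixed-$m$ bound applies with no conditioning on adaptive quantities. The two routes produce identical constants and the identical final budget ($\sum_m\delta_m=\delta/(Kl)$ per pair in yours, versus $\sum_{t,i,j}\delta_t=\delta$ in the paper's), but your reduction is the more airtight of the two, while the paper's buys only notational brevity; the one residual edge case both treatments gloss over is $T_{i,t}=0$, which is harmless given the algorithm's forced initialization and the convention $u_{i,1}=+\infty$.
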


Lemma~\ref{lemma:ell2_1}, proved in Appendix~\ref{appendix:analysis_ell2}, follows from an application of the relative Chernoff bound~\citep{hagerup1990guided}.  Next, we use Lemma~\ref{lemma:ell2_1} to obtain the required confidence intervals for the parameter $\ciltwo$. 

\begin{lemma}
\label{lemma:ell2_3}
With probability at least $1-\delta$, for all $t\in[n]$ and $i\in[K]$, we have
$  |c_i^{(\ell_2)} - \hatciltwo |     \leq \sqrt{ 27 \log(1/\delta_t)/\Tit} \coloneqq e^{(\ell_2)}_{i,t}.$
\end{lemma}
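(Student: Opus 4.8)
The plan is to prove the bound \emph{deterministically} on the high-probability event $\mathcal{E}_1$ supplied by Lemma~\ref{lemma:ell2_1}, which already holds with probability at least $1-\delta$. Since $\mathcal{E}_1$ is defined as an intersection over all $t\in[n]$, $i\in[K]$, and $j\in[l]$, the per-coordinate bounds $|\hat{p}_{ij,t}-p_{ij}|\le\eijtltwo$ hold simultaneously for everything we need, so no further union bound is required beyond what Lemma~\ref{lemma:ell2_1} pays for. Thus the entire task reduces to showing that, on $\mathcal{E}_1$, the coordinate errors propagate into the claimed radius $\eitltwo=\sqrt{27\log(1/\delta_t)/\Tit}$ for $|\ciltwo-\hatciltwo|$.

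The crux is to choose the right algebraic form before bounding. Using $f(p)=p(1-p)$ directly gives $|f(p_{ij})-f(\hat{p}_{ij,t})|=|p_{ij}-\hat{p}_{ij,t}|\,|1-p_{ij}-\hat{p}_{ij,t}|\le|p_{ij}-\hat{p}_{ij,t}|\le\eijtltwo$, and summing over $j$ introduces a factor $\sum_j\sqrt{p_{ij}}\le\sqrt{l}$ (Cauchy--Schwarz), which is far too lossy for the claimed dimension-free radius. Instead I would use the quadratic form $\ciltwo=1-\sum_{j=1}^l p_{ij}^2$ and $\hatciltwo=1-\sum_{j=1}^l\hat{p}_{ij,t}^2$, so that
\[
\bigl|\ciltwo-\hatciltwo\bigr|=\Bigl|\sum_{j=1}^l(\hat{p}_{ij,t}^2-p_{ij}^2)\Bigr|\le\sum_{j=1}^l|\hat{p}_{ij,t}-p_{ij}|\,(\hat{p}_{ij,t}+p_{ij}).
\]
The extra factor $(\hat{p}_{ij,t}+p_{ij})$ is exactly what rescues the argument: each summand now carries at least one full power of $p_{ij}$, so the sum can be collapsed using $\sum_j p_{ij}=1$ rather than $\sum_j\sqrt{p_{ij}}$.

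Concretely, writing $\beta\coloneqq\sqrt{3\log(2/\delta_t)/\Tit}$ so that $\eijtltwo=\beta\sqrt{p_{ij}}$, on $\mathcal{E}_1$ we have $\hat{p}_{ij,t}+p_{ij}\le 2p_{ij}+\beta\sqrt{p_{ij}}$, hence each summand is at most $\beta\sqrt{p_{ij}}\,(2p_{ij}+\beta\sqrt{p_{ij}})=2\beta p_{ij}^{3/2}+\beta^2 p_{ij}$. Summing and using $\sum_j p_{ij}^{3/2}\le\sum_j p_{ij}=1$ yields $|\ciltwo-\hatciltwo|\le 2\beta+\beta^2$. When the interval is informative, i.e. $\beta\le 1$, this is at most $3\beta=\sqrt{27\log(2/\delta_t)/\Tit}$, matching $\eitltwo$ up to the harmless replacement of $\log(2/\delta_t)$ by $\log(1/\delta_t)$ (the two differ only by the additive $\log 2$, negligible for the small $\delta_t$ in use and absorbable into the constant). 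In the complementary regime $\beta>1$, one checks from $\Tit<3\log(2/\delta_t)$ and $\log(1/\delta_t)>1$ that the claimed radius already exceeds $1$, while $\ciltwo,\hatciltwo\in[0,1)$ force $|\ciltwo-\hatciltwo|<1$, so the bound is trivial there.

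The main obstacle is precisely the coordinate-sum bookkeeping flagged above: a naive term-by-term bound on $p(1-p)$ produces an unwanted $\sqrt{l}$ factor, and the fix is to work with $1-\sum_j p_{ij}^2$ so that the factor $(\hat{p}_{ij,t}+p_{ij})$ supplies the missing power of $p_{ij}$. Everything else — discarding $\beta^2$ against $2\beta$, splitting into the informative/uninformative regimes, and reconciling the two logarithmic arguments — is routine constant-chasing that the generous constant $27$ (as opposed to $9=3\cdot 3$) is designed to absorb.
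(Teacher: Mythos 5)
Your proof is correct and follows essentially the same route as the paper's: both reduce to bounding $\sum_{j}(\hat{p}_{ij,t}^2 - p_{ij}^2)$ on the event $\mathcal{E}_1$, collapse the cross term using $\sum_j p_{ij}^{3/2}\le\sum_j p_{ij}=1$, and dispose of the quadratic term by noting the bound is trivial once the radius exceeds $1$. The $\log(2/\delta_t)$ versus $\log(1/\delta_t)$ mismatch you flag is already present in the paper's own proof (which silently drops the $2$ when squaring $e^{(\ell_2)}_{ij,t}$), so your handling of it is if anything more careful.
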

Lemma~\ref{lemma:ell2_3} allows us to define high probability upper-bounds on the parameters $c_i^{(\ell_2)}$ as $u_{i,t}^{(\ell_2)} \coloneqq \hatciltwo + e^{(\ell_2)}_{i,t}$. These upper-bounds can then be plugged into Algorithm~\ref{algo:optimistic_tracking1} to obtain an adaptive sampling scheme for the $\ell_2$-distance, which we shall refer to it as $\Algltwo$.

We can now state the bound on the regret incurred by the allocation scheme $\Algltwo$~(proof in Appendix~\ref{appendix:analysis_ell2}).
\begin{theorem}
\label{theorem:ell2_regret}
Define 
$\Mltwo \coloneqq \frac{\lambda_{\max}^{(\ell_2)} \sqrt{2\log(1/\delta_n)}}{\lambda_{\min}^{(\ell_2)}\sum_{i=1}^K c_i^{(\ell_2)}}$ with $\delta_n$ from Lemma~\ref{lemma:ell2_1}. If we implement the algorithm $\Algltwo$ with a budget $n$ and $\delta=n^{-5/2}$, then for $n\geq 4\Mltwo^2/(\lambda_{\min}^{(\ell_2)})^2$, we have 
\begin{align*}
\mc{R}_n \lp \Algltwo , \Dltwo \rp & = 
\frac{(K+5)\Mltwo l}{(\lambda_{\min}^{(\ell_2)})^2 \; n^{3/2}} + \mc{O}\big(\frac{\delta l }{\sqrt{n}} + \frac{\log(n)}{n^2}\big)\;. 
\end{align*}
\end{theorem}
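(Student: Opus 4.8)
The plan is to compare the per-arm risk of $\Algltwo$ against the common oracle value, controlling the gap through the allocation bound of Lemma~\ref{lemma:tracking1}. Since $\f(\ciltwo,T)=\ciltwo/T$ and $\Tins=\lambdailtwo n$ with $\lambdailtwo=\ciltwo/\sum_{k}\ciltwo[k]$, the oracle risk equalizes across arms: $\mc{L}_n(\mc{A}^*,\Dltwo)=\ciltwo/\Tins=\big(\sum_{k}\ciltwo[k]\big)/n$ for every $i$. Writing $\hat{p}_{ij}-p_{ij}=W_{ij,T_i}/T_i$, the adaptive per-arm risk becomes
\[
\mbb{E}\big[\Dltwo(\hat{P}_i,P_i)\big]=\mbb{E}\Big[\tfrac{1}{T_i^{2}}\textstyle\sum_{j=1}^{l}W_{ij,T_i}^{2}\Big],
\]
so the whole argument reduces to controlling this expectation of a random sum evaluated at the data-dependent count $T_i$.

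The key step is to split on the confidence event $\mc{E}$ of Lemma~\ref{lemma:ell2_3}, which holds with probability at least $1-\delta$ and on which Lemma~\ref{lemma:tracking1} applies. Setting $\epsilon\coloneqq 2A\tilde{e}_n/B$, that lemma sandwiches $\Tins-\epsilon\leq T_i\leq \Tins+(K-1)\epsilon$ on $\mc{E}$, so I would bound $1/T_i^{2}\leq 1/(\Tins-\epsilon)^{2}$ there, drop the nonnegative indicator, and evaluate $\mbb{E}\big[\sum_{j}W_{ij,T_i}^{2}\big]$. Because $T_i\leq n$ is a stopping time for arm $i$'s i.i.d.~draws (whether arm $i$ receives its $s$-th pull is independent of $\tilde{Z}_{ij}^{(s)},\tilde{Z}_{ij}^{(s+1)},\dots$), Wald's second equation~\citep[Thm.~4.8.6]{durrett2019probability} gives $\mbb{E}\big[\sum_{j}W_{ij,T_i}^{2}\big]=\mbb{E}[T_i]\,\ciltwo$, and the same sandwich (together with $T_i\leq n$ off $\mc{E}$) yields $\mbb{E}[T_i]\leq\Tins+(K-1)\epsilon+n\delta$. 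On $\mc{E}^{c}$ I would bound $\Dltwo$ coordinate-wise, using $(\hat{p}_{ij}-p_{ij})^{2}\leq 1$ and $\mbb{P}(\mc{E}^{c})\leq\delta$, which is the source of the $\mc{O}(\delta l/\sqrt{n})$ term.

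Combining these gives $\mbb{E}[\Dltwo(\hat{P}_i,P_i)]\leq\ciltwo\big(\Tins+(K-1)\epsilon+n\delta\big)/(\Tins-\epsilon)^{2}+(\text{low order})$. Subtracting the oracle value $\ciltwo/\Tins$ collapses the numerator to $\big((K+1)\epsilon+n\delta\big)\Tins-\epsilon^{2}$, so the per-arm regret is of order $\ciltwo\epsilon/(\Tins)^{2}$; the hypothesis $n\geq 4\Mltwo^{2}/(\lambdailtwo[\min])^{2}$ is exactly what forces $\epsilon\leq\Tins/2$, keeping $(\Tins-\epsilon)^{2}$ comparable to $(\Tins)^{2}$ so that this first-order expansion is valid and the remaining higher-order terms are $\mc{O}(\log(n)/n^{2})$. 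Substituting the $\ell_2$ instances $A=1/(\lambdailtwo[\min]n)$, $B=\big(\sum_{k}\ciltwo[k]\big)/(\lambdailtwo[\max]n^{2})$, and $\tilde{e}_n=\Theta\big(\sqrt{\log(1/\delta_n)/(\lambdailtwo[\min]n)}\big)$ shows $\ciltwo\epsilon/(\Tins)^{2}=\Theta\big(\Mltwo/n^{3/2}\big)$ after maximizing over $i$ (the maximum is attained at the arm with smallest $\lambdailtwo$), and carrying the confidence-width constants and the coordinate-wise control of $\mc{E}^{c}$ through the algebra produces the stated prefactor $(K+5)\Mltwo l/(\lambdailtwo[\min])^{2}$. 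The main obstacle throughout is the coupling between the random count $T_i$ and the very samples it indexes: naive conditioning is invalid, and the crude concentration bound $\Dltwo\leq 3\log(2/\delta_n)/T_i$ available from $\mc{E}_1$ is too lossy, as it would inflate the regret to $\mc{O}(\log(n)/n)$; Wald's second equation is what restores the exact $\ciltwo/T_i$ scaling and hence the $n^{-3/2}$ rate.
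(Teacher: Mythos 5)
Your proposal is correct and follows essentially the same route as the paper's proof: split on the confidence event, apply Lemma~\ref{lemma:tracking1} to sandwich $T_i$ around $\Tins$, use Wald's second equation to evaluate $\mbb{E}\big[\sum_j W_{ij,T_i}^2\big] = \ciltwo\,\mbb{E}[T_i]$, expand $1/(\Tins-\epsilon)^2$ via convexity under the stated condition on $n$, and choose $\delta=n^{-5/2}$. The only differences are cosmetic (you carry the factor $2A\tilde e_n/B$ from Lemma~\ref{lemma:tracking1} where the paper's proof uses $A\tilde e_n/B$, and you attribute the $\delta l/\sqrt n$ term to the $\mc{E}^c$ branch rather than to the $n\delta$ contribution inside $\mbb{E}[T_i]$), neither of which affects the claimed rate.
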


The $\tilde{\mc{O}}(n^{-3/2})$ convergence rate of Theorem~\ref{theorem:ell2_regret} recovers the rate derived in~\citet{carpentier2011upper} for the special case of Bernoulli $P_i$.

\subsection{Adaptive Allocation for $\ell_1$-Distance}
\label{subsec: ell_1}
The $\ell_1$-distance between two distributions $P$ and $Q$ is defined as $\Dlone(P,Q) \coloneqq \sum_{j=1}^l |p_j - q_j|$. Note that the total-variation distance, $\Dtv$, is related to $\Dlone$ as $\Dtv= \frac{1}{2}\Dlone$. In this case, the objective function $\gamma_i$ can be obtained in closed-form using the expression for \emph{mean absolute deviation} $\mbb{E}[|\hat{p}_{ij} - p_{ij}|]$ given in~\citet[Eq.~1.1]{diaconis1991closed}. However, since this closed-form expression does not belong to $\mc{F}$, we first obtain an approximation of $\gamma_i$ in $\mathcal{F}$ as 
\begin{align}
& \gamma_i(T_i) =  \mbb{E}\big[ \Dlone(\hat{P}_i, P_i)\big] \coloneqq  \mbb{E}\big[\sum_{j=1}^l |\hat{p}_{ij} - p_{ij}|\big] \nonumber \\
&\stackrel{\text{(a)}}{\leq}  \sum_{j=1}^l \sqrt{\mbb{E}\big[(\hat{p}_{ij} - p_{ij})^2\big]} = \frac{1}{\sqrt{T_i}}\sum_{j=1}^l\sqrt{p_{ij}(1-p_{ij})} \nonumber \\ 
&\coloneqq  \frac{c_i^{(\ell_1)}}{\sqrt{T_i}} \coloneqq  \f(\cilone, T_i).
\label{eq:ell1_1}
\end{align} 

{\bf (a)} follows from the Jensen's inequality and the concavity of the square-root function. We can check that the {\bf approximate objective function} $\f(\cilone, T_i) = \cilone/\sqrt{T_i}$ with $\cilone = \sum_{j=1}^l \sqrt{p_{ij}(1-p_{ij})}$ lies in $\mc{F}$. 


The {\bf approx-oracle allocation}  is given by $\Tins = ((\cilone)^2 n)/\Clone^2 \coloneqq \lambda_i^{(\ell_1)}\times n$, where $\Clone^2=\sum_{i=1}^K ( \cilone)^2$.
In order to obtain the {\bf adaptive allocation scheme} for the $\ell_1$-distance, which we shall refer to as $\Alglone$, we now derive high probability upper-bounds on $(c_i^{(\ell_1)})_{i=1}^K$ and then plug them into Algorithm~\ref{algo:optimistic_tracking1}. 
\begin{lemma}
\label{lemma:ell1_1} 
Define $\delta_t \coloneqq 3\delta/(Kl\pi^2t^2)$, $e^{(\ell_1)}_{ij,t}\coloneqq\sqrt{2\log(2/\delta_t)/\Tit}$, and the event  $\mathcal{E}_2$ as follows:
\[\mathcal{E}_2 \coloneqq \bigcap_{t\in[n]}\bigcap_{i\in[K]}\bigcap_{j\in[l]}\big\{ |\sqrt{\hat{p}_{ij,t}(1-\hat{p}_{ij,t}) }-\sqrt{p_{ij}(1-p_{ij})}|\leq e^{(\ell_1)}_{ij,t}\big\}\]. 
%
%
Then, we have $\mathbb{P}(\mc{E}_2) \geq 1- \delta$.
\end{lemma}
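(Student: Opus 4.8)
The plan is to fix an arm $i$, symbol $j$, and time $t$, control the deviation of $g(\hat p_{ij,t})$ from $g(p_{ij})$ where $g(x)\coloneqq\sqrt{x(1-x)}$, and then assemble $\mbb{P}(\mc{E}_2)\ge 1-\delta$ by a union bound that mirrors Lemma~\ref{lemma:ell2_1}. The heart of the argument is a deterministic inequality that converts a bound on $|\hat p_{ij,t}-p_{ij}|$ into a bound on $|g(\hat p_{ij,t})-g(p_{ij})|$. Writing $a=\hat p_{ij,t}(1-\hat p_{ij,t})$ and $b=p_{ij}(1-p_{ij})$, I would use the identity $a-b=(\hat p_{ij,t}-p_{ij})(1-\hat p_{ij,t}-p_{ij})$ together with $|\sqrt a-\sqrt b|=|a-b|/(\sqrt a+\sqrt b)$ and the elementary bounds $|1-\hat p_{ij,t}-p_{ij}|\le 1$ and $\sqrt a+\sqrt b\ge\sqrt b$ to obtain the bulk estimate $|g(\hat p_{ij,t})-g(p_{ij})|\le |\hat p_{ij,t}-p_{ij}|/\sqrt{p_{ij}(1-p_{ij})}$. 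The same factorization, combined with $|\sqrt a-\sqrt b|\le\sqrt{|a-b|}$, yields the crude companion bound $|g(\hat p_{ij,t})-g(p_{ij})|\le\sqrt{|\hat p_{ij,t}-p_{ij}|}$, which I would reserve for the boundary regime.

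With the bulk estimate in hand, the deviation of the empirical standard deviation inherits a \emph{variance-adaptive} rate. Applying the relative Chernoff bound exactly as in Lemma~\ref{lemma:ell2_1} gives, on a $(1-\delta_t)$-probability event, a control of the form $|\hat p_{ij,t}-p_{ij}|\le\sqrt{c\,p_{ij}(1-p_{ij})\log(2/\delta_t)/\Tit}$; substituting this into the bulk estimate cancels the $\sqrt{p_{ij}(1-p_{ij})}$ factor and leaves a bound of order $\sqrt{\log(2/\delta_t)/\Tit}$, which is precisely the shape of $\eijtlone$. Matching the exact constant (here $2$) requires invoking the sharp sub-Gaussian/Bernstein constant for the Bernoulli tails rather than the looser relative-Chernoff constant, and by the symmetry $g(x)=g(1-x)$ it suffices to treat $p_{ij}\le 1/2$.

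The cancellation and the validity range of the relative Chernoff bound degrade when $p_{ij}$ lies within $\mathcal{O}(\log(2/\delta_t)/\Tit)$ of $\{0,1\}$; in this boundary regime I would instead invoke the crude companion bound, using a multiplicative (Poisson-type) tail to show that both $p_{ij}$ and $\hat p_{ij,t}$ are then of order $\log(2/\delta_t)/\Tit$, so that $\sqrt{|\hat p_{ij,t}-p_{ij}|}$ still falls within the target radius. Finally, since $\Tit$ is random and adaptive, I would handle it exactly as in Lemma~\ref{lemma:ell2_1}: take the per-$(t,i,j)$ failure probability to be $\delta_t$ and union bound, so that summing $\delta_t=3\delta/(Kl\pi^2t^2)$ over $t$ via $\sum_{t\ge 1}t^{-2}=\pi^2/6$ and over the $Kl$ pairs $(i,j)$ gives $\mbb{P}(\mc{E}_2^c)\le\delta$. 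The main obstacle is exactly the non-Lipschitz behaviour of $g$ at the endpoints: its derivative blows up, so a naive Hoeffding-plus-Lipschitz argument cannot reach the $p_{ij}$-independent radius $\sqrt{2\log(2/\delta_t)/\Tit}$ near $0$ and $1$, and reconciling the variance-adaptive bulk estimate with the crude boundary bound (and extracting the clean constant) is where the real work lies.
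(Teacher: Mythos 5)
Your route is genuinely different from the paper's, and it does not close. The paper's proof is a one-line application of a known concentration inequality for the empirical standard deviation~\citep[Thm.~10]{maurer2009empirical}: applied to the Bernoulli variables $Z_{ij}^{(s)}=\indi{X_i^{(s)}=j}$, whose variance is $p_{ij}(1-p_{ij})$, it directly yields $|\sqrt{\hat p_{ij,t}(1-\hat p_{ij,t})}-\sqrt{p_{ij}(1-p_{ij})}|\le\sqrt{2\log(2/\delta_t)/\Tit}$ with the $p_{ij}$-independent radius and the constant $2$ already in place; the only remaining work is the union bound over $(t,i,j)$ and over the values of the random $\Tit$, which you handle correctly. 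The key missing idea in your proposal is the existence of such a self-bounding result for the sample standard deviation: you instead try to manufacture it from a concentration bound on $\hat p_{ij,t}$ plus deterministic algebra on $g(x)=\sqrt{x(1-x)}$, and this is where the argument breaks down.

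Concretely, your bulk estimate $|g(\hat p_{ij,t})-g(p_{ij})|\le|\hat p_{ij,t}-p_{ij}|/\sqrt{p_{ij}(1-p_{ij})}$ needs to be fed a \emph{pure} variance-proportional deviation bound $|\hat p_{ij,t}-p_{ij}|\le\sqrt{2\,p_{ij}(1-p_{ij})\log(2/\delta_t)/\Tit}$ for the cancellation to produce the target radius with constant $2$; but Bernstein-type bounds for Bernoulli carry an additive term of order $\log(2/\delta_t)/\Tit$, which after division by $\sqrt{p_{ij}(1-p_{ij})}$ blows up as $p_{ij}\to 0$ and is not absorbed by the target radius. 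Your boundary regime does not rescue the constant either: for $p_{ij}$ of order $\log(2/\delta_t)/\Tit$, a multiplicative Chernoff bound gives $|\hat p_{ij,t}-p_{ij}|\le C\log(2/\delta_t)/\Tit$ with $C>2$ (e.g.\ $C\approx 1+\sqrt{3}$ from the bound used in Lemma~\ref{lemma:ell2_1}), so the crude estimate $\sqrt{|\hat p_{ij,t}-p_{ij}|}$ exceeds $\sqrt{2\log(2/\delta_t)/\Tit}$. You acknowledge both difficulties yourself ("extracting the clean constant is where the real work lies"), but that is precisely the content of the lemma: without resolving them you obtain at best the statement with an inflated constant, which would then change $\eijtlone$, the upper confidence bounds $\uilone$, and the constants in Theorem~\ref{theorem:ell1_regret}. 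To repair the proof along the paper's lines, replace the two-regime construction with a direct appeal to the standard-deviation concentration inequality.
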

\noindent The proof~(details in Appendix~\ref{proof_ell1_lemma1}) relies on an application of a concentration inequality of the standard deviation of random variables derived in~\citet[Thm.~10]{maurer2009empirical}, followed by two union bounds.
Lemma~\ref{lemma:ell1_1} allows us to define high probability upper-bounds on the parameters $c_i^{(\ell_1)}$ as $u_{i,t}^{(\ell_1)} \coloneqq \hatcilone + e^{(\ell_1)}_{i,t}$, where $e^{(\ell_1)}_{i,t} = \sum_{j=1}^l e^{(\ell_1)}_{ij,t} = \sqrt{2 l^2 \log(2/\delta_t)/\Tit}$.


We now state the regret bound for the adaptive allocation scheme $\Alglone$ (proof in Appendix~\ref{proof_ell1_theorem}). 

\begin{theorem}
\label{theorem:ell1_regret}
Define $\Mlone = \frac{2l\lambda_{\max}^{(\ell_1)}\sqrt{\log(1/\delta_n)}}{\Clone\lambda_{\min}^{(\ell_1)}}$ with $\delta_n$ from Lemma~\ref{lemma:ell1_1}. If we implement the algorithm $\Alglone$ with budget $n$ and $\delta=1/n$, then for $n \geq 6(K-1)\Mlone^2 /(\lambda_{\min}^{(\ell_1)})^3$, we have
\begin{equation}
\label{eq:regret-L1}
\mc{R}_n(\Alglone, \Dlone) = \frac{l\sqrt{(K+5)\Mlone}}{2\;\lambda_{\min}^{(\ell_1)}\;n^{3/4}} + \tilde{\mc{O}}(1/n).
\end{equation}
The exact expressions for the higher order terms are given in~\eqref{eq:ell1_regret1} in Appendix~\ref{proof_ell1_theorem}.  
\end{theorem}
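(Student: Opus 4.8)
\textbf{Proof proposal for Theorem~\ref{theorem:ell1_regret}.}

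The plan is to follow the same template that yields the $\ell_2^2$ regret bound in Theorem~\ref{theorem:ell2_regret}, but with the approximate objective $\f(\cilone, T_i) = \cilone/\sqrt{T_i}$ in place of the exact $\ell_2^2$ objective, and being careful to account for the approximation error incurred in~\eqref{eq:ell1_1}. First I would instantiate Lemma~\ref{lemma:tracking1} for this $\f$. This requires computing the partial derivatives at the approx-oracle allocation: $g_i^* = \partial_c \f(c,\Tins)|_{c=\cilone} = 1/\sqrt{\Tins}$ and $h_i^* = \partial_T \f(\cilone,T)|_{T=\Tins} = -\cilone/(2(\Tins)^{3/2})$. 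Substituting $\Tins = \lambdailone n$ gives explicit values for $A = \max_i g_i^*$ and $B = |\max_i h_i^*|$ in terms of $\lambdailone$, $\cilone$, and $n$; in particular $B$ scales like $n^{-3/2}$ and $A$ like $n^{-1/2}$, so the ratio $A/B$ in Lemma~\ref{lemma:tracking1} scales like $n$, which is what forces the final confidence-width factor to produce the $n^{-3/4}$ rate. The quantity $\tilde e_n = \max_i e_i^*$ is the confidence radius $e^{(\ell_1)}_{i,t}$ from Lemma~\ref{lemma:ell1_1} evaluated at $T_{i,t} = \Tins = \lambdailone n$, which is of order $\sqrt{l^2\log(1/\delta_n)/(\lambdailone n)}$, i.e.\ order $n^{-1/2}$ up to log factors.

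Next I would translate the per-arm pull-count deviation $|T_i - \Tins| \le 2A(K-1)\tilde e_n/B$ from Lemma~\ref{lemma:tracking1} into a bound on the risk gap. Since the risk is $\max_i \f(\cilone, T_i)$ and the approx-oracle equalizes these values, I would Taylor-expand $\f(\cilone, T_i)$ around $T_i = \Tins$ to first order, so that the risk excess is controlled by $|h_i^*| \cdot |T_i - \Tins|$. Plugging in the scalings of $A$, $B$, $\tilde e_n$, and $|h_i^*|$ collapses the explicit $n$-dependence: the product $|h_i^*|\cdot (A(K-1)\tilde e_n/B)$ carries a net $n^{-3/4}$ after combining $n^{-3/2}$ (from $h_i^*$), $n$ (from $A/B$), and $n^{-1/2}$ (from $\tilde e_n$) with the $\sqrt{\cdot}$ structure, and assembling the constants yields the leading coefficient $l\sqrt{(K+5)\Mlone}\,/(2\lambdailone[\min])$. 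The stated condition $n \ge 6(K-1)\Mlone^2/(\lambdailone[\min])^3$ should arise as the requirement that guarantees the empirical allocation stays close enough to the approx-oracle for the first-order expansion to dominate the second-order remainder (equivalently, that $T_i$ stays bounded away from zero and the convexity of $\f(c,\cdot)$ in $T$ is well-behaved).

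The main obstacle, and the point where this argument departs from the clean $\ell_2^2$ case, is handling the \emph{approximation gap} between the true objective $\gamma_i(T_i) = \mbb{E}[\Dlone(\hat P_i, P_i)]$ and the surrogate $\cilone/\sqrt{T_i}$ introduced by the Jensen step~(a). Because $\f$ is only an upper bound on $\gamma_i$, the regret w.r.t.\ the true oracle $\mc{A}^*$ must be split into the tracking error against the \emph{approx-oracle} $\Algapprox$ (controlled above) plus the bias between the approx-oracle and the true oracle. I would bound this bias using the closed-form mean-absolute-deviation expression from~\citet[Eq.~1.1]{diaconis1991closed}, showing that the Jensen slack is lower order (contributing to the $\tilde{\mc{O}}(1/n)$ remainder rather than the leading term). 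A secondary technical subtlety is that the confidence width $e^{(\ell_1)}_{i,t}$ bounds the deviation of $\sqrt{\hat p_{ij,t}(1-\hat p_{ij,t})}$ rather than of $p_{ij}$ directly, so I must sum these $l$ per-coordinate intervals to form the interval $e^{(\ell_1)}_{i,t} = \sqrt{2l^2\log(2/\delta_t)/\Tit}$ for $\cilone$ itself; this is where the extra factor of $l$ (versus the $\ell_2^2$ bound) enters the leading coefficient. Collecting the leading-order term and absorbing the Jensen bias, the failure-probability contribution (with $\delta = 1/n$), and the higher-order Taylor remainder into $\tilde{\mc{O}}(1/n)$ gives~\eqref{eq:regret-L1}.
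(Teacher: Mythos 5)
Your high-level architecture matches the paper's: instantiate Lemma~\ref{lemma:tracking1} with $g_i^* = 1/\sqrt{\Tins}$ and $h_i^* = -\cilone/\big(2(\Tins)^{3/2}\big)$, giving $A=\mc{O}(n^{-1/2})$, $B=\mc{O}(n^{-3/2})$, $\tilde{e}_n=\tilde{\mc{O}}(n^{-1/2})$ and hence $A\tilde{e}_n/B = \Mlone\sqrt{n}$, and then split the regret via Proposition~\ref{prop:approximate_regret} into a tracking term plus an approximation term. Your observations that the factor of $l$ enters through summing the $l$ per-coordinate confidence widths, and that the budget condition ensures the allocation stays in the regime where the oracle comparison is valid, are also consistent with the paper.

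The gap is in the mechanism you give for the leading $n^{-3/4}$ term. A first-order expansion of $\f(\cilone,\cdot)$ at $\Tins$ bounds the surrogate excess by $|h_i^*|\cdot|T_i-\Tins| \leq |h_i^*|\cdot 2(K-1)A\tilde{e}_n/B$, which is $\mc{O}(n^{-3/2})\cdot\mc{O}(\sqrt{n}) = \tilde{\mc{O}}(n^{-1})$; your appeal to ``the $\sqrt{\cdot}$ structure'' to upgrade this to $n^{-3/4}$ is not an argument, and this expansion is in any case not licensed, because $\f(\cilone,T_i)$ evaluated at the realized random $T_i$ is not the risk $\mbb{E}[\Dlone(\hat{P}_i,P_i)]$ (the Jensen step only relates them for deterministic $T_i$, while here $T_i$ is data-dependent and correlated with the samples). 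The paper's actual route is different: it bounds the true risk coordinate-wise via the mean-absolute-deviation/standard-deviation relation of \citet[Eq.~2.4]{blyth1980expected}, $\mbb{E}|\hat{p}_{ij}-p_{ij}| \leq \sqrt{(2/\pi)\,\mbb{E}[(\hat{p}_{ij}-p_{ij})^2]} + \mc{O}(n^{-3/2})$, evaluates the second moment under the random allocation using Wald's identity together with the deterministic bounds on $T_i$ from Lemma~\ref{lemma:tracking1} (obtaining $p_{ij}(1-p_{ij})/\Tins + E_{ij}$ with $E_{ij}=\mc{O}(\Mlone n^{-3/2})$), and only then applies $\sqrt{a+E_{ij}}\leq\sqrt{a}+\sqrt{E_{ij}}$, so that the leading correction is $\sum_j\sqrt{E_{ij}}=\mc{O}(\sqrt{\Mlone}\,n^{-3/4})$ with the stated coefficient. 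Your proposal contains neither the Wald step (needed to handle the random stopping of the sample sums) nor the square-root-of-the-perturbation step that actually generates the $n^{-3/4}$ rate. Finally, your claim that the Jensen slack is absorbed into $\tilde{\mc{O}}(1/n)$ needs care: $\f(\cilone,T)-\gamma_i(T)$ is a constant multiple of $n^{-1/2}$, and the paper sidesteps this by bounding the true risk directly through the Blyth relation (whose remainder is $\mc{O}(n^{-3/2})$) rather than through the Jensen upper bound.
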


Since here, as well as in Sections~\ref{subsec:f_divergence} and~\ref{subsec:separation_distance}, the objective function $\f$ used in~\eqref{eq:tracking1} is an approximation of the objective function in~\eqref{eq:objective1}, the regret of the resulting allocation scheme, $\mc{R}_n(\mc{A}_{\ell_1}, D_{\ell_1})$, can be decomposed into the regret w.r.t.~the approx-oracle allocation rule (regret in tracking the approximate objective), $\mc{L}_n(\mc{A}_{\ell_1},D_{\ell_1}) - \mc{L}_n(\mc{\tilde{A}}^*_{\ell_1},D_{\ell_1})$, plus a term that depends on the accuracy of the approximation, $\gamma_i(T_i)-\f(T_i)$, (see Proposition~\ref{prop:approximate_regret} in Appendix~\ref{appendix:approximate} for a formal statement and proof). 
\subsection{Adaptive Allocation for $f$-Divergence}
\label{subsec:f_divergence}
For a convex function $f: \mbb{R} \mapsto \mbb{R}$ satisfying $f(1) = 0$, the $f$-divergence between two distributions $P$ and $Q$ is defined as $\Dfdiv\lp P, Q\rp \coloneqq \sum_{j=1}^l q_j f(p_j/q_j)$. Since we cannot obtain a closed-form expression for the objective function $\gamma_i$ of $f$-divergence, we proceed by writing $\Dfdiv(\hat{P}_i, P_i) = \Dfdiv^{(r)}(\hat{P}_i, P_i) + R_{i, r+1}$, where $\Dfdiv^{(r)}(\hat{P}_i, P_i)$ is the $r$-term Taylor's approximation of $\Dfdiv(\hat{P}_i, P_i)$, i.e.,

\begin{equation}
\Dfdiv^{(r)}(\hat{P}_i, P_i) \coloneqq \sum_{m = 1}^r\frac{f^{(m)}(1)}{m!}\sum_{j=1}^l \frac{1}{p_{ij}^{m-1}}(\hat{p}_{ij} - p_{ij})^m,  
\label{eq:r_term_approx}
\end{equation}

and $R_{i,r+1}=\sum_{j=1}^l R_{ij, r+1}$ is its remainder term (approximation error), i.e.,

\begin{equation}
R_{ij, r+1} \coloneqq \sum_{m=r+1}^\infty \frac{f^{(m)}(1)}{m!\;p_{ij}^{m-1}}(\hat{p}_{ij} - p_{ij})^m.  
\label{eq:remainder} 
\end{equation}

Note that in~\eqref{eq:r_term_approx} and~\eqref{eq:remainder}, $f^{(m)}(\cdot)$ is the $m^{th}$ derivative of $f$. We now define the {\bf approximate objective function} for an $f$-divergence as 

\begin{align}
\label{eq:approx-obj-func-f-div}
\f_i\lp T_i\rp &\coloneqq \mbb{E}[ \Dfdiv^{(r)}(\hat{P}_i, P_i)] \nonumber \\ 
&= \sum_{m=1}^r \sum_{j=1}^l \frac{f^{(m)}(1)}{m!\; p_{ij}^{m-1}\;T_i^m} \mbb{E}\Big[\big(\sum_{s=1}^{T_i} \tilde{Z}_{ij}^{(s)}\big)^m\Big]. 
\end{align}

Here, we use $\f_i(T_i)$, instead of the usual $\varphi(c_i, T_i)$, to denote the \emph{regular} objective function, since deriving the exact parameters characterizing $\mbb{E}[D^{(r)}_f (\hat{P}_i, P_i )]$ requires knowledge of $f^{(m)}(1)$. We present an instance of this derivation for KL-divergence in Section~\ref{sub2sec:kl_divergence}. Also recall from Section~\ref{sec:problem_setup} that the term in the expectation in~\eqref{eq:approx-obj-func-f-div} can be written as $W_{ij,T_i} = \sum_{s=1}^{T_i} \tilde{Z}_{ij}^{(s)}$. 

Next, in Lemma~\ref{eq:lemma:f_div_remainder} (proof in Appendix~\ref{subsubsec:proof-lemma5}), we show that the error between the true and approximate objective functions, i.e.,~$\mbb{E}[\Dfdiv(\hat{P}_i,P_i)] - \f_i(T_i)$ is of $\mc{O}(T_i^{-(r+1)/2})$. For stating Lemma~\ref{eq:lemma:f_div_remainder}, we assume that the function $f$ satisfies two properties: \tbf{(f1)} it is bounded on any compact subset of its domain, and \tbf{(f2)} $f^{(m)}(1)/m! \leq C_1<\infty,\;\forall m\in\mbb{N}$. These assumptions hold for most commonly used $f$-divergences, namely KL-divergence with $f(x) = x\log x$, $\chi^2$-divergence with $f(x) = (x-1)^2$, and Hellinger distance with $f(x) = 2(1- \sqrt{x})$. 
%
\begin{lemma}
\label{eq:lemma:f_div_remainder}
Assume that $f$ satisfies \tbf{(f1)} and \tbf{(f2)}. 
Then, there exists a constant $C_{f, r+1}< \infty$, whose exact definition is given by Eqs.~\ref{eq:remainder1},~\ref{eq:remainder2}, and~\ref{eq:remainder4} in Appendix~\ref{subsubsec:proof-lemma5}, such that the following holds:

\begin{equation*}
\mbb{E} \lb R_{ij, r+1} \rb \leq C_{f, r+1} (p_{ij}T_i)^{-(r+1)/2}. 
\end{equation*}

\end{lemma}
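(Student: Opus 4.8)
The plan is to reduce the statement to a bound on the central moments of a fixed-size binomial sum, and then sum the resulting series. Here $T_i$ is deterministic, so $W_{ij,T_i}=\sum_{s=1}^{T_i}\tilde{Z}_{ij}^{(s)}$ is a centered binomial with variance $T_i p_{ij}(1-p_{ij})$ and $\hat{p}_{ij}-p_{ij}=W_{ij,T_i}/T_i$. First I would justify interchanging expectation and the infinite sum in the definition of $R_{ij,r+1}$, so that
\[
\mathbb{E}\lb R_{ij,r+1}\rb=\sum_{m=r+1}^{\infty}\frac{f^{(m)}(1)}{m!\,p_{ij}^{m-1}\,T_i^{m}}\,\mathbb{E}\lb W_{ij,T_i}^{m}\rb .
\]
Since the Taylor series of $f$ about $1$ only converges for $\hat{p}_{ij}/p_{ij}$ inside its radius of convergence, I would split the probability space according to whether $\hat{p}_{ij}/p_{ij}$ stays close to $1$: on the good event the series converges absolutely and the termwise expectation above is valid, while the complementary event carries exponentially small mass (by a relative Chernoff bound) and is controlled crudely using assumption \textbf{(f1)} to bound $f$ and its polynomial approximation. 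It suffices to bound $\lvert\mathbb{E}[R_{ij,r+1}]\rvert$, which dominates $\mathbb{E}[R_{ij,r+1}]$.

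The core estimate is a bound on the central moments. Using $\lvert\mathbb{E}[W_{ij,T_i}^{m}]\rvert\le\mathbb{E}\lvert W_{ij,T_i}\rvert^{m}$ together with a moment inequality for sums of i.i.d.\ bounded centered variables, I would establish a bound of the form $\mathbb{E}\lvert W_{ij,T_i}\rvert^{m}\le K_m\,(T_i p_{ij})^{m/2}$ (absorbing the harmless factor $(1-p_{ij})^{m/2}\le 1$), where the constants $K_m$ have controlled growth. Substituting this and the Taylor-coefficient bound $\lvert f^{(m)}(1)\rvert/m!\le C_1$ from \textbf{(f2)}, the generic term satisfies
\[
\frac{\lvert f^{(m)}(1)\rvert}{m!\,p_{ij}^{m-1}\,T_i^{m}}\,\mathbb{E}\lvert W_{ij,T_i}\rvert^{m}\;\le\;C_1\,K_m\,p_{ij}\,(p_{ij}T_i)^{-m/2}.
\]
In particular the leading term $m=r+1$ contributes at most $C_1 K_{r+1}\,p_{ij}\,(p_{ij}T_i)^{-(r+1)/2}\le C_1 K_{r+1}(p_{ij}T_i)^{-(r+1)/2}$, which already has the desired form.

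It then remains to show the tail $\sum_{m>r+1}$ is dominated by this leading term. Factoring out $(p_{ij}T_i)^{-(r+1)/2}$, the tail reduces to $\sum_{m>r+1}K_m(p_{ij}T_i)^{-(m-r-1)/2}$, which converges geometrically provided $p_{ij}T_i$ is large enough relative to the growth of $K_m$. I expect this to be the main obstacle: the naive sub-Gaussian moment bound has $K_m$ of order $m^{m/2}$, so $K_m(p_{ij}T_i)^{-m/2}$ fails to be summable for arbitrarily large $m$. I would resolve this exactly as flagged in the introduction — separating the moderate moments, where the Gaussian-type bound $K_m\sim(cm)^{m/2}$ applies, from the high moments, where I instead use the deterministic bound $\lvert W_{ij,T_i}\rvert\le T_i$ to force the corresponding series to converge — run in parallel with the concentration argument that discards the event where $\hat{p}_{ij}$ is far from $p_{ij}$. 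Collecting the finite sum of per-term constants together with the concentration remainder yields the constant $C_{f,r+1}$, whose explicit value is assembled from the pieces referenced in the appendix.
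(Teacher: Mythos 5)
Your overall architecture — split on whether $\hat{p}_{ij}/p_{ij}$ stays within the radius of convergence, handle the far event crudely via \tbf{(f1)}, and bound the Taylor tail on the near event — matches the paper's, and you have correctly located the main difficulty: the termwise moment series does not sum naively. The gap is in your proposed resolution for the high moments. On the tail you want to substitute the deterministic bound $|W_{ij,T_i}|\le T_i$; this gives for the $m$-th term
\[
\frac{|f^{(m)}(1)|}{m!\,p_{ij}^{m-1}T_i^{m}}\,|W_{ij,T_i}|^{m}\;\le\; C_1\,p_{ij}^{\,1-m},
\]
and $\sum_{m} p_{ij}^{\,1-m}$ diverges geometrically whenever $p_{ij}<1$. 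The divergence is driven by the factor $p_{ij}^{-(m-1)}$ in the Taylor coefficients, not by the size of $W_{ij,T_i}$, so no unconditional bound on $W_{ij,T_i}$ can rescue the tail: under only \tbf{(f2)} the series defining $R_{ij,r+1}$ need not converge pointwise once $|\hat{p}_{ij}-p_{ij}|>p_{ij}$. What is needed is the \emph{good-event} constraint $|\hat{p}_{ij}-p_{ij}|\le \epsilon\, p_{ij}$ with $\epsilon<1$, applied pointwise inside the sum.

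Once that constraint is used, the moderate/high moment split becomes unnecessary and you land essentially on the paper's argument: on the near event, factor out $|\hat{p}_{ij}-p_{ij}|^{r+1}$ and sum the remaining geometric series $\sum_{k\ge 0}\big(|\hat{p}_{ij}-p_{ij}|/p_{ij}\big)^{k}\le 1/(1-\epsilon)$ \emph{before} taking expectations, so that only the single moment $\mbb{E}\big[|\hat{p}_{ij}-p_{ij}|^{r+1}\big]$ is ever needed (bounded via subgaussianity, as in~\eqref{eq:remainder4}). On the far event the paper does not use a Chernoff bound at all: it rewrites $R_{ij,r+1}$ in Lagrange form and uses \tbf{(f1)} with the compactness of $Q_\epsilon$ to bound $f^{(r+1)}(z_q)/(r+1)!$ by $C_\epsilon/p_{ij}$, reducing the far-event contribution to the same $(r+1)$-th moment. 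Your Chernoff-plus-crude-bound treatment of the far event is a workable alternative (it costs an exponentially small additive term, absorbable into $C_{f,r+1}$ for $p_{ij}T_i$ bounded below and trivial otherwise), but the termwise tail as you have set it up would not close.
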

To retrace the approach employed in Sections~\ref{subsec:ell_2} and~\ref{subsec: ell_1}, and apply the general adaptive algorithm~(Alg.~\ref{algo:optimistic_tracking1}), we first need to make sure that the approximate objective function $\f_i$ lies in $\mc{F}$, and then construct the requisite upper-bound. This step involves computations tailored to specific choices of $f$-divergence. Instead, we take an alternative approach and study the regret bound for any adaptive scheme for which we can define a particular high-probability event $\mc{E}_\delta$ (introduced below). This framework allows us to obtain a general decomposition of the regret into several components in Thm.~\ref{theorem:regret_f_div}. We study the particular case of KL-divergence in Sec.~\ref{sub2sec:kl_divergence}, for which we first show that $\f_i\in\mc{F}$ (see Eq.~\ref{eq:kl_objective}) and then construct the appropriate upper-bound necessary to implement Alg.~\ref{algo:optimistic_tracking1} (see Lemma~\ref{lemma:kl_1}). We show that for this adaptive scheme, we can obtain the required high probability event, and thus, employing the general regret decomposition of Thm.~\ref{theorem:regret_f_div}, we derive explicit regret bound in Thm.~\ref{theorem:kl}. Similar results can be obtained for other commonly used $f$-divergences such as Hellinger distance. 


We consider a general {\bf adaptive allocation scheme} that is applied with the approximate objective function $\f_i$ and satisfies the condition:~\tbf{(a1)} for any $\delta>0$, we can define a $(1-\delta)$-probability event $\mc{E}_\delta$ under which $\tau_{0,i} \leq T_{i} \leq \tau_{1,i}$, for $i\in[K]$. Here $\tau_{0,i}$ and $\tau_{1,i}$ are non-negative constants that depend on $n$ and $\delta$, and $T_i$ is the (random) number of times $\Algfdiv$ pulls arm $i$.


We now prove a lemma that bounds the moments of $W_{ij,T_i} = \sum_{s=1}^{T_i}\tilde{Z}_{ij}^{(s)}$
in the definition of $\f_i$ (Eq.~\ref{eq:approx-obj-func-f-div}) for an adaptive allocation scheme $\Algfdiv$ satisfying \tbf{(a1)}. 
\begin{lemma}
\label{lemma:f_div_moments}
Let $\Algfdiv$ be an allocation scheme that satisfies~{\bf\em (a1)} and $\mc{E}_\delta$ be the corresponding high probability event. 
Then, for $m \geq 1$ and $i\in[K]$, we have 

\begin{equation*}
\mbb{E}\big[(W_{ij,T_i})^m \indi{\mc{E}}\big] \leq \mbb{E}\big[(W_{ij,\tau_{0,i}})^m\big] + \beta_m^{(ij)}(\tau_{0,i},\tau_{1,i}),
\end{equation*}

where

\begin{equation*}
\beta_m^{(ij)}(\tau_{0,i},\tau_{1,i}) \coloneqq \sum_{k=1}^{m}(\tau_{1,i} - \tau_{0,i})^k {m \choose k}\mbb{E}\big[(W_{ij,\tau_{0,i}})^{m-k}\big].  
\end{equation*}
%
\end{lemma}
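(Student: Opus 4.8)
The plan is to exploit the only structural handle we have on the random index $T_i$, namely the deterministic envelope $\tau_{0,i} \le T_i \le \tau_{1,i}$ that holds on $\mc{E}$ by assumption \tbf{(a1)}. As flagged in the introduction, the obstruction is that $W_{ij,T_i}$ is a sum of a \emph{random} number of i.i.d.\ centered Bernoullis, and for powers $m \ge 3$ Wald-type identities do not control $\mbb{E}[(W_{ij,T_i})^m]$. My strategy is therefore to \emph{anchor} the sum at the deterministic index $\tau_{0,i}$: write $W_{ij,T_i} = W_{ij,\tau_{0,i}} + \Delta_{ij}$, where $\Delta_{ij} \coloneqq \sum_{s=\tau_{0,i}+1}^{T_i}\tilde{Z}_{ij}^{(s)}$ collects the summands beyond $\tau_{0,i}$. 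The payoff of anchoring at $\tau_{0,i}$ is that $W_{ij,\tau_{0,i}}$ is a sum of a \emph{fixed} number of i.i.d.\ terms, so its integer moments $\mbb{E}[(W_{ij,\tau_{0,i}})^{m-k}]$ are ordinary (computable) central moments, which are exactly the quantities appearing in $\beta_m^{(ij)}$.

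Next I would bound the random increment. On the event $\mc{E}$ we have $T_i \le \tau_{1,i}$, so $\Delta_{ij}$ is a sum of at most $\tau_{1,i}-\tau_{0,i}$ terms; since each $\tilde{Z}_{ij}^{(s)} = Z_{ij}^{(s)} - p_{ij}$ satisfies $|\tilde{Z}_{ij}^{(s)}| \le 1$, this yields the deterministic bound $|\Delta_{ij}|\,\indi{\mc{E}} \le \tau_{1,i}-\tau_{0,i}$. I would then expand by the binomial theorem, multiply by $\indi{\mc{E}}$, and take expectations:
\[
\mbb{E}\big[(W_{ij,T_i})^m\indi{\mc{E}}\big] = \sum_{k=0}^{m}{m \choose k}\,\mbb{E}\big[(W_{ij,\tau_{0,i}})^{m-k}\,\Delta_{ij}^{\,k}\,\indi{\mc{E}}\big].
\]
The $k=0$ summand is $\mbb{E}[(W_{ij,\tau_{0,i}})^m\indi{\mc{E}}]$, which I would dominate by $\mbb{E}[(W_{ij,\tau_{0,i}})^m]$. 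For each $k\ge 1$, I would use the increment bound to pull out the deterministic factor $(\tau_{1,i}-\tau_{0,i})^k$ together with $\indi{\mc{E}}\le 1$, leaving the fixed-length moment $\mbb{E}[(W_{ij,\tau_{0,i}})^{m-k}]$. Summing the $k\ge 1$ contributions reproduces exactly $\beta_m^{(ij)}(\tau_{0,i},\tau_{1,i})$, which combined with the $k=0$ term gives the claim.

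The delicate step—and the one I expect to require the most care—is the sign bookkeeping when discarding $\indi{\mc{E}}$ and when replacing $\Delta_{ij}^{\,k}$ by its magnitude bound. Both $W_{ij,\tau_{0,i}}$ and $\Delta_{ij}$ take negative values, so for odd powers the terms $(W_{ij,\tau_{0,i}})^{m-k}\Delta_{ij}^{\,k}$ need not be nonnegative, and the indicator cannot be dropped for free. The clean route is to bound $\big|(W_{ij,\tau_{0,i}})^{m-k}\Delta_{ij}^{\,k}\indi{\mc{E}}\big| \le (\tau_{1,i}-\tau_{0,i})^k|W_{ij,\tau_{0,i}}|^{m-k}$ and to invoke that the moments entering the downstream $f$-divergence computation in~\eqref{eq:approx-obj-func-f-div} are the even ones, for which $|W_{ij,\tau_{0,i}}|^{m-k} = (W_{ij,\tau_{0,i}})^{m-k}$; I would check the $k=0$ term similarly, since for even $m$ the nonnegativity of $(W_{ij,\tau_{0,i}})^m$ makes dropping $\indi{\mc{E}}$ a valid upper bound. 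The remaining ingredient—closed forms for the fixed-length central moments $\mbb{E}[(W_{ij,\tau_{0,i}})^{m-k}]$—reduces to routine binomial-sum computations and can be cited rather than recomputed.
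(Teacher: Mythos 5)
Your proof takes essentially the same route as the paper's: anchor the random sum at the deterministic index $\tau_{0,i}$, bound the increment $\sum_{s=\tau_{0,i}+1}^{T_i}\tilde{Z}_{ij}^{(s)}$ by $\tau_{1,i}-\tau_{0,i}$ using $|\tilde{Z}_{ij}^{(s)}|\leq 1$ and $T_i\leq\tau_{1,i}$ on the high-probability event, and then expand binomially. The only cosmetic difference is the order of operations: the paper first dominates the quantity inside the power by $W_{ij,\tau_{0,i}}+(\tau_{1,i}-\tau_{0,i})$ and then expands, whereas you expand $(W_{ij,\tau_{0,i}}+\Delta_{ij})^m$ first and bound $|\Delta_{ij}|^k$ termwise afterwards; the resulting bound is the same.

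One caveat about your final paragraph. The sign issue you raise is genuine, but your proposed resolution does not go through as stated. First, the downstream sum in~\eqref{eq:approx-obj-func-f-div} runs over all $m\in\{1,\dots,r\}$ (with $r=5$ for the KL instantiation), not only even $m$. Second, even when $m$ is even, the cross terms with odd $k$ have odd exponent $m-k$, so your bound $\big|(W_{ij,\tau_{0,i}})^{m-k}\Delta_{ij}^{k}\indi{\mc{E}_\delta}\big|\leq(\tau_{1,i}-\tau_{0,i})^k|W_{ij,\tau_{0,i}}|^{m-k}$ yields the absolute moment $\mbb{E}\lb |W_{ij,\tau_{0,i}}|^{m-k}\rb$ rather than the signed moment $\mbb{E}\lb (W_{ij,\tau_{0,i}})^{m-k}\rb$ appearing in $\beta_m^{(ij)}$; these differ (for $m-k=1$ the former is strictly positive while the latter is zero), so what you prove is a weaker statement with a larger $\beta_m^{(ij)}$. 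For what it is worth, the paper's own proof sidesteps this by silently dropping the indicator from $\mbb{E}\lb (W_{ij,\tau_{0,i}})^{m-k}\indi{\mc{E}_\delta}\rb$, which is justified only for even $m-k$, so your instinct to flag the step was right; but a clean fix requires either carrying absolute moments through $\beta_m^{(ij)}$ and its downstream uses, or handling the odd-exponent terms separately, rather than appealing to evenness of $m$.
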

%

Lemma~\ref{lemma:f_div_moments}, proved in Appendix~\ref{proof_f_div_lemma2}, provides the bounds on the moments of $W_{ij, T_i}$ that will be used to upper-bound the term $\mbb{E}[ \Dfdiv^{(r)}(\hat{P}_i, P_i)]$ in the regret analysis of the adaptive scheme $\Algfdiv$ in Theorem~\ref{theorem:regret_f_div}, which we now state. 

\begin{theorem}
\label{theorem:regret_f_div}
Suppose $f$ satisfies~\tbf{(f1)} and~\tbf{(f2)}, and $\Algfdiv$ satisfies~\tbf{(a1)}. 
Then, we have 

\begin{equation}
\mc{R}_n(\Algfdiv,\Dfdiv) \leq \max_{i\in[K]}\big(
\f_i(\tau_{0,i})-\f_i(T_{i}^*) + \Psi_i \big),
    \label{eq:f_div_regret0}
\end{equation}

where $\Psi_i \coloneqq \lp\sum_{k=1}^3 \psi_{k,i}\rp + \psi_4$, and 

\begin{align*}
\psi_{1,i} &\coloneqq \big(\delta n^{ \frac{r+1}{2}   } + (\frac{\sqrt{n}}{\tau_{0,i}})^{r+1}\big)\sum_{j=1}^l\frac{C_{f, r+1}e^2\big(3.2(r+1)\big)^{\frac{r+1}{2}}}{p_{ij}^{r+1}}, \\ 
\psi_{2,i} &\coloneqq \sum_{m=1}^r\sum_{j=1}^l\frac{\beta_m^{(ij)}(\tau_{0,i},\tau_{1,i})}{m! \; p_{ij}^{m-1} \; \tau_{0,i}^m}, \quad \psi_{3,i} \coloneqq \sum_{m=1}^r \sum_{j=1}^l \frac{\delta}{m! \; p_{ij}^{m-1}},\\ 
\psi_4 &\coloneqq \max_{1\leq i\leq K}\lp\sum_{j=1}^l 4C_{f,r+1}(p_{ij}\Tins)^{-\frac{r+1}{2}}\rp. 
\end{align*}

\end{theorem}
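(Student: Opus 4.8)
The plan is to bound the regret $\mc{R}_n(\Algfdiv, \Dfdiv) = \mc{L}_n(\Algfdiv, \Dfdiv) - \mc{L}_n(\Algapprox, \Dfdiv)$ by carefully decomposing $\mbb{E}[\Dfdiv(\hat P_i, P_i)]$ into the $r$-term Taylor part plus remainder, and then relating the adaptive allocation's Taylor part to that of the approx-oracle. Since $\mc{R}_n = \max_i \mbb{E}[\Dfdiv(\hat P_{i}, P_i)] - \max_i \mbb{E}[\Dfdiv(\hat P_i^*, P_i)]$ (where starred quantities refer to the oracle), I would first use the elementary inequality $\max_i a_i - \max_i b_i \le \max_i (a_i - b_i)$ to reduce to bounding, for each $i$, the per-arm difference $\mbb{E}[\Dfdiv(\hat P_{i}, P_i)] - \mbb{E}[\Dfdiv(\hat P_i^*, P_i)]$. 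This explains the outer $\max_{i\in[K]}$ in~\eqref{eq:f_div_regret0}, and reduces the problem to four separately controllable error sources corresponding to $\psi_{1,i}$ through $\psi_4$.

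\textbf{Step 1: split the adaptive-side expectation.} Writing $\Dfdiv = \Dfdiv^{(r)} + R_{i,r+1}$, I would express the adaptive risk as $\mbb{E}[\Dfdiv^{(r)}(\hat P_i, P_i)\indi{\mc E}] + \mbb{E}[\Dfdiv^{(r)}(\hat P_i, P_i)\indi{\mc E^c}] + \mbb{E}[R_{i,r+1}]$. The term on the good event $\mc{E}$, using the Taylor form~\eqref{eq:approx-obj-func-f-div} with $T_i \in [\tau_{0,i}, \tau_{1,i}]$ and applying Lemma~\ref{lemma:f_div_moments} to each $\mbb{E}[(W_{ij,T_i})^m \indi{\mc E}]$, splits into the oracle-shaped piece $\f_i(\tau_{0,i})$ (from the leading $\mbb{E}[(W_{ij,\tau_{0,i}})^m]$ terms, after recognizing that $\f_i(T) = \sum_m \sum_j \frac{f^{(m)}(1)}{m! p_{ij}^{m-1} T^m}\mbb{E}[(W_{ij,T})^m]$ evaluated at $\tau_{0,i}$) plus the correction $\psi_{2,i}$ coming from the $\beta_m^{(ij)}$ terms. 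The contribution on the failure event $\mc{E}^c$ gives $\psi_{3,i}$, since $\mbb{P}(\mc E^c)\le \delta$ and the centered moments are crudely bounded. The remainder term $\mbb{E}[R_{i,r+1}]$ is handled by Lemma~\ref{eq:lemma:f_div_remainder}, contributing to $\psi_{1,i}$; the extra combinatorial factor $e^2(3.2(r+1))^{(r+1)/2}$ and the $\delta n^{(r+1)/2} + (\sqrt n/\tau_{0,i})^{r+1}$ prefactor arise from bounding the remainder both on $\mc E$ and on $\mc E^c$ uniformly over the admissible range of $T_i$.

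\textbf{Step 2: the oracle (subtracted) side.} Here $\hat P_i^*$ uses the deterministic oracle count $\Tins$, so $\mbb{E}[\Dfdiv(\hat P_i^*, P_i)] = \f_i(\Tins) + \mbb{E}[R_{i,r+1}^*]$, and I would lower-bound this by $\f_i(T_i^*) - \psi_4$, where $\psi_4$ absorbs the oracle-side remainder via Lemma~\ref{eq:lemma:f_div_remainder} evaluated at $\Tins$ (the factor $4$ and the maximum over $i$ account for bounding this uniformly). Combining Steps 1 and 2 and collecting all four error terms under $\max_i$ yields exactly~\eqref{eq:f_div_regret0} with $\Psi_i = (\sum_{k=1}^3 \psi_{k,i}) + \psi_4$.

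\textbf{Main obstacle.} The hard part will be controlling the remainder $R_{i,r+1}$, which is an \emph{infinite} sum whose moments cannot be bounded by Wald's equation because the summation index $T_i$ is a data-dependent stopping-like quantity. I would need Lemma~\ref{eq:lemma:f_div_remainder} to bound $\mbb{E}[R_{ij,r+1}]$ for fixed deterministic $T_i$, but here $T_i$ is random and only confined to $[\tau_{0,i},\tau_{1,i}]$ on $\mc E$; extending the bound to this setting (and separately on $\mc E^c$, where $T_i$ could in principle be as small as its forced-exploration floor) requires the worst-case substitutions $T_i \to \tau_{0,i}$ or $T_i \to \sqrt n$-type lower bounds, which is precisely the source of the awkward $(\sqrt n/\tau_{0,i})^{r+1}$ and $\delta n^{(r+1)/2}$ factors in $\psi_{1,i}$. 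Ensuring these substitutions are valid monotonicity-wise (that the remainder is suitably monotone or can be uniformly dominated over the range) is the delicate step; everything else is bookkeeping using the already-established Lemmas~\ref{lemma:f_div_moments} and~\ref{eq:lemma:f_div_remainder}.
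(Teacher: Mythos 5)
Your proposal is correct in substance and reproduces the core of the paper's argument: the decomposition of $\mbb{E}[\Dfdiv(\hat P_i,P_i)]$ into $\mbb{E}[\Dfdiv^{(r)}\indi{\mc{E}_\delta}]+\mbb{E}[\Dfdiv^{(r)}\indi{\mc{E}_\delta^c}]+\mbb{E}[R_{i,r+1}(T_i)]$, the use of Lemma~\ref{lemma:f_div_moments} to peel off $\f_i(\tau_{0,i})+\psi_{2,i}$ on the good event, the crude $\delta$-bound giving $\psi_{3,i}$, and the identification of the random-index remainder as the genuinely hard term (the paper controls $\mbb{E}[(W_{ij,T_i})^{r+1}]$ via Doob's $L_p$ maximal inequality applied to the martingale $(W_{ij,s})_s$ plus a subgaussian moment bound, which is exactly where the $e^2(3.2(r+1))^{(r+1)/2}$ constant comes from) are all as in Appendix~\ref{proof_f_div_theorem}. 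Where you genuinely diverge is the oracle-comparison step: the paper invokes Eq.~\ref{eq:approx_reg2} of Proposition~\ref{prop:approximate_regret}, which compares the risk to the \emph{approx-oracle} value $\f_i(\Tins)$ and absorbs the discrepancy $|\f_i(\Tins)-\gamma_i(T_i^*)|$ into $2\max_k|R_k(\Tins)|$ via a separate monotonicity/equalization argument (the Case~1/Case~2 analysis in Appendix~\ref{appendix:approximate}); you instead use $\max_i a_i-\max_i b_i\leq\max_i(a_i-b_i)$ (an equality here, since the $\gamma_i(T_i^*)$ are all equal by definition of the oracle) and lower-bound $\gamma_i(T_i^*)\geq\f_i(T_i^*)-|R_{i,r+1}(T_i^*)|$ directly. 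Your route is more elementary, avoids Proposition~\ref{prop:approximate_regret} entirely, and in fact matches the literal statement of~\eqref{eq:f_div_regret0} (which has $\f_i(T_i^*)$, whereas the paper's own proof delivers $\f_i(\Tins)$); the price is that your remainder term is evaluated at $T_i^*$ rather than $\Tins$, so it is not literally the stated $\psi_4$, though both are $\mc{O}(n^{-(r+1)/2})$ since both allocations are $\Theta(n)$. One small slip to fix: in your Step~2 you call $\Tins$ ``the deterministic oracle count'' and then evaluate the remainder at $\Tins$ while writing $\f_i(T_i^*)$ --- the oracle allocation $T_i^*$ (minimizing $\gamma_i$) and the approx-oracle $\Tins$ (minimizing $\f_i$) are distinct objects, and you should commit to one consistently; with that fixed, your plan goes through.
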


\emph{Proof Outline.} From Eq.~\ref{eq:approx_reg2} in Proposition~\ref{prop:approximate_regret} in Appendix~\ref{appendix:approximate}, we have  $\mc{R}_n ( \Algfdiv, \Dfdiv) \leq \big(\mc{L}_n ( \Algfdiv, \Dfdiv) - \f_i(\Tins)\big) + 2 \max_{k \in [K]} |\mbb{E}[R_{k, r+1} (\Tins)]|$.
The second term is upper-bounded with $\psi_4$ by employing Lemma~\ref{eq:lemma:f_div_remainder}.
Next, we decompose $\mc{L}_n (\Algfdiv, \Dfdiv)$ into three terms: $\mbb{E}[\Dfdiv^{(r)}(\hat{P}_i, P_i)\indi{\mc{E}_\delta}]$, 
$\mbb{E}[\Dfdiv^{(r)}(\hat{P}_i, P_i)\indi{\mc{E}_\delta^c}]$ and 
$\mbb{E}[R_{i, r+1}(T_i)]$ and upper-bound them with 
$(\f_i(\tau_{0,i}) + \psi_{2,i})$, $\psi_{3,i}$ and $\psi_{1,i}$ respectively. 
The details of these steps are given in Appendix~\ref{proof_f_div_theorem}.

\begin{remark}
\label{remark:regret_f_div}
The magnitude of the terms $\f_i(\tau_{0,i}) - \f_i(\Tins)$ and $\psi_{2,i}$ in~\eqref{eq:f_div_regret0} depend on how closely $(T_i)_{i=1}^K$ can match the approx-oracle allocation $(\widetilde{T}^*_i)_{i=1}^K$. 
The term  $\psi_4$ is of  $\mc{O}( n^{-(r+1)/2})$, while $\psi_{1,i} = \mc{O} ( \delta n^{(r+1)/2} + (\sqrt{n}/\tau_{0,i})^{r+1})$. 
Finally, the term  $\psi_{3,i}$ 
 is of $\mc{O} \lp \delta \rp$.
\end{remark}

\subsubsection{Adaptive Allocation for KL-Divergence}
\label{sub2sec:kl_divergence}
The KL-divergence between two distributions $P$ and $Q$ is defined as $\Dkl(P,Q) \coloneqq \sum_{j=1}^l p_j \log(p_j/q_j)$. Since $\Dkl$ is not symmetric w.r.t.~its arguments, we can obtain two objective functions depending on the position of the estimated distribution $\hat{P}_i$. We will focus on $\Dkl(\hat{P}_i,P_i)$ in this section, the derivations for the other form can be obtained similarly. 

We begin by deriving the $r$-term approximate objective function with $r=5$, i.e., 

\begin{align}
&\mbb{E}\big[\Dkl(\hat{P}_i, P_i)\big] = \mbb{E}\big[\sum_{j=1}^l \hat{p}_{ij} \log(\hat{p}_{ij}/p_{ij})\big] \nonumber \\
&= \sum_{j=1}^l \mbb{E}\big[\hat{p}_{ij}\log(\hat{p}_{ij})\big] - \sum_{j=1}^l \mbb{E}\lb \hat{p}_{ij}\rb \log(p_{ij}) \nonumber \\ 
&\stackrel{\text{(a)}}{=} \mbb{E}\big[H(P_i) - H(\hat{P}_i)\big] \nonumber \\
&\stackrel{\text{(b)}}{=} \frac{l-1}{2T_i} + \frac{1}{12T_i^2}\sum_{j=1}^l\big(\frac{1}{p_{ij}} - 1\big) + \mc{O}(1/T_i^3), \label{eq:kl_objective}
\end{align}

where \tbf{(a)} follows from the fact that $\mbb{E}\lb \hat{p}_{ij}\rb = p_{ij},\;\forall j\in[l]$, and \tbf{(b)} is obtained by calculating the Taylor's approximation of the mapping $x \mapsto x\log(x)$ up to the $5^{th}$ order. The calculations involved in this derivation are described in~\citet[Sec.~2]{harris1975statistical}. 

The choice of $r=5$ is sufficient as it is the smallest $r$ for which the approximation error, which is of~$\mc{O}(n^{-3})$, is smaller than the tracking regret, which is of~$\mc{O}(n^{-5/2})$ (see Proposition~\ref{prop:approximate_regret} in Appendix~\ref{appendix:approximate}). 

Eq.~\ref{eq:kl_objective} gives us the {\bf approximate objective function} $\f(c_i^{\text{(KL)}}, T_i) \coloneqq \frac{l-1}{2T_i} + \frac{c_i^{\text{(KL)}}}{T_i^2}$, with $c_i^{\text{(KL)}} \coloneqq \big(\sum_{j=1}^l 1/p_{ij} - 1\big)/12$. Note that this $\f$ belongs to the class of \emph{regular} functions $\mathcal{F}$.  


Deriving the {\bf approx-oracle allocation} $(\widetilde{T}_i^*)_{i=1}^K$ requires solving a cubic equation. Instead of computing the exact form of $\Tins$, we show in Lemma~\ref{lemma:kl2} in Appendix~\ref{appendix:analysis_kl} that the deviation of $\Tins$ from the uniform allocation is bounded by a problem-dependent constant, implying that the uniform allocation is near-optimal. 
This is not surprising as the first order approximation of $\f$ in~\eqref{eq:kl_objective} does not change with $P_i$.


Having obtained the approximate objective function $\f(c_i^{\text{(KL)}}, T_i)$ for the tracking problem~\eqref{eq:tracking1}, we now need to construct high probability confidence intervals for the parameters $c_i^{\text{(KL)}}$. We present the required concentration result under the assumption that the distributions lie in the $\eta$-interior of the $(l-1)$-dimensional simplex, $\Delta_l^{(\eta)}$, defined in Section~\ref{sec:problem_setup}.

\begin{lemma}
\label{lemma:kl_1}
For a given $\delta \in (0,1)$, define $e_{ij, t} = \sqrt{2\log(2/\delta_t)/\Tit}$ where $\delta_t$ was introduced in Lemma~\ref{lemma:ell2_1}. 
Assume that  $\hat{p}_{ij,t} \geq 7e_{ij,t}/2$. Then, the following inequalities hold for $P_i \in \Delta_l^{(\eta)}$: 
\begin{equation}
\label{eq:kl1}
\frac{1}{p_{ij}} \leq \frac{1}{\hat{p}_{ij, t} - e_{ij}} \leq \frac{1}{p_{ij}} + \frac{4e_{ij}}{\eta\;p_{ij}}\;,
\end{equation}
which implies that 

\begin{equation*}
\label{eq:kl2}
\sum_{j=1}^{l} \frac{1}{p_{ij}} \leq \sum_{j=1}^l \frac{1}{\hat{p}_{ij, t} - e_{ij}} \leq 
\sum_{j=1}^l \frac{1}{p_{ij}} + \frac{l}{\eta^{2}} \sqrt{ \frac{32l^2 \log(2/\delta_t)}{T_{i,t}}}. 
\end{equation*}

\end{lemma}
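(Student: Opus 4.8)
The plan is to read Lemma~\ref{lemma:kl_1} as a deterministic statement that holds on the coordinate-wise concentration event, and to extract all constants from the hypothesis $\hat p_{ij,t}\ge \tfrac72 e_{ij,t}$. First I would observe, exactly as in Lemmas~\ref{lemma:ell2_1} and~\ref{lemma:ell1_1}, that $e_{ij,t}=\sqrt{2\log(2/\delta_t)/T_{i,t}}$ is a valid Hoeffding radius, so that after union bounds over $t\in[n]$, $i\in[K]$, $j\in[l]$ (with $\delta_t$ as in Lemma~\ref{lemma:ell2_1}) we have $|\hat p_{ij,t}-p_{ij}|\le e_{ij,t}$ with probability at least $1-\delta$. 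Every step below is deterministic and is carried out on this event; I abbreviate $e_{ij}\coloneqq e_{ij,t}$.

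For the lower bound in~\eqref{eq:kl1}, the hypothesis gives $\hat p_{ij,t}-e_{ij}\ge \tfrac52 e_{ij}>0$, so the denominator is strictly positive, while concentration gives $\hat p_{ij,t}-e_{ij}\le p_{ij}$. Taking reciprocals of these two positive numbers yields $1/p_{ij}\le 1/(\hat p_{ij,t}-e_{ij})$ directly.

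The upper bound is the crux, and the value $7/2$ is tuned precisely so that the final constant does not exceed $4$. I would write
\[
\frac{1}{\hat p_{ij,t}-e_{ij}}-\frac{1}{p_{ij}}=\frac{p_{ij}-(\hat p_{ij,t}-e_{ij})}{p_{ij}\,(\hat p_{ij,t}-e_{ij})},
\]
bound the numerator by $2e_{ij}$ using $p_{ij}-\hat p_{ij,t}\le e_{ij}$, and --- the key step --- lower-bound the denominator in terms of $p_{ij}$ rather than $e_{ij}$. From $\hat p_{ij,t}\ge \tfrac72 e_{ij}$ I get $e_{ij}\le \tfrac27\hat p_{ij,t}$, hence $\hat p_{ij,t}-e_{ij}\ge \tfrac57\hat p_{ij,t}$; and $p_{ij}\le \hat p_{ij,t}+e_{ij}\le \tfrac97\hat p_{ij,t}$ gives $\hat p_{ij,t}\ge \tfrac79 p_{ij}$. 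Chaining these, $\hat p_{ij,t}-e_{ij}\ge \tfrac59 p_{ij}$, so the difference is at most $\tfrac{2e_{ij}}{(5/9)p_{ij}^2}=\tfrac{18}{5}\,\tfrac{e_{ij}}{p_{ij}^2}\le \tfrac{4e_{ij}}{\eta p_{ij}}$, where the last inequality uses $p_{ij}\ge\eta$ and $18/5\le 4$. This is exactly the upper bound in~\eqref{eq:kl1}.

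Finally I would sum over $j\in[l]$: the lower bound adds trivially, and for the upper bound $\sum_j \tfrac{4e_{ij}}{\eta p_{ij}}\le \tfrac{4}{\eta^2}\sum_j e_{ij}$ by $p_{ij}\ge\eta$. Since $e_{ij,t}=\sqrt{2\log(2/\delta_t)/T_{i,t}}$ carries no $j$-dependence, $\sum_j e_{ij}=l\sqrt{2\log(2/\delta_t)/T_{i,t}}$, so the added term equals $\tfrac{l}{\eta^2}\sqrt{32\log(2/\delta_t)/T_{i,t}}\le \tfrac{l}{\eta^2}\sqrt{32 l^2\log(2/\delta_t)/T_{i,t}}$, which gives the stated summed inequality (the stated form is loose by a factor $l$ inside the radical, hence still valid). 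I expect the denominator estimate $\hat p_{ij,t}-e_{ij}\ge \tfrac59 p_{ij}$ to be the main obstacle: it is the only place the hypothesis enters non-trivially, and its constant must be tuned so that the resulting coefficient stays below $4$.
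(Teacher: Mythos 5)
Your proof is correct, and it reaches the key bound in~\eqref{eq:kl1} by a genuinely different (and somewhat more elementary) mechanism than the paper. The paper first shows by contradiction that the hypothesis $\hat{p}_{ij,t}\geq 7e_{ij,t}/2$ together with $P_i\in\Delta_l^{(\eta)}$ forces $\hat{p}_{ij,t}-e_{ij,t}>\eta/2$, and then invokes convexity of $x\mapsto 1/x$ on the interval $[\eta/2,\,p_{ij}]$: the chord through $(\eta/2,\,2/\eta)$ and $(p_{ij},\,1/p_{ij})$ has slope $-2/(\eta p_{ij})$, so majorizing $1/x$ by this chord at $x=\hat{p}_{ij,t}-e_{ij,t}$ and using $p_{ij}-(\hat{p}_{ij,t}-e_{ij,t})\leq 2e_{ij,t}$ yields the additive error $4e_{ij,t}/(\eta p_{ij})$. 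You instead never need the $\eta/2$ claim: you chain $\hat{p}_{ij,t}-e_{ij,t}\geq\tfrac{5}{7}\hat{p}_{ij,t}$ and $\hat{p}_{ij,t}\geq\tfrac{7}{9}p_{ij}$ to get $\hat{p}_{ij,t}-e_{ij,t}\geq\tfrac{5}{9}p_{ij}$ purely from the hypothesis and concentration, then bound the difference of reciprocals directly by $\tfrac{18}{5}e_{ij,t}/p_{ij}^2$ and only invoke $p_{ij}\geq\eta$ at the last moment (where $p_{ij}\geq 0.9\eta$ would already suffice). Your route avoids the proof by contradiction and the convexity argument entirely; the paper's route keeps the intermediate quantity $\hat{p}_{ij,t}-e_{ij,t}$ bounded below by an absolute constant times $\eta$, which is closer in spirit to how the analogous step is reused in the separation-distance analysis. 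Both arguments deliver the same final constant $4$, and you correctly observe both that the summed bound in the lemma statement carries a superfluous factor of $l$ inside the radical and that the whole statement is deterministic on the coordinate-wise concentration event (the event $\mc{E}_3$ of Lemma~\ref{lemma:concentration2}, which is what the paper's proof implicitly conditions on).
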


Lemma~\ref{lemma:kl_1} allows us to define high probability upper bounds on the parameters $c_i^{\text{(KL)}}$ as $u_{i,t}^{\text{(KL)}} \coloneqq \big(\sum_{j=1}^l\frac{1}{\hat{p}_{ij,t} - e_{ij,t}} - 1\big)/12$, if $\hat{p}_{ij,t} \geq 7e_{ij,t}/2$, and $u_{i,t}^{\text{(KL)}} = +\infty$, otherwise. These upper-bounds can then be plugged into Algorithm~\ref{algo:optimistic_tracking1} to obtain an {\bf adaptive allocation scheme} for KL-divergence, which we shall refer to as $\Algkl$.
Finally, we state the regret bound for $\Algkl$ in the following theorem (proof in Appendix~\ref{proof_kl_theorem}).
%
\begin{theorem}
\label{theorem:kl}
Let $n \geq 3K/\eta$ and $P_i \in \Delta^{(\eta)}_l,\;\forall i\in[K]$. Then, we have
%
\begin{align*}
\mc{R}_n(\Algkl,\Dkl)& \leq \frac{2(l-1)K\Mkl}{n^{5/2}} + \mc{O}(n^{-3}), 
\end{align*}
where $\Mkl = \sqrt{ 96K \log(1/\delta_n)}/\eta^3$.
\end{theorem}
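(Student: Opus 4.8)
The plan is to derive Theorem~\ref{theorem:kl} as a specialization of the general $f$-divergence decomposition in Theorem~\ref{theorem:regret_f_div} with $r=5$, using the explicit KL objective $\f(\cikl,T_i)=\frac{l-1}{2T_i}+\frac{\cikl}{T_i^2}$ from~\eqref{eq:kl_objective}, which already lies in $\mc{F}$. Since Theorem~\ref{theorem:regret_f_div} applies only to schemes obeying~\tbf{(a1)}, the first task is to show that $\Algkl$ satisfies~\tbf{(a1)}, i.e.~to produce a $(1-\delta)$-event on which $\tau_{0,i}\le T_i\le\tau_{1,i}$ with explicit $\tau_{0,i},\tau_{1,i}$; the bound then follows by estimating each term of the resulting decomposition and collecting orders in $n$.

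To establish~\tbf{(a1)} I would take $\mc{E}_\delta$ to be the concentration event of Lemma~\ref{lemma:kl_1}, on which the optimistic bounds $\uikl$ are valid confidence intervals for $\cikl$ at every round where the guard $\hat p_{ij,t}\ge 7e_{ij,t}/2$ is active. Under $P_i\in\Delta_l^{(\eta)}$ and $n\ge 3K/\eta$, each arm receives enough pulls for this guard to hold (the convention $\uikl=+\infty$ forces exploration until then), so the event $\mc{E}$ required by Lemma~\ref{lemma:tracking1} holds with probability $\ge 1-\delta$. Applying Lemma~\ref{lemma:tracking1} gives $\Tins-\tfrac{2A\tilde e_n}{B}\le T_i\le \Tins+\tfrac{2A(K-1)\tilde e_n}{B}$, which is~\tbf{(a1)} with $\tau_{0,i}=\Tins-\tfrac{2A\tilde e_n}{B}$ and $\tau_{1,i}=\Tins+\tfrac{2A(K-1)\tilde e_n}{B}$. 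I then evaluate the constants of Lemma~\ref{lemma:tracking1} for the KL objective: $g_i^*=1/(\Tins)^2$, so $A=\max_i(\Tins)^{-2}$; $h_i^*=-\frac{l-1}{2(\Tins)^2}-\frac{2\cikl}{(\Tins)^3}$, so $B=\frac{l-1}{2(\Tins)^2}(1+o(1))$; and $\tilde e_n$ equals the Lemma~\ref{lemma:kl_1} radius of $\cikl$ evaluated at $\Tins$, which is $\mc{O}(\eta^{-2}\,l\sqrt{K\log(1/\delta_n)/n})$. The crucial structural point — and the reason the KL rate beats the $\ell_2^2$ rate — is that the leading term $\frac{l-1}{2T_i}$ of $\f$ is independent of $\cikl$, so the sensitivity $A=g_i^*$ is only $\mc{O}(n^{-2})$ rather than $\mc{O}(n^{-1})$.

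Given~\tbf{(a1)}, Theorem~\ref{theorem:regret_f_div} yields $\mc{R}_n(\Algkl,\Dkl)\le\max_i(\f_i(\tau_{0,i})-\f_i(\Tins)+\Psi_i)$, and I bound the dominant term by convexity: since $\f_i$ is convex and decreasing, $\f_i(\tau_{0,i})-\f_i(\Tins)\le|\f_i'(\tau_{0,i})|\,(\Tins-\tau_{0,i})=|\f_i'(\tau_{0,i})|\cdot\tfrac{2A\tilde e_n}{B}$, and since $|\f_i'(\tau_{0,i})|/B\to 1$ this is $(2A\tilde e_n)(1+o(1))$. Invoking Lemma~\ref{lemma:kl2} to replace each $\Tins$ by its near-uniform value $\Tins\approx n/K$ (hence $A\approx K^2/n^2$) and substituting the radius $\tilde e_n$ gives $A\tilde e_n=\mc{O}(n^{-2})\cdot\mc{O}(n^{-1/2})=\mc{O}(n^{-5/2})$; collecting the constants produces the stated leading term $\frac{2(l-1)K\Mkl}{n^{5/2}}$ with $\Mkl=\sqrt{96K\log(1/\delta_n)}/\eta^3$.

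It remains to show $\Psi_i=\sum_{k=1}^3\psi_{k,i}+\psi_4=\mc{O}(n^{-3})$. By Remark~\ref{remark:regret_f_div}, $\psi_4=\mc{O}(n^{-(r+1)/2})=\mc{O}(n^{-3})$ for $r=5$ via Lemma~\ref{eq:lemma:f_div_remainder}; $\psi_{1,i}=\mc{O}(\delta n^{3}+(\sqrt n/\tau_{0,i})^{6})$ and $\psi_{3,i}=\mc{O}(\delta)$ are both $\mc{O}(n^{-3})$ once $\delta$ is set to a suitable inverse power of $n$ (note $\sqrt n/\tau_{0,i}=\mc{O}(K/\sqrt n)$). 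The delicate piece is $\psi_{2,i}$, which sums the moment corrections $\beta_m^{(ij)}$ of Lemma~\ref{lemma:f_div_moments}: here I would use that the $m=1$ Taylor term vanishes identically because $\sum_j(\hat p_{ij}-p_{ij})=0$ on the simplex, so the first surviving contribution is $m=2$; since $\mbb{E}[W_{ij,\tau_{0,i}}]=0$, the $k=1$ part of $\beta_2^{(ij)}$ also drops, leaving $\beta_2^{(ij)}=(\tau_{1,i}-\tau_{0,i})^2=\mc{O}(\tilde e_n^2)=\mc{O}(n^{-1})$, and the prefactor $\tau_{0,i}^{-2}=\mc{O}(n^{-2})$ then gives $\psi_{2,i}=\mc{O}(n^{-3})$ (higher $m$ are smaller). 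I expect the main obstacle to be exactly this bookkeeping: verifying that every correction term — the $r=5$ moment terms in $\psi_{2,i}$ (through Lemma~\ref{lemma:f_div_moments}) and the Taylor remainder in $\psi_4$ (through Lemma~\ref{eq:lemma:f_div_remainder}) — is strictly of lower order than the $n^{-5/2}$ tracking term, and, secondarily, handling the guard condition $\hat p_{ij,t}\ge 7e_{ij,t}/2$ uniformly in $t$ so that the confidence intervals underpinning~\tbf{(a1)} remain valid throughout the run.
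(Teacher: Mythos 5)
Your overall route is the same as the paper's: establish \tbf{(a1)} for $\Algkl$ via Lemma~\ref{lemma:tracking1} with the KL-specific constants $A,B,\tilde e_n$, feed the resulting $\tau_{0,i},\tau_{1,i}$ into the decomposition of Theorem~\ref{theorem:regret_f_div} with $r=5$, bound the tracking term $\f_i(\tau_{0,i})-\f_i(\Tins)$ by convexity (obtaining $\mc{O}(A\tilde e_n)=\mc{O}(n^{-5/2})$ because $A=\mc{O}(n^{-2})$ for this objective), and then control the $\psi$ terms. Your observation that the $m=1$ Taylor term vanishes identically because $\sum_j(\hat p_{ij}-p_{ij})=0$ is in fact cleaner than the paper's own justification, which appeals to $\mbb{E}[W_{ij,\tau_{0,i}}]=0$ even though the $k=m$ summand of $\beta_1^{(ij)}$ involves $\mbb{E}[(W_{ij,\tau_{0,i}})^{0}]=1$, not $\mbb{E}[W_{ij,\tau_{0,i}}]$.

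There is, however, a genuine gap in your treatment of $\psi_{2,i}$: the claim that after $m=2$ the ``higher $m$ are smaller'' is false, and the $m=3$ term is not $o(n^{-3})$ but $\Theta(n^{-5/2})$ up to logarithms. Concretely, the $k=1$ summand of $\beta_3^{(ij)}(\tau_{0,i},\tau_{1,i})$ is $(\tau_{1,i}-\tau_{0,i})\binom{3}{1}\mbb{E}\big[(W_{ij,\tau_{0,i}})^2\big]=3(\tau_{1,i}-\tau_{0,i})\,\tau_{0,i}\,p_{ij}(1-p_{ij})$, and after dividing by $3!\,p_{ij}^{2}\,\tau_{0,i}^{3}$ this contributes
\[
\frac{(\tau_{1,i}-\tau_{0,i})(1-p_{ij})}{2\,p_{ij}\,\tau_{0,i}^{2}} = \mc{O}\Big(\frac{K^2}{\eta}\cdot\frac{A\tilde e_n/B}{n^{2}}\Big)=\tilde{\mc{O}}\big(n^{-5/2}\big),
\]
since $\tau_{1,i}-\tau_{0,i}=\mc{O}(KA\tilde e_n/B)=\tilde{\mc{O}}(n^{-1/2})$ and $\tau_{0,i}=\Theta(n/K)$. (The same computation for $m\ge 4$ does give $\mc{O}(n^{-7/2})$, so $m=3$ is the only offender.) The paper carries this term explicitly -- see Eqs.~\ref{eq:kl_proof7} and~\ref{eq:kl_proof8}, whose leading contribution is of order $n^{-5/2}$ and appears alongside the tracking term in~\eqref{eq:kl_proof9} -- whereas your argument drops it. This does not change the $\tilde{\mc{O}}(n^{-5/2})$ rate, because the extra term is of the same order as the tracking term, but it does mean your proposal does not establish the stated leading constant $2(l-1)K\Mkl$, and the assertion $\Psi_i=\mc{O}(n^{-3})$ is incorrect as written. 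To repair it you must either track the $m=3$ contribution through to the final constant, or exhibit a cancellation (e.g., via the sign of $f^{(3)}(1)$) that the paper itself does not attempt.
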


\begin{remark}
\label{remark:kl}
The deviation of $\Tins$ from the uniform allocation is due to the second term in~\eqref{eq:kl_objective}, i.e.,~$\cikl/T_i^2$, and as mentioned earlier, is bounded by a constant~(see Lemma~\ref{lemma:kl2}). If we ignore this term and consider $\f(c_i, T_i) = (l-1)/2T_i$ as the approximate objective, then the uniform allocation emerges as the approx-oracle allocation. 
%
However, with $(l-1)/2T_i$ as the objective, we can only achieve a regret bound of $\mc{O}(n^{-2})$. Thus, in order to obtain tighter theoretical guarantee in Thm.~\ref{theorem:kl}, it is necessary to consider the higher order approximate objective, even though in practice, the resulting allocations are close. This is also observed in our experiments in Sec.~\ref{sec:experiments}, where the uniform and adaptive allocations perform equally well. 
\end{remark}

\subsection{Adaptive Allocation for Separation Distance}
\label{subsec:separation_distance}
The separation distance~\citep{gibbs2002choosing} between two distributions $P$ and $Q$ is defined as $D_{s}(P,Q) \coloneqq \max_{j\in[l]}(1 - p_j/q_j)$. We start by introducing new notation. 
Given a probability distribution $P_i \in \Delta_l$ and a {\em non-empty} set $S\subset [l]$, we define $p_{i,S} \coloneqq \sum_{j \in S} p_{ij}$. We also define the functions $\rho_1(p) \coloneqq \sqrt{(1-p)/p}$ and $\rho_2(p) \coloneqq \rho_1(p) + \rho_1(1-p)$, and introduce the terms $c_i^{(s)} \coloneqq \sum_{j=1}^l \rho_1(p_{ij})$ and $\tilde{c}_i^{(s)} \coloneqq \max_{S\subset [l]}\{\rho_2(p_{i,S})\}$. Note that $\tilde{c}_i^{(s)} = c_i^{(s)}$, for $l=2$. 



Because of the $\max$ operation in the definition of $D_s$, in general, we cannot obtain a closed-form expression for the objective function $\gamma_i(T_i)=\mbb{E}[D_s(\hat{P}_i,P_i)]$. We now state a key lemma (proof in Appendix~\ref{appendix:analysis_separation}) that provides an approximation of $\mbb{E}[D_s(\hat{P}_i,P_i)]$. 
\begin{lemma}
\label{lemma:separation1}
For a distribution $P_i\in\Delta_l$, let $\hat{P}_i = (\hat{p}_{ij})_{j=1}^l$ be the empirical distribution constructed from $T_i$ i.i.d.~draws from $P_i$. Then, we have 

\begin{equation*}
\tilde{c}_i^{(s)}\sqrt{\frac{1}{2\pi T_i}} - \frac{\tilde{C}_i^{(s)}}{T_i} \leq \mbb{E}\big[D_s(\hat{P}_i,P_i)\big] \leq c_i^{(s)} \sqrt{\frac{1}{2\pi T_i}}  +  \frac{C_i^{(s)}}{T_i}, 
\end{equation*}

where $C_i^{(s)}$ and $\tilde{C}_i^{(s)}$ are $P_i$-dependent constants defined in~\eqref{eq:Cis} and~\eqref{eq:CisTilde2} in Appendix~\ref{appendix:analysis_separation}. 
\end{lemma}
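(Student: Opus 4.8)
The plan is to rewrite the separation distance in terms of centered, standardized sums and then approximate the required expectations by those of Gaussians. Writing $W_{ij,T_i} = T_i(\hat{p}_{ij} - p_{ij})$ as in Section~\ref{sec:problem_setup}, I set $Y_{ij} \coloneqq 1 - \hat{p}_{ij}/p_{ij} = -W_{ij,T_i}/(p_{ij}T_i)$, so that $D_s(\hat{P}_i,P_i) = \max_{j\in[l]} Y_{ij}$. Each $Y_{ij}$ is centered with $\mbb{E}[Y_{ij}^2] = (1-p_{ij})/(p_{ij}T_i) = \rho_1(p_{ij})^2/T_i$, i.e.~standard deviation $\rho_1(p_{ij})/\sqrt{T_i}$. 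Since $\sum_{j} p_{ij} Y_{ij} = -T_i^{-1}\sum_j W_{ij,T_i} = 0$, the $p_{ij}$-weighted average of the $Y_{ij}$ vanishes, which gives the basic fact $\max_j Y_{ij} \geq 0$ that I will use for the upper bound.

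\textbf{Upper bound.} Using $\max_j Y_{ij} = (\max_j Y_{ij})^+ \leq \sum_{j=1}^l (Y_{ij})^+$ and taking expectations, it suffices to control $\mbb{E}[(Y_{ij})^+]$ for each $j$. I would standardize, writing $(Y_{ij})^+ = \tfrac{\rho_1(p_{ij})}{\sqrt{T_i}}(Y_{ij}/\sigma_{ij})^+$ with $\sigma_{ij} = \rho_1(p_{ij})/\sqrt{T_i}$, and compare $\mbb{E}[(Y_{ij}/\sigma_{ij})^+]$ to $\mbb{E}[Z^+] = 1/\sqrt{2\pi}$ for a standard normal $Z$. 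Because $x\mapsto x^+$ is $1$-Lipschitz, the normal approximation result~\citep[Thm.~3.2]{ross2011fundamentals} bounds this difference by the Wasserstein distance between the law of $Y_{ij}/\sigma_{ij}$ and $N(0,1)$, which is $\mc{O}(1/\sqrt{T_i})$. Multiplying by $\sigma_{ij}=\mc{O}(1/\sqrt{T_i})$ yields $\mbb{E}[(Y_{ij})^+] = \rho_1(p_{ij})/\sqrt{2\pi T_i} + \mc{O}(1/T_i)$, and summing over $j$ gives $c_i^{(s)}/\sqrt{2\pi T_i} + C_i^{(s)}/T_i$, where $C_i^{(s)}$ collects the per-coordinate remainders.

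\textbf{Lower bound.} For any nonempty $S\subset[l]$, let $A_S \coloneqq 1 - \hat{p}_{i,S}/p_{i,S}$ and $B_S \coloneqq 1 - \hat{p}_{i,S^c}/p_{i,S^c}$, where $\hat{p}_{i,S}=\sum_{j\in S}\hat{p}_{ij}$. Since $A_S = \sum_{j\in S}(p_{ij}/p_{i,S})\,Y_{ij}$ is a convex combination of $\{Y_{ij}\}_{j\in S}$, we have $A_S \leq \max_{j\in S} Y_{ij}$, and similarly $B_S \leq \max_{j\in S^c} Y_{ij}$; hence $\max_j Y_{ij} \geq \max(A_S, B_S)$. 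Both $A_S$ and $B_S$ are functions of the single variable $\hat{p}_{i,S}$ (as $\hat{p}_{i,S^c}=1-\hat{p}_{i,S}$), so with $q=p_{i,S}$ and $\tilde{X}$ the centered $\mathrm{Bin}(T_i,q)$ count one computes $A_S = -\tilde{X}/(T_i q)$ and $B_S = \tilde{X}/(T_i(1-q))$, whence $A_S - B_S = -\tilde{X}/(T_i\, q(1-q))$. As both are centered, $\mbb{E}[\max(A_S,B_S)] = \tfrac12\mbb{E}|A_S - B_S| = \tfrac{1}{2T_i q(1-q)}\,\mbb{E}|\tilde{X}|$. Applying~\citep[Thm.~3.2]{ross2011fundamentals} to the $1$-Lipschitz map $x\mapsto|x|$ gives $\mbb{E}|\tilde{X}| = \sqrt{T_i q(1-q)}\,(\sqrt{2/\pi} + \mc{O}(1/\sqrt{T_i}))$, so $\mbb{E}[\max(A_S,B_S)] = \rho_2(q)/\sqrt{2\pi T_i} - \mc{O}(1/T_i)$ after using $\rho_2(q)=1/\sqrt{q(1-q)}$. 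Maximizing over $S$ yields the claimed lower bound $\tilde{c}_i^{(s)}/\sqrt{2\pi T_i} - \tilde{C}_i^{(s)}/T_i$.

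\textbf{Main obstacle.} The routine algebra (the identities $\rho_2(q)=1/\sqrt{q(1-q)}$, the reduction of the two-cell max to $\tfrac12\mbb{E}|A_S-B_S|$, and the decomposition of $\max_j Y_{ij}$) is straightforward; the real work is making the two normal approximations quantitative with explicit $\mc{O}(1/T_i)$ remainders uniform in $j$ and $S$. The key leverage is that $(\cdot)^+$ and $|\cdot|$ are $1$-Lipschitz, so the Wasserstein-type bound of~\citep[Thm.~3.2]{ross2011fundamentals} applies directly and produces a $1/\sqrt{T_i}$-rate that, scaled by the $1/\sqrt{T_i}$ standard deviations, gives the stated $1/T_i$ corrections. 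Pinning down the explicit constants $C_i^{(s)}$ and $\tilde{C}_i^{(s)}$ (the references to Eqs.~\ref{eq:Cis} and~\ref{eq:CisTilde2}) from these Wasserstein bounds, including controlling the third-absolute-moment terms that enter Stein's bound for Bernoulli sums, is the most delicate bookkeeping step.
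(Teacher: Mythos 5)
Your proposal is correct and follows essentially the same route as the paper: the upper bound via $\mbb{E}[\max_j Y_{ij}] \leq \sum_j \mbb{E}[(Y_{ij})^+]$ with the normal approximation of~\citep[Thm.~3.2]{ross2011fundamentals} applied to the $1$-Lipschitz positive part, and the lower bound via bunching into $\{S, S^c\}$ (the paper's Lemma~\ref{lemma:separation_loss2}, which you reprove with the equivalent convex-combination argument) followed by the same normal approximation and a maximization over $S$. The only cosmetic difference is that the paper evaluates $\mbb{E}[\max(A_S,B_S)]$ by splitting on the sign event rather than via the identity $\max(a,b)=\tfrac{a+b}{2}+\tfrac{|a-b|}{2}$; both yield the same constants.
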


Lemma~\ref{lemma:separation1} gives us an interval that contains the true objective function we aim to track. To implement the adaptive scheme, we employ the {\bf approximate objective function} $\f(c_i^{(s)},T_i) \coloneqq c_i^{(s)}\sqrt{1/2\pi T_i}$.

In order to instantiate Algorithm~\ref{algo:optimistic_tracking1} for the separation distance, we require to derive high probability confidence intervals for the terms $\sqrt{(1-p_{ij})/p_{ij}}$ in the definition $(c_i^{(s)})_{i=1}^K$. We use the event $\mc{E}_1$ defined in Lemma~\ref{lemma:ell2_1} and prove the following result:
\begin{lemma}
\label{lemma:separation_loss1}
Let $P_i\in\Delta_l^{(\eta)}$ and the event $\mc{E}_1$, and the terms $\delta_t$ and $e_{ij,t}$ defined as in Lemma~\ref{lemma:ell2_1}. Define the terms $a_{i,t} \coloneqq \big(32\log(2/\delta_t)/\Tit\big)^{1/4}$ and $b_{i,t} \coloneqq \big(\frac{l\; a_{i,t}}{\eta}\big)\max\big\{1,\frac{a_{i,t}}{2\eta^{3/2}}\big\}$. 
Then, under the high probability event $\mc{E}_1$, we have 

\begin{equation*}
\sum_{j=1}^l \sqrt{ \frac{1}{p_{ij}} - 1} \leq \sum_{j=1}^l \sqrt{ \frac{1}{\hat{p}_{ij,t} - e_{ij,t}} - 1} \leq \sum_{j=1}^l \sqrt{ \frac{1}{p_{ij}} - 1} + b_{i,t}.
\end{equation*}

\end{lemma}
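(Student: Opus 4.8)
The plan is to establish both inequalities coordinatewise in $j$ and then sum over $j\in[l]$. Write $\psi(x)\coloneqq\sqrt{1/x-1}$, so that $\sqrt{1/p_{ij}-1}=\psi(p_{ij})$ and the plug-in term equals $\psi(\hat p_{ij,t}-e_{ij,t})$; note $\psi$ is strictly decreasing on $(0,1)$. Under the event $\mc{E}_1$ from Lemma~\ref{lemma:ell2_1} we have $\hat p_{ij,t}-e_{ij,t}\le p_{ij}\le \hat p_{ij,t}+e_{ij,t}$. Monotonicity then gives $\psi(\hat p_{ij,t}-e_{ij,t})\ge\psi(p_{ij})$ termwise, and summing over $j$ yields the left inequality at once; this uses nothing beyond $\mc{E}_1$.

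For the right inequality I would bound the per-coordinate gap $\psi(\hat p_{ij,t}-e_{ij,t})-\psi(p_{ij})$ in two stages. First, control the interior quantity $\frac{1}{\hat p_{ij,t}-e_{ij,t}}-\frac1{p_{ij}}$ exactly as in the proof of Lemma~\ref{lemma:kl_1}: under $\mc{E}_1$ the numerator $p_{ij}-(\hat p_{ij,t}-e_{ij,t})\le 2e_{ij,t}$, and once $\hat p_{ij,t}-e_{ij,t}$ is shown to stay above a fixed fraction of $p_{ij}$ the denominator $(\hat p_{ij,t}-e_{ij,t})\,p_{ij}$ is at least $p_{ij}^2/2$, so this difference is at most $4e_{ij,t}/p_{ij}^2$. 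Keeping $\hat p_{ij,t}-e_{ij,t}$ away from $0$ is exactly where the $\eta$-interior hypothesis and the relation between the two confidence radii enter: from the definitions in Lemma~\ref{lemma:ell2_1} and the present lemma one checks $2e_{ij,t}\le a_{i,t}^2$, and combined with $p_{ij}\ge\eta$ this holds whenever $a_{i,t}$ is small relative to $\eta$.

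Second, convert this additive perturbation of $1/p_{ij}-1$ into a perturbation of its square root. Writing $A=1/p_{ij}-1$ (which satisfies $A\ge 1-p_{ij}\ge\eta$) and $B=\frac{1}{\hat p_{ij,t}-e_{ij,t}}-\frac1{p_{ij}}\ge 0$, I would use the two elementary inequalities $\sqrt{A+B}-\sqrt{A}\le\sqrt{B}$ and $\sqrt{A+B}-\sqrt{A}\le B/(2\sqrt{A})$, both valid for all nonnegative $A,B$. Substituting $B\le 4e_{ij,t}/p_{ij}^2$ together with $p_{ij}\ge\eta$, $1-p_{ij}\ge\eta$, and $e_{ij,t}\le a_{i,t}^2/2$ turns the first bound into a term of order $a_{i,t}/\eta$ and the second into a term of order $a_{i,t}^2/\eta^{5/2}$; the extra $\eta^{-1/2}$ in the refined term is produced precisely by the $1/(2\sqrt{A})$ factor and the bound $1-p_{ij}\ge\eta$. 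Selecting the applicable bound according to whether $a_{i,t}/(2\eta^{3/2})$ is below or above one, and summing the resulting per-coordinate estimate over the $l$ coordinates, reproduces $b_{i,t}=\frac{l\,a_{i,t}}{\eta}\max\{1,\,a_{i,t}/2\eta^{3/2}\}$.

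The main obstacle I anticipate is the bookkeeping needed to land on this clean $\max$ form with the correct powers of $\eta$. Concretely, one must (i) verify uniformly in $j$ that $\hat p_{ij,t}-e_{ij,t}$ is bounded below by a constant multiple of $p_{ij}$, using only $p_{ij}\ge\eta$ and the smallness of $a_{i,t}$, so that both $1/(\hat p_{ij,t}-e_{ij,t})$ and its square root are well behaved; and (ii) match the two square-root inequalities to the two arguments of the max while relaxing the intermediate $\eta$-exponents (e.g.\ $\eta^{-3/4}$ to $\eta^{-1}$ and $\eta^{-2}$ to $\eta^{-5/2}$) using $\eta<1/2$ and absorbing the numerical constants into the stated bound. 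Everything else is a routine substitution of the explicit forms of $e_{ij,t}$ and $a_{i,t}$.
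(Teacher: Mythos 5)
Your proposal is correct and follows essentially the same route as the paper: the left inequality via monotonicity of $x\mapsto\sqrt{1/x-1}$ under $\mc{E}_1$, and the right inequality by first bounding $\tfrac{1}{\hat p_{ij,t}-e_{ij,t}}-\tfrac{1}{p_{ij}}$ exactly as in Lemma~\ref{lemma:kl_1} (using the $\eta$-interior assumption and the implicit condition $\hat p_{ij,t}\ge 7e_{ij,t}/2$ to keep $\hat p_{ij,t}-e_{ij,t}$ bounded away from zero) and then applying the two square-root perturbation bounds $\sqrt{A+B}\le\sqrt{A}+\sqrt{B}$ and $\sqrt{A+B}\le\sqrt{A}+B/(2\sqrt{A})$ with $A=1/p_{ij}-1\ge\eta$, summed over $j\in[l]$. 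The only cosmetic difference is that the paper's appendix combines the two per-coordinate bounds with a $\min$ while the lemma statement (which you match) uses a $\max$; since both bounds are individually valid, either combination yields a correct upper bound.
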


Using the concentration result of Lemma~\ref{lemma:separation_loss1}, we can now implement Algorithm~\ref{algo:optimistic_tracking1} with the upper-bound $u_{i,t}^{(s)} = \big(\sum_{j=1}^{l}\sqrt{\frac{1}{\hat{p}_{ij,t} - e_{ij,t}}-1}\big)/(\sqrt{2\pi})$, if $\hat{p}_{ij,t} \geq 7e_{ij,t}/2$, and $u_{i,t}^{(s)} = +\infty$, otherwise. This will give us an adaptive allocation scheme for the separation distance, which we shall refer to it as $\mc{A}_s$. Finally, We prove a regret bound for $\mc{A}_s$ (proof in Appendix~\ref{proof_separation_theorem}).


\begin{theorem}
\label{theorem:separation}
Let $P_i \in \Delta_l^{(\eta)}$ and the adaptive scheme $\mc{A}_s$ is implemented with $\delta = \eta/n$. Then, for large enough $n$ (the exact condition is given by Eq.~\ref{eq:n_large_enough_sep} in Appendix~\ref{proof_separation_theorem}), we have

\begin{small}
\begin{align}
\mc{R}_n \lp \mc{A}_s, D_s \rp  \leq \frac{\max_{i\in[K]}2 \big(c_i^{(s)} - \tilde{c}_i^{(s)}\big)}{\sqrt{\lambda_{\min}n}} + \frac{\sqrt{E}}{\lambda_{\min} n} + \varepsilon, 
\label{eq:sep_regret3}
\end{align}
\end{small}

where $E \coloneqq A\tilde{e}_n/B$ for the terms $A$, $B$, and $\tilde{e}_n$ introduced in Lemma~\ref{lemma:tracking1}. The exact expressions for $E$ and the higher-order term $\varepsilon$ are given by Eq.~\ref{eq:sep_def_E} and Eqs.~\ref{eq:sep_regret0} and~\ref{eq:sep_regret00} in Appendix~\ref{proof_separation_theorem}, respectively. 
\end{theorem}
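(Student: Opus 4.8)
The plan is to instantiate the approximate-regret decomposition of Proposition~\ref{prop:approximate_regret} with the regular objective $\f(\cisep, T) = \cisep/\sqrt{2\pi T}$ (which lies in $\mc{F}$: it is linear and increasing in $c$, and convex and decreasing in $T$), and then to control the resulting tracking part with Lemma~\ref{lemma:tracking1}. Writing $\Tins = \lambda_i n$ for the approx-oracle allocation that equalizes $\f(\cisep, \cdot)$, Proposition~\ref{prop:approximate_regret} gives
\begin{equation*}
\mc{R}_n(\Algsep, \Dsep) \leq \underbrace{\big(\mc{L}_n(\Algsep, \Dsep) - \f(\cisep, \Tins)\big)}_{\text{(i) tracking}} + \underbrace{2\max_{k\in[K]}\big|\gamma_k(\Tins[k]) - \f(\cisep[k], \Tins[k])\big|}_{\text{(ii) approximation error}}.
\end{equation*}
First I would bound term (ii) with Lemma~\ref{lemma:separation1}: at $T=\Tins[k]$ the true objective $\gamma_k$ is sandwiched between $\tilde{c}_k^{(s)}/\sqrt{2\pi\Tins[k]} - \tilde{C}_k^{(s)}/\Tins[k]$ and $\cisep[k]/\sqrt{2\pi\Tins[k]} + C_k^{(s)}/\Tins[k]$, while $\f(\cisep[k], \Tins[k]) = \cisep[k]/\sqrt{2\pi\Tins[k]}$, so $|\gamma_k(\Tins[k]) - \f(\cisep[k],\Tins[k])| \leq (\cisep[k] - \tilde{c}_k^{(s)})/\sqrt{2\pi\Tins[k]} + \mc{O}(1/\Tins[k])$. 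Using $\Tins[k] \geq \lambda_{\min} n$ then produces the leading term $\tfrac{2\max_i(\cisep - \tilde{c}_i^{(s)})}{\sqrt{\lambda_{\min} n}}$, while the $C_k^{(s)}, \tilde{C}_k^{(s)}$ contributions are $\mc{O}(1/n)$ and are absorbed into $\varepsilon$.

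For term (i) I would first fix the high-probability event. On $\mc{E}_1$ (Lemma~\ref{lemma:ell2_1}), and once $n$ is large enough that $\hat{p}_{ij,t} \geq 7e_{ij,t}/2$, Lemma~\ref{lemma:separation_loss1} certifies that $\uisep$ is a valid optimistic upper bound for the parameter of $\f$, with confidence radius controlled by $b_{i,t}$; hence Lemma~\ref{lemma:tracking1} applies (here $B>0$ since $\f$ is strictly decreasing in $T$, and $\Tins>1$ for large $n$) and yields $-2E \leq T_i - \Tins \leq 2(K-1)E$ with $E = A\tilde{e}_n/B$. Since $\Dsep \leq 1$ pointwise, the complement $\mc{E}_1^c$ contributes at most $\mbb{P}(\mc{E}_1^c) \leq \delta = \eta/n = \mc{O}(1/n)$ to $\mc{L}_n(\Algsep, \Dsep)$, which goes into $\varepsilon$. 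On $\mc{E}_1$, the upper bound of Lemma~\ref{lemma:separation1} applied conditionally on the sample count gives $\mbb{E}[\Dsep(\hat{P}_i, P_i)\mid T_i] \leq \f(\cisep, T_i) + C_i^{(s)}/T_i$, and by monotonicity of $\f(\cisep, \cdot)$ together with $T_i \geq \Tins - 2E$ we get $\f(\cisep, T_i) \leq \f(\cisep, \Tins - 2E)$; a perturbation estimate of $\f(\cisep, \cdot)$ around $\Tins$, driven by the allocation deviation $E$, then delivers the stated $\tfrac{\sqrt{E}}{\lambda_{\min} n}$ term (the residual $C_i^{(s)}/(\Tins - 2E) = \mc{O}(1/n)$ again joining $\varepsilon$).

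The main obstacle is conceptual rather than computational. Unlike the $\ell_1$ case, where the upper and lower objective bounds differed only by a common multiplicative constant that cancels in the regret \emph{difference}, here the two bounds of Lemma~\ref{lemma:separation1} carry genuinely different leading coefficients $\tilde{c}_i^{(s)} \neq \cisep$ (they coincide only when $l=2$). This mismatch cannot be removed by better tracking and is precisely what fixes the irreducible $n^{-1/2}$ rate; the delicate point is to pair the $\cisep$ (upper) bound with the adaptive loss and the $\tilde{c}_i^{(s)}$ (lower) bound with the oracle, so that the nonnegative gap $\cisep - \tilde{c}_i^{(s)}$ enters with the correct sign and constant. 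The accompanying technical hurdle is justifying that $\uisep$ is a valid optimistic bound via the concentration of the nonlinear map $p \mapsto \sqrt{1/p - 1}$ (Lemma~\ref{lemma:separation_loss1}), which forces the assumptions $P_i \in \Delta_l^{(\eta)}$ and the truncation $\hat{p}_{ij,t} \geq 7e_{ij,t}/2$ (handled by setting $\uisep = +\infty$ otherwise) and feeds the comparatively wide radius $\tilde{e}_n = \mc{O}((\log n / n)^{1/4})$ into $E$; checking that the ``large enough $n$'' condition of Eq.~\ref{eq:n_large_enough_sep} makes $2E \ll \Tins$, so that the perturbation expansion in term (i) is legitimate, is the remaining bookkeeping.
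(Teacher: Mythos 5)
Your decomposition via Proposition~\ref{prop:approximate_regret} and your treatment of the approximation-error term (ii) match the paper: the remainder at $\Tins$ is sandwiched using Lemma~\ref{lemma:separation1}, the coefficient mismatch $\cisep - \tilde{c}_i^{(s)}$ produces the irreducible $n^{-1/2}$ leading term, and the $C_i^{(s)},\tilde{C}_i^{(s)}$ contributions fold into $\varepsilon$. The gap is in term (i), at the step ``the upper bound of Lemma~\ref{lemma:separation1} applied conditionally on the sample count gives $\mbb{E}[\Dsep(\hat{P}_i,P_i)\mid T_i] \leq \f(\cisep, T_i) + C_i^{(s)}/T_i$.'' Lemma~\ref{lemma:separation1} is proved for an empirical distribution built from a \emph{deterministic} number of i.i.d.\ draws; here $T_i$ is a stopping time of the adaptive process, and conditioned on $\{T_i = t\}$ the $t$ observed samples are no longer an i.i.d.\ sample from $P_i$ (whether the algorithm keeps pulling arm $i$ depends on what it has already seen from arm $i$). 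This is precisely the failure mode the paper flags in its ``Technical tools'' discussion, and it is why the proof in Appendix~\ref{proof_separation_theorem} never plugs $T_i$ into the lemma: it instead bounds the random-length quantity $\max_j \sum_{t=1}^{T_i}(\cdot)$ by the fixed-length quantity $\max_j\sum_{t=1}^{\tau_{0,i}}(\cdot)$ (to which Lemma~\ref{lemma:separation1} legitimately applies) plus the fluctuation of the at most $\tau_{1,i}-\tau_{0,i}\leq 2E$ surplus centered summands, which contributes $c_i^{(s)}\sqrt{2(\tau_{1,i}-\tau_{0,i})/\tau_{0,i}^2} = \mc{O}\lp\sqrt{E}/(\lambda_{\min}n)\rp$.

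A symptom that your route has missed the real source of error: even granting the conditional bound, your ``perturbation of $\f(\cisep,\cdot)$ around $\Tins$'' yields $\f(\cisep,\Tins-2E)-\f(\cisep,\Tins) = \mc{O}\lp E/(\lambda_{\min}n)^{3/2}\rp$, which for $E=\tilde{\mc{O}}(n^{3/4})$ is $\tilde{\mc{O}}(n^{-3/4})$ --- strictly smaller than the $\sqrt{E}/(\lambda_{\min}n)=\tilde{\mc{O}}(n^{-5/8})$ term you are trying to reproduce. The $\sqrt{E}$ in~\eqref{eq:sep_regret3} is not a Taylor term in the allocation deviation; it is the square-root (subgaussian) scaling of the sum of the $\mc{O}(E)$ extra samples that the adaptive scheme may take beyond $\tau_{0,i}$. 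To repair the argument, replace the conditioning on $T_i$ with the fixed-length-plus-surplus device (the same mechanism as Lemma~\ref{lemma:f_div_moments} for $f$-divergences); the rest of your outline, including the use of Lemmas~\ref{lemma:separation_loss1} and~\ref{lemma:tracking1} to certify optimism and control $|T_i-\Tins|$, is sound.
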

As shown in~\eqref{eq:sep_def_E}, the term $E$ in~\eqref{eq:sep_regret3} is of $\tilde{\mc{O}}(n^{1/2})$ or $\tilde{\mc{O}}(n^{3/4})$ depending on whether $n\geq$ or $<N_0$, where $N_0$ is defined by~\eqref{eq:sep_def_E}.

\begin{remark}
The first term on the RHS of~\eqref{eq:sep_regret3} is the approximation error and the second one is the regret w.r.t.~the approx-oracle allocation (see Proposition~\ref{prop:approximate_regret} in Appendix~\ref{appendix:approximate} for precise decomposition). For $l \geq 3$, the approximation error term dominates and we have $\mc{R}_n(\Algsep, \Dsep) = \mc{\tilde{O}}(n^{-1/2})$. For $l=2$, however, the first term is zero, since $\tilde{c}_i^{(s)}=c_i^{(s)}$. In this case, we have $\mc{R}_n(\Algsep, \Dsep) = \mc{\tilde{O}}(n^{-3/4})$ or $\mc{\tilde{O}}(n^{-5/8})$ depending on whether $n\geq$ or $< N_0$. 
\end{remark}

\section{Lower Bound}
\label{sec:lower}
Lemma~\ref{lemma:tracking1} provided a general high probability bound on the deviation of the adaptive allocation $(T_i)_{i=1}^K$ from the approx-oracle allocation $(\Tins)_{i=1}^K$. In Sec.~\ref{sec:algorithms}, we observed that when specialized to the objective functions corresponding to $\Dltwo$, $\Dlone$ and $\Dsep$, we have $|T_i - \Tins| = \mc{O}\lp \sqrt{n}\rp$. A natural question to ask is whether there exists any other adaptive scheme which can achieve smaller deviation from the approx-oracle allocation than this. Our next result shows that this is not the case by deriving a lower bound on the expected deviation of any allocation scheme $\mc{A}$. 

To derive the lower bound, we consider a specific class of problems with two arms ($K=2$) and Bernoulli distributions~($l=2$), and objective functions of the form $\f(c_i, T_i) = c_i/T_i^{\alpha}$ for some $\alpha>0$. For some $p_0 \in (1/2, 1)$ and $\epsilon>0$, we define two Bernoulli distributions $P_1 \sim \Ber{p_0}$ and $P_2 \sim \Ber{p_0-\epsilon}$.
We consider two problem instances $\mc{P}_1$ and $\mc{P}_2$ with $K=2$ and distributions $P_1$ and $P_2$, but with their orders swapped: that is, $\mc{P}_1 = (P_1, P_2)$ and $\mc{P}_2 = (P_2, P_1)$. Finally, we introduce the notation $\cfunc{p}$ to represent the distribution dependent constant in the objective function $\f$ corresponding to a $\Ber{p}$ distribution. We now state the lower bound result. 

\begin{theorem}
\label{theorem:lower}
For some $p_0 \in (1/2,3/4]$ and $0<\epsilon>p_0-1/2$, consider two tracking problems $\mc{P}_1 = (P_1, P_2)$ and $\mc{P}_2 = (P_2, P_1)$, with $P_1 \sim \Ber{p_0}$ and $P_2 \sim \Ber{p_0-\epsilon}$ and objective function $\f(c, T) = c/T^{\alpha}$ for $\alpha>0$ where the constant $c = \cfunc{p}$ for $\Ber{p}$ distributions. Finally, introduce the notation $\tau = (n/2)( |\cfunc{p_0}^{1/\alpha} - \cfunc{p_0-\epsilon}^{1/\alpha}|)/( | \cfunc{p_0}^{1/\alpha} + \cfunc{p_0-\epsilon}^{1/\alpha}|)$. 
If $(T_i)_{i=1}^2$ denotes the allocation of any allocation scheme $\mc{A}$, we have 
\begin{align}
& \max_{\mc{P}_1, \mc{P}_2} \; \max_{i=1,2} \; \mbb{E} \lb | T_i - \Tins | \rb \geq  \sup_{ 0<\epsilon <p_0-1/2} \Gamma_\epsilon(\cfuncs, p_0), \;\;  \nonumber \\
& \text{where} \;\;
\Gamma_\epsilon(\cfuncs,  p_0) =  \frac{\tau}{2}\lp 1 - \epsilon \sqrt{n/(1-p_0)} \rp. 
 \nonumber 
\end{align}
\end{theorem}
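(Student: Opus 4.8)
The plan is to use a two-point (Le~Cam style) change-of-measure argument that exploits the symmetry between $\mc{P}_1$ and $\mc{P}_2$. First I would compute the two oracle allocations explicitly. Since $\f(c,T)=c/T^\alpha$, equalizing $\f(c_1,T_1)=\f(c_2,T_2)$ subject to $T_1+T_2=n$ gives $\Tins[1]=n\,c_1^{1/\alpha}/(c_1^{1/\alpha}+c_2^{1/\alpha})$. Substituting the two instances, the oracle number of pulls to arm~$1$ equals $n/2+\tau$ under $\mc{P}_1$ and $n/2-\tau$ under $\mc{P}_2$ (up to the sign of $\cfunc{p_0}^{1/\alpha}-\cfunc{p_0-\epsilon}^{1/\alpha}$). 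Thus the two targets are symmetric about $n/2$ and separated by exactly $2\tau$, which is the geometric fact driving the lower bound: no allocation can be close to both simultaneously.

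Next I fix any allocation scheme $\mc{A}$ and let $T_1$ be the (history-measurable) number of pulls to arm~$1$; write $\mu_1,\mu_2$ for the laws of the length-$n$ observation history under $\mc{P}_1,\mc{P}_2$. Introduce the event $\mc{B}=\{T_1\ge n/2\}$. On $\mc{B}^c$ the arm-$1$ error under $\mc{P}_1$ is at least $\tau$, and on $\mc{B}$ the arm-$1$ error under $\mc{P}_2$ is at least $\tau$, so $\mbb{E}_{\mc{P}_1}\lb|T_1-\Tins[1]|\rb\ge\tau\,\mu_1(\mc{B}^c)$ and $\mbb{E}_{\mc{P}_2}\lb|T_1-\Tins[1]|\rb\ge\tau\,\mu_2(\mc{B})$. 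Bounding the max by the average and using $\mu_1(\mc{B}^c)+\mu_2(\mc{B})=1-\lp\mu_1(\mc{B})-\mu_2(\mc{B})\rp\ge 1-\Dtv(\mu_1,\mu_2)$ yields $\max_{\mc{P}_1,\mc{P}_2}\max_{i}\mbb{E}\lb|T_i-\Tins|\rb\ge\frac{\tau}{2}\lp 1-\Dtv(\mu_1,\mu_2)\rp$.

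It then remains to show $\Dtv(\mu_1,\mu_2)\le\epsilon\sqrt{n/(1-p_0)}$. For this I would invoke the standard divergence-decomposition identity for bandits, $\Dkl(\mu_1\|\mu_2)=\sum_{i=1}^2\mbb{E}_{\mu_1}[T_i]\,\Dkl(P_i^{(1)}\|P_i^{(2)})$, where $P_i^{(1)},P_i^{(2)}$ denote the arm-$i$ laws under the two instances; both per-pull divergences are Bernoulli KLs between $\Ber{p_0}$ and $\Ber{p_0-\epsilon}$ (forward for one arm, reverse for the other). Using $\Dkl\le\Dchi$ together with the constraints $p_0\in(1/2,3/4]$ and $\epsilon<p_0-1/2$ (which keep both parameters in $(1/2,p_0]$, bounded away from the boundary), each Bernoulli KL is at most $2\epsilon^2/(1-p_0)$. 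Since $\mbb{E}_{\mu_1}[T_1]+\mbb{E}_{\mu_1}[T_2]=n$, this gives $\Dkl(\mu_1\|\mu_2)\le 2n\epsilon^2/(1-p_0)$, and Pinsker's inequality $\Dtv\le\sqrt{\Dkl/2}$ yields the claim. Plugging back proves $\max\ge\frac{\tau}{2}\lp 1-\epsilon\sqrt{n/(1-p_0)}\rp=\Gamma_\epsilon(\cfuncs,p_0)$, and taking the supremum over admissible $\epsilon$ finishes the proof.

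The main obstacle is the divergence-decomposition step: because the allocation is adaptive, $T_1$ and $T_2$ are random, data-dependent quantities, so one cannot simply multiply a fixed sample count by a per-pull KL. The identity must instead be justified through the chain rule for KL along the filtration generated by the $(\text{arm},\text{observation})$ sequence, carefully accounting for the algorithm's history-dependent choices; this is the one place where adaptivity genuinely enters. Everything else—the oracle computation, the symmetric placement about $n/2$, and the Le~Cam combinatorics—is routine once the decomposition is in hand.
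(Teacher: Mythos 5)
Your proposal is correct and follows essentially the same route as the paper: the same swapped two-instance construction, the same event $\{T_1 \lessgtr n/2\}$ with the oracle allocations placed symmetrically about $n/2$ at distance $\tau$, the same change-of-measure/divergence decomposition (the paper invokes Lemma~1 of \cite{kaufmann2017learning}, which is exactly the adaptive chain-rule identity you flag as the key technical step), and the same Pinsker-plus-$\chi^2$ bound giving $|\mbb{P}_1(\mc{E})-\mbb{P}_2(\mc{E})|\leq \epsilon\sqrt{n/(1-p_0)}$. The only cosmetic difference is that you apply Pinsker to the full KL to get a total-variation bound and average the two error probabilities, whereas the paper works with the binary KL of the event and argues that at least one of the two probabilities exceeds $(1-\epsilon\sqrt{n/(1-p_0)})/2$; both yield the identical constant.
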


As an immediate corollary of Theorem~\ref{theorem:lower}, we can observe that the deviation of the optimistic tracking scheme from the approx-oracle for $\Dltwo, \Dlone$ and $\Dsep$ cannot be improved upon by any adaptive scheme. 
\begin{corollary}
\label{corollary:lower} 
For $p_0=3/4$, $\epsilon = 1/(2\sqrt{n})$ and the $\cfuncs$ arising in the study of $\Dltwo, \Dlone$ and $\Dsep$, we have $\Gamma_\epsilon(\cfuncs, p_0) = \Omega(\sqrt{n})$. 
\end{corollary}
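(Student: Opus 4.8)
The plan is to substitute $p_0 = 3/4$ and $\epsilon = 1/(2\sqrt n)$ into the expression $\Gamma_\epsilon(\cfuncs, p_0) = \frac{\tau}{2}\lp 1 - \epsilon\sqrt{n/(1-p_0)}\rp$ from Theorem~\ref{theorem:lower} and show that the product is of order $\sqrt n$. Two quantities must be controlled: the prefactor $\tau$, and the correction factor $1 - \epsilon\sqrt{n/(1-p_0)}$. The first step is to read off the constant $\cfunc{p}$ and the exponent $\alpha$ for each of the three distances from Sections~\ref{subsec:ell_2},~\ref{subsec: ell_1}, and~\ref{subsec:separation_distance}, specialized to $\Ber{p}$ (i.e.~$l=2$): for $\Dltwo$ we have $\alpha = 1$ and $\cfunc{p} = 2p(1-p)$; for $\Dlone$ we have $\alpha = 1/2$ and $\cfunc{p} = 2\sqrt{p(1-p)}$; and for $\Dsep$ we have $\alpha = 1/2$ and $\cfunc{p} = \rho_1(p) + \rho_1(1-p) = 1/\sqrt{p(1-p)}$.

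The second step is a simplification that treats all three cases uniformly. Writing $v(p) \coloneqq p(1-p)$, the raised quantities are $\cfunc{p}^{1/\alpha} = 2v(p)$ for $\Dltwo$, $\cfunc{p}^{1/\alpha} = 4v(p)$ for $\Dlone$, and $\cfunc{p}^{1/\alpha} = 1/v(p)$ for $\Dsep$. Since $\tau$ depends on these only through the ratio $|a - b|/(a+b)$ with $a = \cfunc{p_0}^{1/\alpha}$ and $b = \cfunc{p_0-\epsilon}^{1/\alpha}$, the positive numerical prefactors cancel, and the identity $|1/a - 1/b|/(1/a + 1/b) = |a-b|/(a+b)$ makes the reciprocal in the $\Dsep$ case disappear as well. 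Hence for all three distances
\[
\tau = \frac{n}{2}\cdot\frac{|v(p_0) - v(p_0-\epsilon)|}{v(p_0) + v(p_0-\epsilon)}.
\]

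The final step estimates this common expression. By the mean value theorem the numerator equals $|v'(\xi)|\,\epsilon$ for some $\xi \in (p_0 - \epsilon, p_0)$, with $v'(p) = 1 - 2p$; the choice $p_0 = 3/4$ (bounded away from $1/2$) keeps $|v'(\xi)|$ bounded below by a positive constant for small $\epsilon$, so the numerator is $\Theta(\epsilon)$, while the denominator is $\Theta(1)$ since both terms are near $v(3/4) = 3/16$. Thus $\tau = \Theta(n\epsilon)$, and substituting $\epsilon = 1/(2\sqrt n)$ gives $\tau = \Theta(\sqrt n)$. It then remains to verify that the correction factor $1 - \epsilon\sqrt{n/(1-p_0)}$ stays bounded below by a positive constant for this choice of $\epsilon$, so that $\Gamma_\epsilon(\cfuncs, p_0) = \Omega(\sqrt n)$.

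I expect the main obstacle to be the tension in the choice of $\epsilon$: enlarging $\epsilon$ increases $\tau \sim n\epsilon$ but shrinks the correction factor $1 - \Theta(\epsilon\sqrt n)$, and the scaling $\epsilon \asymp 1/\sqrt n$ is precisely the balance point at which $\tau$ grows like $\sqrt n$ while the factor remains bounded away from zero. A secondary subtlety worth flagging is the role of $p_0 = 3/4$: at $p_0 = 1/2$ one would have $v'(1/2) = 0$, collapsing the numerator to $\Theta(\epsilon^2)$ and $\tau$ to $\Theta(n\epsilon^2) = \Theta(1)$, which would not suffice; choosing $p_0$ away from $1/2$ is exactly what makes the lower bound nontrivial.
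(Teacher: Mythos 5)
Your overall strategy matches the paper's: read off $\cfunc{p}$ and $\alpha$ for each distance, show $\tau=\Theta(n\epsilon)=\Theta(\sqrt n)$, and check that the factor $1-\epsilon\sqrt{n/(1-p_0)}$ is bounded away from zero. Your constants are all correct (for $\Dsep$ on $\Ber{p}$, $\rho_1(p)+\rho_1(1-p)=1/\sqrt{p(1-p)}$, as you say), and your unification of the three cases through $v(p)=p(1-p)$ and the identity $|1/a-1/b|/(1/a+1/b)=|a-b|/(a+b)$ is cleaner than the paper's proof, which handles $\Dltwo$, $\Dlone$, and $\Dsep$ by three separate calculations.

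However, the one step you defer --- ``it remains to verify that the correction factor stays bounded below by a positive constant'' --- is exactly where the argument breaks with the parameters as stated. With $p_0=3/4$ and $\epsilon=1/(2\sqrt n)$ you get $\epsilon\sqrt{n/(1-p_0)}=\frac{1}{2\sqrt n}\cdot 2\sqrt n=1$, so the factor is identically zero and $\Gamma_\epsilon(\cfuncs,p_0)=0$, not $\Omega(\sqrt n)$. The paper's own proof quietly uses $\epsilon=1/(4\sqrt n)$, which gives $1-\epsilon\sqrt{n/(1-p_0)}=1/2$ and hence $\Gamma_\epsilon=\tau/4=\Omega(\sqrt n)$; the corollary statement's $1/(2\sqrt n)$ is evidently a typo. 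Since you explicitly identified this correction factor as the ``main obstacle'' and the balance point of the whole construction, you should have carried out the one-line substitution: doing so would have revealed that the stated constant sits exactly at the degenerate boundary and must be shrunk (any $\epsilon=c/\sqrt n$ with $c<1/2$ works, and $\tau$ remains $\Theta(\sqrt n)$). As written, your proof is incomplete at precisely the point where the claim, taken literally, fails.
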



The details of the proof of Theorem~\ref{theorem:lower} are provided in Appendix~\ref{appendix:lower}. The proof proceeds by studying the event $\mc{E} = \{T_1 < n/2\}$ under the two probability measures corresponding to $\mc{P}_1$ and $\mc{P}_2$, which we denote by $\mbb{P}_1$ and $\mbb{P}_2$ respectively. By a usual change of measure argument~\cite[Lemma~1]{kaufmann2017learning} and an application of Pinsker's inequality, we obtain an upper bound on the quantity $|\mbb{P}_1(\mc{E}) - \mbb{P}_2 \lp \mc{E} \rp|$. This allows us to conclude that for any allocation scheme $\mc{A}$, the probability that $\max_{i=1,2} |T_i - \Tins|$ is greater than $|\Tins - n/2|$, is strictly bounded away from $0$, which implies the result of Theorem~\ref{theorem:lower}. Finally, the result stated in Corollary~\ref{corollary:lower} follows by setting $\epsilon = \mc{O}\lp 1/\sqrt{n}\rp$ and some loss specific calculations.

\section{Experiments}
\label{sec:experiments}
\tbf{Setup.} We study the performance of the proposed adaptive schemes on a problem with $K=2$, and $l=10$.  We set $P_1$ as the uniform distribution in $\Delta_l$ and $P_2 = P_\epsilon$ for $\epsilon \in \{0.1, 0.2, \ldots, 0.9\}$, where $P_\epsilon = (p_j)_{j=1}^l$ with $p_1 = \epsilon$ and $p_j= (1-\epsilon)/(l-1)$ for $1<j\leq l$.
To compare the performance of the adaptive schemes, we used three benchmarks: \\
\tbf{(i)} Uniform allocation, in which each arm is allocated $n/K$ samples. Note that the uniform allocation is the oracle scheme for $D_{\chi^2}$ (see Appendix~\ref{appendix:chi_squared}). \\
\tbf{(ii)} Greedy allocation, in which the arms are pulled by plugging in the current empirical estimate of $(P_i)_{i=1}^K$ in the objective function. \\
\tbf{(iii)} Forced Exploration, in which the arms are pulled according to the greedy scheme, while also ensuring that at any time $t$, each arm is pulled at least $\sqrt{t}$ times. This scheme is motivated by the strategy of~\cite{antos2008active}. For every value of $\epsilon$, we ran $500$ trials of all the allocation schemes with the budget $n=5000$. We focus our experiments on the $\ell_2$, $\ell_1$ and separation distances, since we observed no significant difference in the perormance of the different schemes for KL-divergece, as predicted in Remark~\ref{remark:kl}. To compare the performance of the allocation schemes, we used the term $\f(c_i, T_i) - \f(c_i, \Tins)$. \\
\tbf{Observations.} 
We plot the $\f(c_i, T_i) - \f(c_i, \Tins)$ values for the different allocation schemes and loss functions in Figs.~\ref{fig:fig1},~\ref{fig:fig2} and~\ref{fig:fig3}. 
As we can see from Fig.~\ref{fig:fig1}, the adaptive scheme outperforms the uniform allocation for the three distance metrics for both $\epsilon=0.5$ and $\epsilon=0.9$. Note that as $\epsilon$ increases, the optimal allocation get more skewed, and hence the gap in performance between uniform and adaptive also increases. 
The greedy and forced exploration schemes, both perform comparably to our proposed adaptive scheme for $\epsilon=0.5$, although their resulting allocations have higher variability especially for $\ell_2$ and $\ell_1$ distances. For the case of $\epsilon=0.9$ however, the adaptive scheme performs significantly better than both greedy and forced exploration methods for $\ell_2$ and $\ell_1$ distances, and result in a lower variance solution for separation distance.

\begin{figure}[!h]
  \begin{subfigure}[t]{.5\textwidth}
    \centering
    \includegraphics[width=\linewidth, height=4.1cm]{./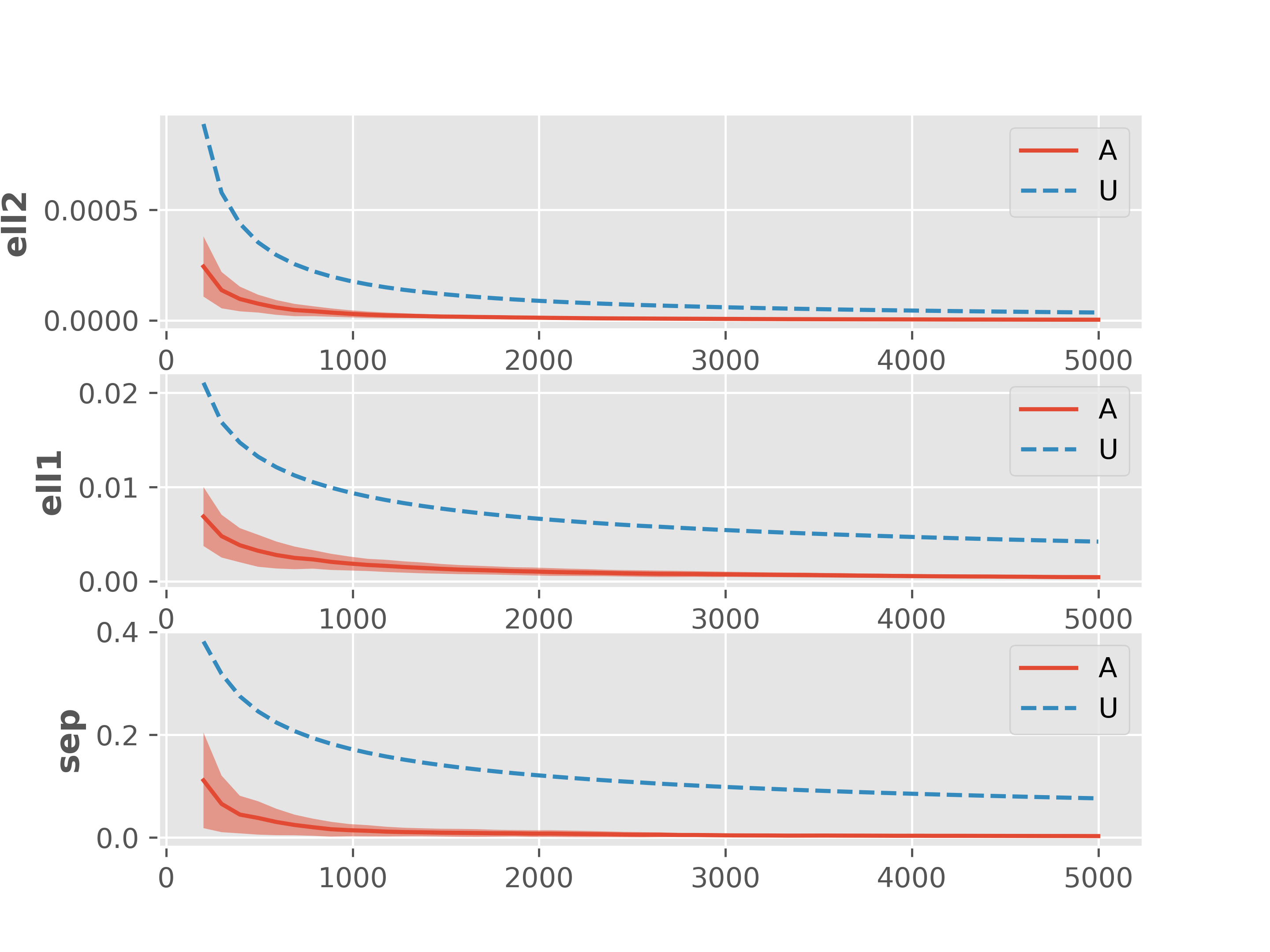}
  \end{subfigure}
  \hfill
  \begin{subfigure}[t]{.5\textwidth}
    \centering
    \includegraphics[width=\linewidth, height=4.1cm]{./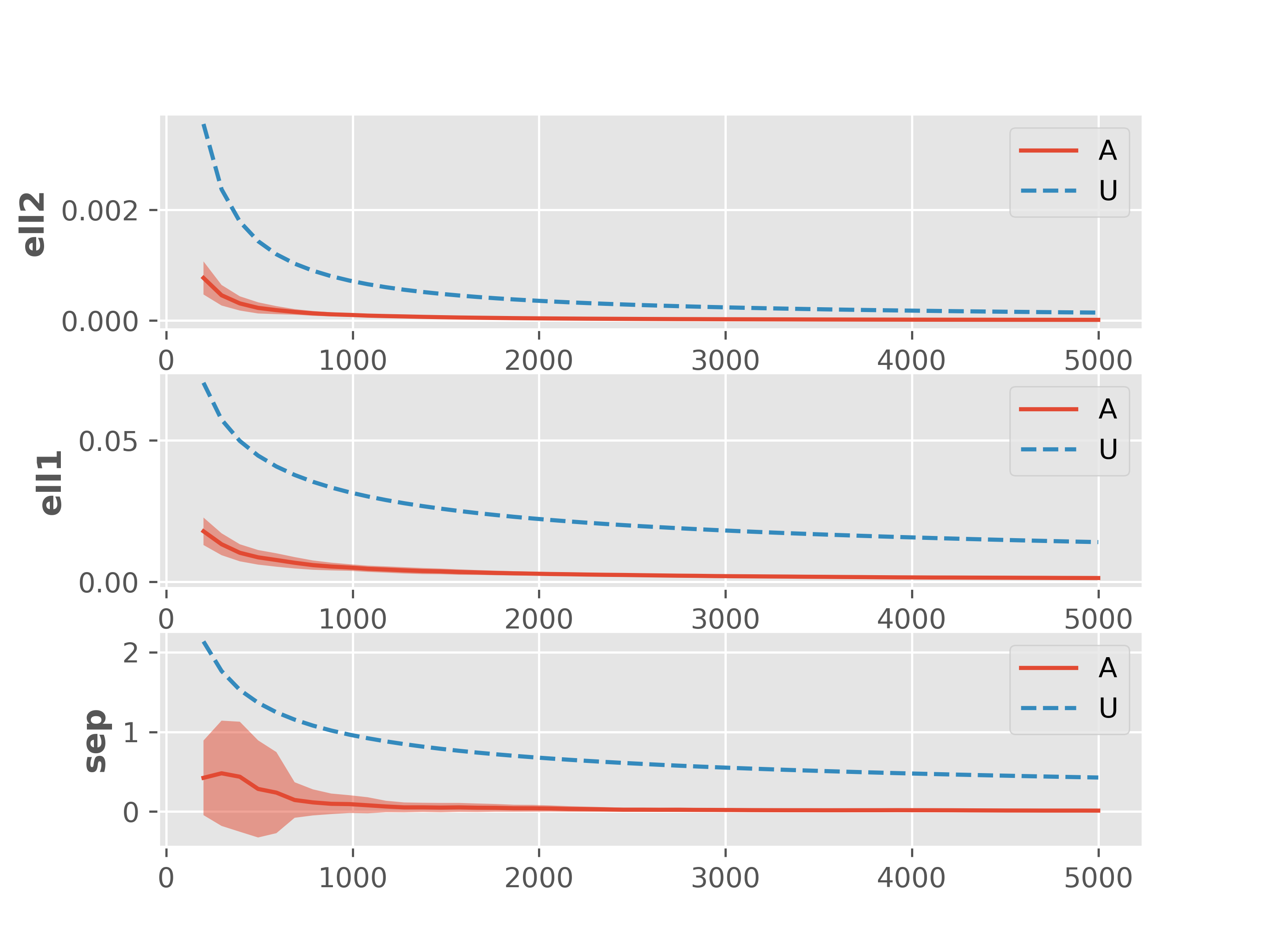}
  \end{subfigure}
  \caption{Plot of $\f(c_i, T_i) - \f(c_i, \Tins)$~vs.~$n$  for Adaptive (A) and Uniform (U) 
  sampling schemes, for $\epsilon=0.5$ (left) and $\epsilon=0.9$~(right).}
\label{fig:fig1}
\end{figure}

\begin{figure}[!h]
  \begin{subfigure}[t]{.5\textwidth}
    \centering
    \includegraphics[width=\linewidth, height=4.1cm]{./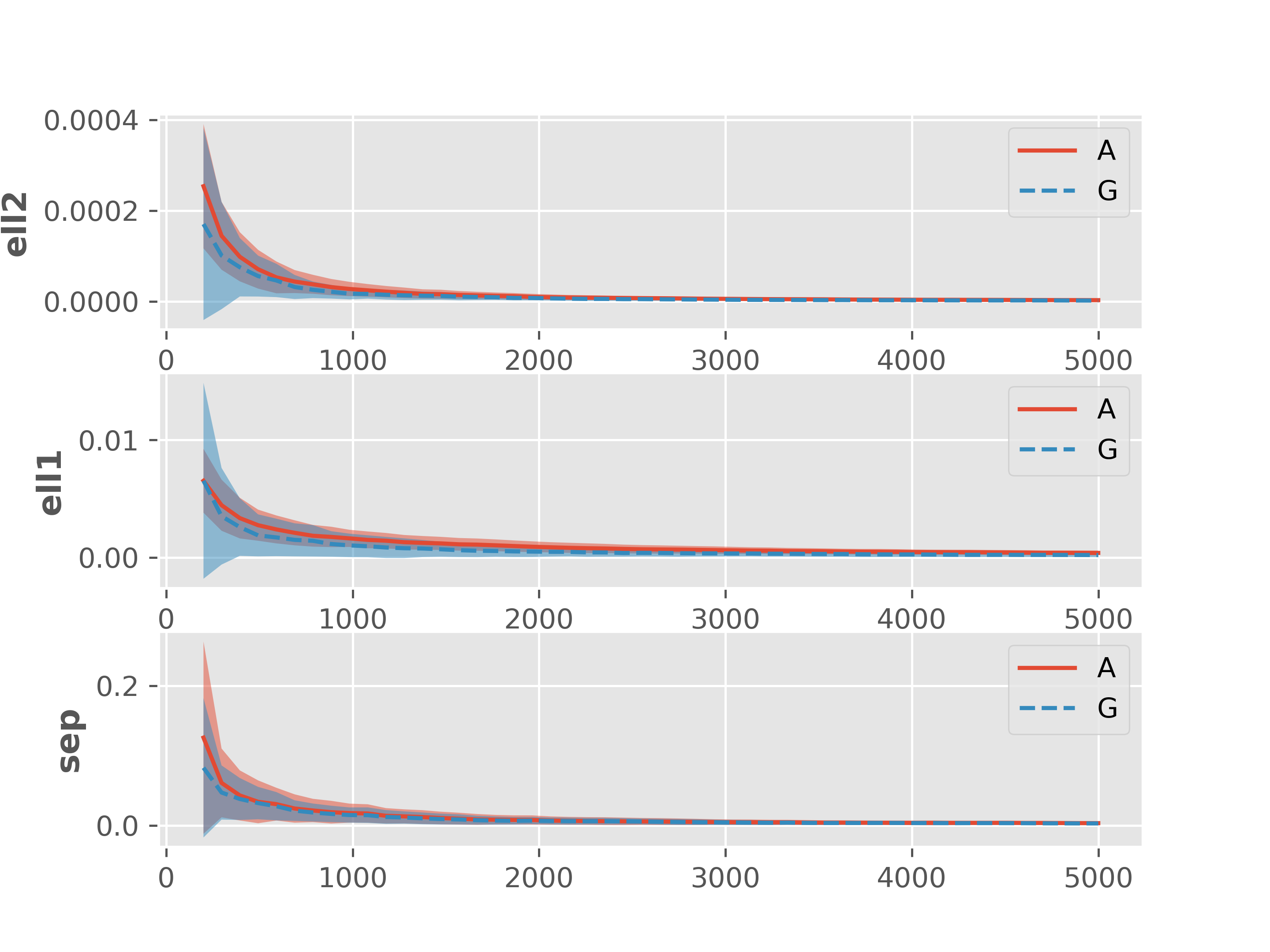}
  \end{subfigure}
  \hfill
  \begin{subfigure}[t]{.5\textwidth}
    \centering
    \includegraphics[width=\linewidth, height=4.1cm]{./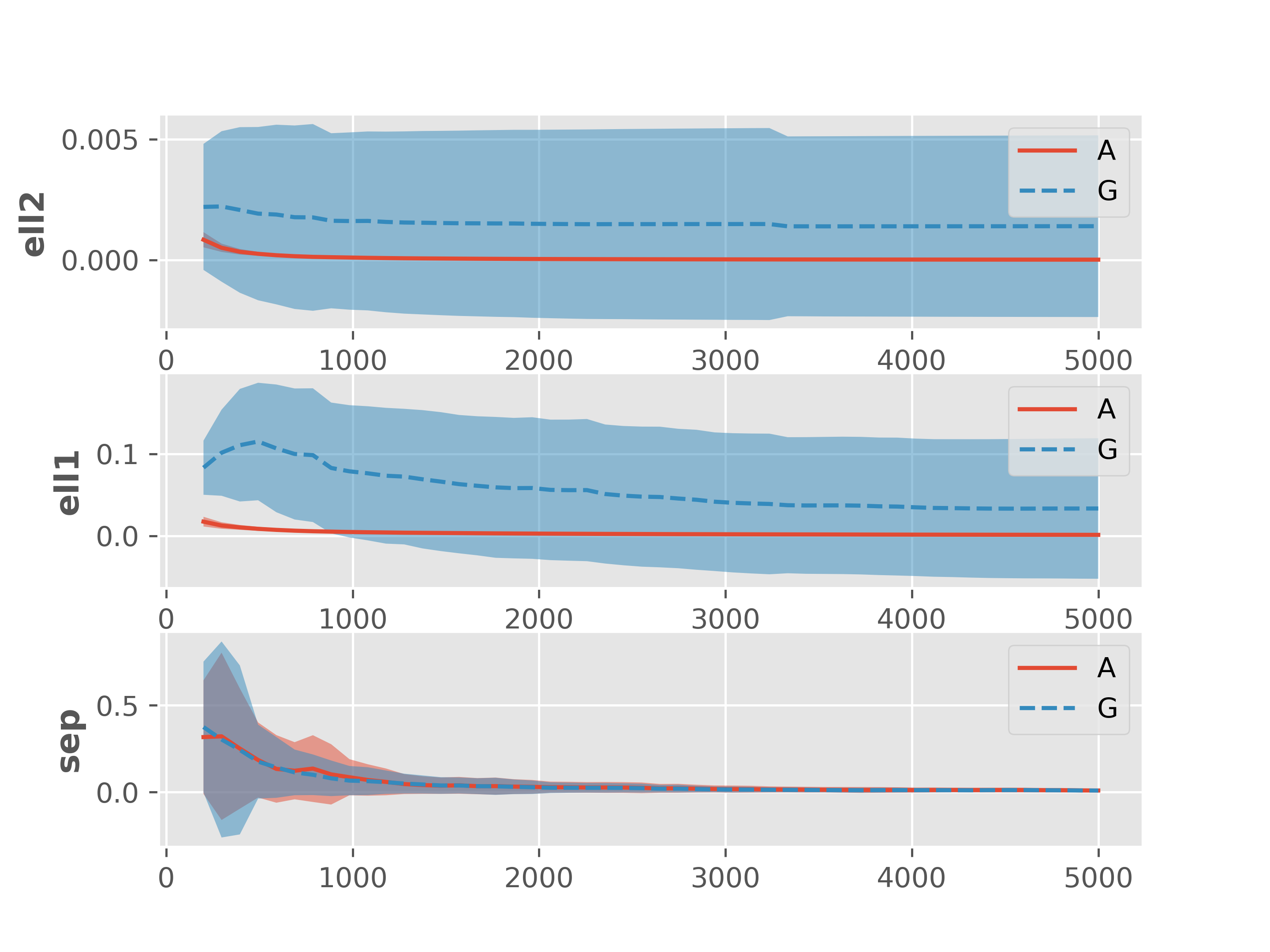}
  \end{subfigure}
  \caption{
      Plot of $\f(c_i, T_i) - \f(c_i, \Tins)$~vs.~$n$  for Adaptive (A) and Greedy (G) 
  sampling schemes, for $\epsilon=0.5$ (left) and $\epsilon=0.9$~(right). 
      }
\label{fig:fig2}
\end{figure}

\begin{figure}[!h]
\label{fig3}
  \begin{subfigure}[t]{.5\textwidth}
    \centering
    \includegraphics[width=\linewidth, height=4.1cm]{./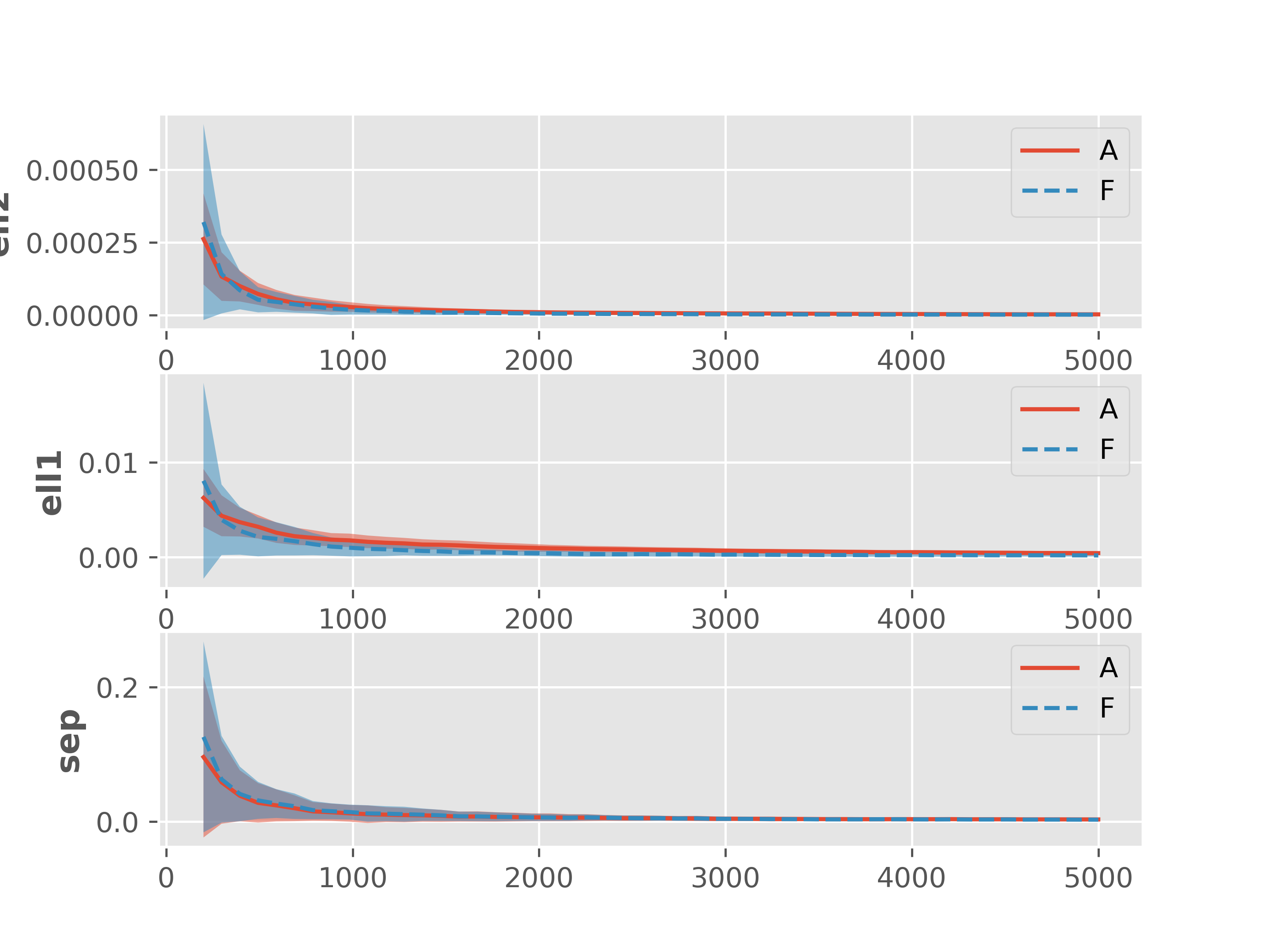}
  \end{subfigure}
  \hfill
  \begin{subfigure}[t]{.5\textwidth}
    \centering
    \includegraphics[width=\linewidth, height=4.1cm]{./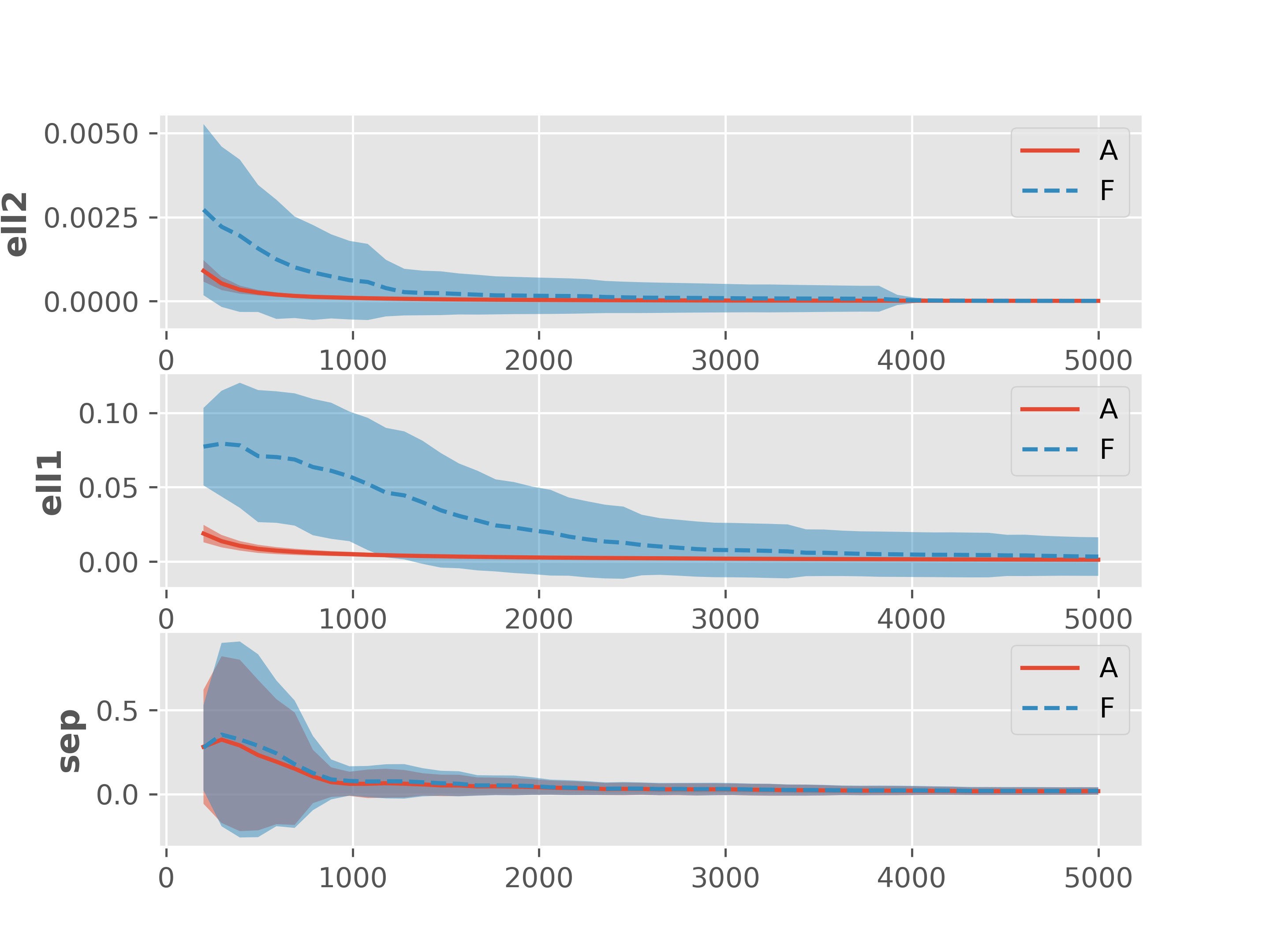}
  \end{subfigure}
  \caption{
      Plot of $\f(c_i, T_i) - \f(c_i, \Tins)$~vs.~$n$  for Adaptive (A) and Forced Exploration (F) 
  sampling schemes, for $\epsilon=0.5$ (left) and $\epsilon=0.9$~(right). 
      }
\label{fig:fig3}

\end{figure}

\section{Minimizing Average Discrepancy}
\label{sec:discussion}
In this paper, our focus has been on minimizing the maximum distance between the estimate and true distributions (optimization problems~\eqref{eq:objective1} and~\eqref{eq:tracking1}). An important alternative formulation that has been studied in the bandit literature involves minimizing the average discrepancy~\citep{Carpentier11FT,Riquelme17AL}. Our results, in particular our general tracking scheme, can be extended to this case and we are able to provide adaptive allocation strategies to minimize the average distance between distributions, for all distances studied in this paper. Consider the following tracking/optimization problem, which is the equivalent of~\eqref{eq:tracking1} for the average case:
\begin{equation}
\label{eq:average_cost}
    \min_{T_1,\ldots,T_K} \; \frac{1}{K} \sum_{i=1}^K \varphi_i (c_i, T_i) \quad \text{s.t. } \sum_{i=1}^K T_i = n. 
\end{equation}
If $\varphi_i$'s are convex in $T_i$, then the optimal solution must satisfy $\frac{1}{K}\frac{\partial\varphi_i(c_i, T_i)}{\partial T_i}-\lambda=0,\;\forall i\in[K]$ and for some $\lambda\in\mbb{R}$. Thus, if the $T_i$-derivatives of $\varphi_i$ are \emph{regular}~(Definition~\ref{def:regular}), then~\eqref{eq:average_cost} can be solved using the tools developed in Section~\ref{sec:general_tracking}. It is easy to show that the distances studied in this paper (i.e.,~$\ell_2$, $\ell_1$, KL, and separation) satisfy this condition.

\section{Conclusion}
\label{sec:conclusion}
We considered the problem of allocating samples to learn $K$ discrete distributions uniformly well in terms of four distance measures: $\ell_2^2$, $\ell_1$, $f$-divergence, and separation. We proposed a general optimistic tracking strategy for problems with \emph{regular} objective functions and then showed that the class of regular functions is rich enough to either contain or well approximate the true objective functions of all the considered distances. We then derived regret bounds for the proposed algorithm for all four distances. We also empirically verified our theoretical findings through numerical experiments. Finally, we ended with a discussion on extending our results to the related setting of minimizing the average error in learning a set of distributions. 
 
There are several directions in which our work can be extended: \tbf{1)} improving the performance of the adaptive algorithms and the hidden constants in the regret bounds by employing stronger concentration results, 
 and \tbf{2)} extending the results of the paper to the general problem of learning the dynamics (model) of a finite MDP, as discussed in Section~\ref{sec:introduction}.

\newpage \newpage
 \bibliographystyle{apalike}
\bibliography{ref}

\newpage 
\onecolumn 
\begin{appendix}

\section{Table of Symbols}
\begingroup
\renewcommand\thetable{A.1}
\begin{longtable}{| r | p{8cm} |c|} 

\hline 
{\bf Symbols} & {\bf Description}  \\ \hline 
\multicolumn{2}{c}{\tbf{Preliminaries} }\\ \hline 
$K$ & number of probability distributions. \\ 
$(P_i)_{i=1}^K$ & the $K$ probability distributions. \\
$\X$ & $\X = \{x_1, x_2, \ldots, x_l\}$, the support of $(P_i)_{i=1}^K$ of size $l$. \\
$\Delta_l$, $\Delta_l^{(\eta)}$ & the $(l-1)$ dimensional probability simplex, and its $\eta$-interior. \\
$D$ & a general distance measure $D:\Delta_l \times \Delta_l \mapsto [0, \infty)$. \\
$D_{\ell_2}$, $D_{\ell_1}$, $D_f$, $D_{KL}$, $D_s$ & $\ell_2^2$ distance, $\ell_1$ distance, $f$-divergence, KL-divergence and separation distance. \\
$\gamma_i(T_i)$ & The exact objective function in ~\eqref{eq:objective1}, i.e., $\mbb{E}\lb D\lp \hat{P}_i, P_i\rp\rb$ with $\hat{P}_i$ constructed from $T_i$ i.i.d. samples. \\

$\mc{A}^*$, $(T_i^*)_{i=1}^K$ & The oracle allocation scheme and the oracle allocation. Solution to~\eqref{eq:objective1} assuming full knowledge of $(P_i)_{i=1}^K$. \\
& \\
$\Tit$ & Number of times distribution $i$ has been sampled by an algorithm $\mc{A}$ before time $t$. \\
$T_i$ & Total number of times distribution $i$ is sampled by an algorithm till the budget is exhausted. Note that $T_i = T_{i, n+1}$. \\
$Z_{ij}^{(s)}$ & Indicator that the $s^{th}$ sample from arm $i$ is $x_{j}$. (Bernoulli with probability $p_{ij}$) \\
$\tilde{Z}_{ij}^{(s)}$ & $Z_{ij}^{(s)} - p_{ij}$. centered version of $Z_{ij}^{(s)}$. \\
$W_{ij, t}$ & $\sum_{s=1}^t \tilde{Z}_{ij}^{(s)}$ \\
$\hat{p}_{ij, t}$ & empirical estimate of $p_{ij}$ at time $t$, $\hat{p}_{ij, t} = W_{ij, t}/T_i$ (assuming $T_i>0$). \\
$\hat{p}_{ij}$ & The empirical estimate of $p_{ij}$ constructed by an algorithm after the sampling budget is exhausted. Note that $\hat{p}_{ij} = \hat{p}_{ij, n+1}$. \\

$\mc{L}_n \lp \mc{A}, D \rp$ & Risk incurred by algorithm $\mc{A}$, for distance $D$ and sampling budget $n$. (Defined in~\eqref{eq:risk}) \\
$\mc{R}_n \lp \mc{A}, D \rp$ & Regret of algorithm $\mc{A}$. (Defined in~\eqref{eq:regret}) \\ 
& \\
$\f(c_i, T_i)$ & a \emph{regular} function. (see Definition~\ref{def:regular}). \\
$\mc{F}$ & the class of all regular functions. \\
 $g_i*$ and $h_i^*$ & $g_i^* \coloneqq \frac{\partial \f(c, \Tins)}{\partial c} \big \rvert_{c=c_i}$ 
and   $h_i^* \coloneqq \frac{\partial \f(c_i, T)}{\partial T} \big \rvert_{T=\Tins}$. \\
$\Algapprox$, $(\Tins)_{i=1}^K$ & Solution to~\eqref{eq:tracking1} assuming full knowledge of parameters $(c_i)_{i=1}^K$. When the objective function $\f$ of~\eqref{eq:tracking1} is an approximation of the objective~\eqref{eq:objective1}, then we shall refer to $\Algapprox$ and $(\Tins)_{i=1}^K$ as the \emph{approx-oracle} allocation rule and the \emph{approx-oracle} allocation respectively. \\

$A$, $B$, and $C$ & Terms defined in Lemma~\ref{lemma:tracking1} for a general adaptive algorithm described in Alg.~\ref{algo:optimistic_tracking1}. \\ 
& \\ \hline

 \multicolumn{2}{c}{ \tbf{Adaptive scheme for $\ell_2$}  }  \\ \hline 
 $\mc{E}_1$, $\delta_t$ & event defined in Lemma~\ref{lemma:ell2_1}, and $\delta_t = (6\delta)/(K l \pi^2 t^2)$ \\
 $\eijtltwo$, and $\eitltwo$ & $\sqrt{3 p_{ij} \log(2/\delta_t)/\Tit}$, and $\sqrt{27 \log(1/\delta_t)/\Tit}$ resp. \\
$\ciltwo$, $\hatciltwo$ and $\uiltwo$ & $1 - \sum_{j=1}^l p_{ij}^2$ and $1 -\sum_{j=1}^l \hat{p}_{ij,t}^2$  and $\hatciltwo + \eitltwo$\\ 
$\lambdailtwo$, $\lambdailtwo[\min]$, $\lambdailtwo[\max]$ & $\ciltwo/(\sum_k \ciltwo[k])$, $\min$ and $\max$ of $\lambdailtwo$.    \\
$\Mltwo$&  $ \frac{\lambda_{\max}^{(\ell_2)} \sqrt{2\log(1/\delta_n)}}{\lambda_{\min}^{(\ell_2)}\sum_{i=1}^K c_i^{(\ell_2)}}$ \\

& \\ \hline 
\multicolumn{2}{c}{ \tbf{Adaptive scheme for $\ell_1$}}  \\ \hline 

$\mc{E}_2$, $\delta_t$ & event defined in Lemma~\ref{lemma:ell1_1}, and $\delta_t = (3\delta)/(K l \pi^2 t^2)$ \\
 $\eijtlone$, and $\eitlone$ & $\sqrt{2  \log(2/\delta_t)/\Tit}$, and $\sqrt{2l^2 \log(2/\delta_t)/\Tit}$ resp. \\
$\cilone$, $\hatcilone$ and $\uilone$ & $\sum_{j=1}^l \sqrt{p_{ij}(1-p_{ij})}$ and $\sum_j \sqrt{ \hat{p}_{ij,t}(1-\hat{p}_{ij,t})}$  and $\hatcilone + \eitlone$\\ 
$\lambdailone$, $\lambdailone[\min]$, $\lambdailone[\max]$ & $(\cilone)^2/(\sum_k (\cilone[k])^2)$, $\min$ and $\max$ of $\lambdailtwo$.    \\
$\Mlone$& $\frac{2l\lambda_{\max}^{(\ell_1)}\sqrt{\log(1/\delta_n)}}{\Clone\lambda_{\min}^{(\ell_1)}}  $ \\

& \\ \hline 

\multicolumn{2}{c}{ \tbf{Adaptive scheme for $\Dfdiv$ and $\Dkl$}}  \\ \hline 
$\Dfdiv^{(r)}(\hat{P}_i, P_i)$, $R_{i, r+1}$ & The $r$-term Taylor's approximation of $ \Dfdiv (\hat{P}_i, P_i )$, and the remainder term. Furthermore, we have $R_{i, r+1} = \sum_{j=1}^l R_{ij, r+1}$ where $R_{ij, r+1}$ is defined in~\eqref{eq:remainder}. \\

$\f_i$ & the approximate objective function obtained by taking the expectation of $\Dfdiv^{(r)}$. We do not use $\f(c_i, \cdot)$ to represent it, the derivation of $c_i$ requires the knowledge of $f^{(m)}(1)$ values for $1 \leq m \leq r$. \\

$C_1$ & constant in assumption \tbf{(f2)} in Sec.~\ref{subsec:f_divergence} \\
$C_{f, r+1}$ & Constant defined in Eqs.~\ref{eq:remainder1}~\ref{eq:remainder2} and~\ref{eq:remainder4} in Appendix~\ref{subsubsec:proof-lemma5}. \\

$\mc{E}_\delta$ & The $(1-\delta)$ probability event used to characterize a general adaptive scheme, in which we have $\tau_{0,i} \leq T_i \leq \tau_{1,i}$ for non-negative constants $\tau_{0,i}$ and $\tau_{1,i}$\\ 

$\beta_m^{(ij)}\lp \tau_{0,i}, \tau_{1,i}\rp$ & Defined in Lemma~\ref{lemma:f_div_moments} \\
$\Psi_i$, $\psi_{1,i}, \psi_{2,i}, \psi_{3,i}$ and $\psi_4$ & Defined in Theorem~\ref{theorem:regret_f_div}. \\
$\cikl$ & $(1/12)(\sum_{j=1}^l1/p_{ij} - 1)$ \\
$e_{ij, t}$ and $e_{i,t}$ & $\sqrt{2\log(2/\delta_t)/\Tit}$ and $\sqrt{ (32l^2 \log(2/\delta_t))/\Tit}$. \\
$\Mkl$ & $ \sqrt{96 K \log(1/\delta_n)}\eta^3$.   \\ 
& \\\hline

\multicolumn{2}{c}{ \tbf{Adaptive scheme for $\Dsep$}}  \\ \hline 
$p_{i,S}$ for $S\subset [l]$, $S \neq \emptyset$ & $\sum_{j \in S} p_{ij}$ \\ 
$c_i^{(s)}$ & $\sum_{j=1}^l \rho_1(p_{ij})$ where $\rho_1(p) = \sqrt{(1-p)/p}$ \\
$\tilde{c}_i^{(s)}$ & $\max_{S \subset [l]} \rho_2(p_{i,S})$ where $\rho_2(p) = \rho_1(p) + \rho_1(1-p)$ \\
$a_{i,t}$ and $b_{i,t}$ &  $\big(32\log(2/\delta_t)/\Tit\big)^{1/4}$ and $ \big(\frac{l\; a_{i,t}}{\eta}\big)\max\big\{1,\frac{a_{i,t}}{2\eta^{3/2}}\big\}$ \\
$E$ & $A\tilde{e}_n/B$ \\
$N_0$ & $\min \{ n \geq 1\; : \; (n/\log(2/\delta_n)) \geq 2/(\lambda_{\min}^{(s)} \eta^6)\}$ \\ 
& \\
& \\\hline

\caption{Table of symbols used in the paper.}
    \label{tab:table_of_notation}
\end{longtable}

\endgroup
\label{appendix:table_of_symbols}
\newpage 

\section{Regret with approximate objective function}
In this section, we present the formal proof of the statement about the regret decomposition when using an approximate objective function $\f_i(T_i)$ in place of $\mbb{E}[D(\hat{P}_i,P_i)]$, where $\hat{P}_i$ is the empirical estimate of $P_i$ constructed using $T_i$ i.i.d.~samples. 

\tbf{Note:} In this section, we use the term $\f_i$ to represent any approximation ,and not just the \emph{regular} approximations $\f(c_i, \cdot)$ introduced in Sec.~\ref{sec:general_tracking},  of the true objective function $\gamma_i$  in~\eqref{eq:objective1}. This is because the stated result does not require the regularity assumptions to be satisfied by the approximation. 

\begin{proposition}
\label{prop:approximate_regret}
Suppose $\mc{A}$ is any allocation scheme for a loss function $D$ and sampling budget $n$. Let $\f_i(T_i)$ be an approximation for $ \gamma_i(T_i) \coloneqq \mbb{E}[D(\hat{P}_i, P_i)]$, where $\hat{P}_i$ is the empirical estimate of $P_i$ constructed using $T_i$  i.i.d.~samples. Define the remainder term $R_i(T_i) \coloneqq \gamma_i(T_i) - \f_i(T_i)$. Let $\tilde{\mc{A}}^*$ and $\mc{A}^*$ represent the oracle and the approx-oracle allocation schemes, and let $(T_i^*)_{i=1}^K$ and $(\Tins)_{i=1}^K$ denote their allocations respectively. Then, assuming that $(\gamma_i(\cdot))_{i=1}^K$ are decreasing functions,  we have the following:
\begin{align}
    \mc{R}_n \lp \mc{A}, D \rp  &\coloneqq \mc{L}_n \lp \mc{A}, D \rp - \mc{L}_n(\mc{\tilde{A}}^*, D) \leq \mc{L}_n \lp \mc{A}, D \rp - \mc{L}_n \lp \mc{A}^*, D \rp + 3 \max_{i \in [K]} | R_i (\Tins)| \quad \text{and} \label{eq:approx_reg1} \\
   \mc{R}_n \lp \mc{A}, D \rp & \leq  \mc{L}_n \lp \mc{A}, D \rp - \f_i(\Tins) +  2 \max_{i \in [K]} | R_i (\Tins)|  \label{eq:approx_reg2}
\end{align}
\end{proposition}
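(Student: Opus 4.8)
The plan is to prove the two inequalities by carefully tracking how the maximum over arms interacts with the approximation error $R_i(T_i) = \gamma_i(T_i) - \f_i(T_i)$. Throughout, I would write $\mc{L}_n(\mc{A}, D) = \max_{i} \gamma_i(T_i)$ for the risk of the scheme, and recall the key distinction between the oracle $(T_i^*)$, which equalizes the true objectives $\gamma_i$, and the approx-oracle $(\Tins)$, which equalizes the approximate objectives $\f_i$. The central observation is that $\mc{L}_n(\mc{\tilde{A}}^*, D) = \max_i \gamma_i(\Tins)$, and since $\gamma_i = \f_i + R_i$, I can bound $\max_i \gamma_i(\Tins)$ between $\max_i \f_i(\Tins) \pm \max_i |R_i(\Tins)|$. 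Because $(\Tins)$ equalizes $\f_i$, we have $\f_i(\Tins) = \f_k(\Tins)$ for all $i,k$, so $\max_i \f_i(\Tins)$ is simply the common value $\f_i(\Tins)$ appearing on the right-hand side of \eqref{eq:approx_reg2}.

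For \eqref{eq:approx_reg2}, I would start from the definition $\mc{R}_n(\mc{A}, D) = \mc{L}_n(\mc{A}, D) - \mc{L}_n(\mc{\tilde{A}}^*, D)$ and substitute $\mc{L}_n(\mc{\tilde{A}}^*, D) = \max_i \gamma_i(\Tins)$. Writing $\gamma_i(\Tins) = \f_i(\Tins) + R_i(\Tins)$ and using that $\f_i(\Tins)$ is constant across $i$, I get $\max_i \gamma_i(\Tins) \geq \f_i(\Tins) - \max_i |R_i(\Tins)|$ (taking any fixed $i$ on the left, the common $\f$ value minus the worst remainder). Hence $-\mc{L}_n(\mc{\tilde{A}}^*, D) \leq -\f_i(\Tins) + \max_i |R_i(\Tins)|$. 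Combining, $\mc{R}_n(\mc{A}, D) \leq \mc{L}_n(\mc{A}, D) - \f_i(\Tins) + \max_i |R_i(\Tins)|$. The factor $2$ in the stated bound presumably also absorbs a second $\max_i |R_i(\Tins)|$ term arising when one further relates $\mc{L}_n(\mc{A},D)$ to the approximate risk, or simply reflects a slightly loose but clean constant; I would verify the precise accounting and, if only one factor is needed, note that the bound holds a fortiori.

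For \eqref{eq:approx_reg1}, I would instead relate $\mc{L}_n(\mc{\tilde{A}}^*, D)$ to $\mc{L}_n(\mc{A}^*, D) = \max_i \gamma_i(T_i^*)$. Since $\mc{A}^*$ is the true oracle that minimizes $\max_i \gamma_i$, and $\mc{\tilde{A}}^*$ minimizes $\max_i \f_i$, I would argue $\big|\max_i \gamma_i(\Tins) - \max_i \gamma_i(T_i^*)\big|$ is controlled by the approximation error. Concretely, $\max_i \gamma_i(\Tins) \leq \max_i \f_i(\Tins) + \max_i |R_i(\Tins)| \leq \max_i \f_i(T_i^*) + \max_i |R_i(\Tins)|$ (using optimality of $\Tins$ for $\f$), and then $\max_i \f_i(T_i^*) \leq \max_i \gamma_i(T_i^*) + \max_i |R_i(T_i^*)|$. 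This gives $\mc{L}_n(\mc{\tilde{A}}^*,D) \leq \mc{L}_n(\mc{A}^*,D) + \max_i |R_i(\Tins)| + \max_i |R_i(T_i^*)|$, and I would then pass from $|R_i(T_i^*)|$ to $|R_i(\Tins)|$ (or bound both uniformly) to collect the constant $3$. Rearranging $\mc{R}_n(\mc{A},D) = \mc{L}_n(\mc{A},D) - \mc{L}_n(\mc{\tilde{A}}^*, D) = \big(\mc{L}_n(\mc{A},D) - \mc{L}_n(\mc{A}^*,D)\big) + \big(\mc{L}_n(\mc{A}^*,D) - \mc{L}_n(\mc{\tilde{A}}^*,D)\big)$ and bounding the last difference by the remainder terms yields \eqref{eq:approx_reg1}.

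The main obstacle I anticipate is the bookkeeping around which allocation the remainder $R_i$ is evaluated at: the inequalities are stated entirely in terms of $R_i(\Tins)$, yet the natural chain for \eqref{eq:approx_reg1} produces a term $R_i(T_i^*)$ evaluated at the true oracle. Converting that cleanly into a bound involving only $\max_i |R_i(\Tins)|$ (which is what appears in the statement) requires either a monotonicity/comparison argument using that $(\gamma_i)$ are decreasing, or showing $T_i^*$ and $\Tins$ are close enough that the remainders are comparable. Getting this step tight is what determines whether the constant is $3$ rather than something larger, so I would treat it as the delicate part and make the decreasing-$\gamma_i$ hypothesis do the work of ensuring the oracle comparison goes through in the correct direction.
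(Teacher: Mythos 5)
Your high-level decomposition is on the right track, but the proposal is missing the one step that carries all the content of the proposition: relating the \emph{true} oracle value $\mu \coloneqq \gamma_i(T_i^*)$ (constant over $i$ by definition of the oracle) to the approx-oracle's equalized value $\lambda \coloneqq \f_i(\Tins)$, using only remainders evaluated at $\Tins$. Inequality~\eqref{eq:approx_reg2} is exactly the claim $\lambda - \mu \leq 2\max_k|R_k(\Tins[k])|$, and~\eqref{eq:approx_reg1} follows from it by one further triangle inequality. Your derivation of~\eqref{eq:approx_reg2} substitutes $\mc{L}_n(\tilde{\mc{A}}^*,D)=\max_i\gamma_i(\Tins)$, i.e., it benchmarks against the approx-oracle; but the regret here (as in the paper's proof, which writes $\mc{R}_n(\mc{A},D)=\max_i\big(\mbb{E}[D(\hat{P}_i,P_i)]-\gamma_i(T_i^*)\big)$, and despite the proposition's admittedly confusing labeling) is measured against the true oracle, whose risk is $\gamma_i(T_i^*)$. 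Since the approx-oracle's risk is at least the oracle's, your argument bounds a smaller quantity and produces only one factor of $\max_i|R_i(\Tins)|$; the missing second factor is not loose bookkeeping but precisely the unproved comparison between $\lambda$ and $\mu$. Your chain for~\eqref{eq:approx_reg1} is sound as far as it goes, but it terminates with a term $\max_i|R_i(T_i^*)|$ evaluated at the oracle allocation, and, as you yourself suspect, there is no way to convert this into $\max_i|R_i(\Tins)|$ under the stated hypotheses: $R_i$ need not be monotone and $T_i^*$ need not be close to $\Tins$.

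The paper closes this gap by never evaluating $R_i$ at $T_i^*$. It shows instead that the common value $\mu=\gamma_i(T_i^*)$ must lie in the interval $[\min_i\gamma_i(\Tins),\,\max_i\gamma_i(\Tins)]$, which, combined with $\gamma_i(\Tins)\in[\lambda-\max_k|R_k(\Tins[k])|,\,\lambda+\max_k|R_k(\Tins[k])|]$, yields $|\lambda-\mu|\le 2\max_k|R_k(\Tins[k])|$. The sandwiching claim is proved by contradiction: if $\mu$ fell outside that interval, then $\gamma_i(T_i^*)<\gamma_i(\Tins)$ (or $>$) would hold simultaneously for every $i$, and monotonicity of the $\gamma_i$ would force all the differences $\Tins-T_i^*$ to have the same strict sign, contradicting $\sum_i \Tins = \sum_i T_i^* = n$. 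This is where the decreasing-$\gamma_i$ hypothesis and the budget constraint do their work --- exactly the mechanism your final paragraph gestures at but does not supply.
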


\begin{proof} 
Note that we have $\gamma_i(T_i) = \f_i(T_i) + R_i(T_i)$. By definition of regret, we have 
\begin{align*}
    \mc{R}_n \lp \mc{A}, D \rp & = \max_{1 \leq i \leq K} \lp \mbb{E}[D(\hat{P}_i, P_i)] - \gamma_i(T_i^*) \rp  \\
    &\stackrel{(a)}{\leq}  \max_{1 \leq i \leq K} \lp \mbb{E}[D(\hat{P}_i, P_i)] - \f_i(\Tins) \rp + \max_{1 \leq i \leq K} \left \lvert \f_i(\Tins) -  \gamma_i(T_i^*)\right \rvert  \\
    & \stackrel{(b)}{=} \mc{L}_n \lp \mc{A}, D \rp - \f_i (\Tins) + \max_{1 \leq i \leq K} \left \lvert \f_i(\Tins) -  \gamma_i(T_i^*)\right \rvert. \\ 
    & \stackrel{(c)}{\leq}  \mc{L}_n \lp \mc{A}, D \rp - \mc{L}_n(\Algapprox,D) + \max_{1\leq i \leq K}|R_i(\Tins)| + \max_{1 \leq i \leq K} \left \lvert \f_i(\Tins) - \gamma_i(T_i^*)\right \rvert  
\end{align*}
In the above display, \\
\tbf{(a)} follows from an application of triangle inequality, \\
\tbf{(b)} uses the fact that by definition of the approx-oracle rule, we must have $\f_i(\Tins)=\f_j(\tilde{T}_j^*)$ for all $i,j \in [K]$. \\
\tbf{(c)} follows from another application of triangle inequality.

To complete the proof, it suffices to show that the term $  \max_{1 \leq i \leq K} \left \lvert \f_i(\Tins) -  \gamma_i(T_i^*)\right \rvert  $ is less than $2 \max_{1\leq i \leq K}|R_i(\Tins)|$. 
Note that since $\Algapprox$ is the approx-oracle sampling scheme, there must exist a $\lambda \in [0, \infty)$ such that $\f_i(\Tins) = \lambda$ for all $i \in [K]$. Introduce the notation $\Delta T_i^* =  \tilde{T}_i^* - T_i^* $. Then we consider two cases. 

\tbf{Case~1: $\bm{\Delta T_i^* = 0}$ for all $\bm{i}$.}  In this case, we have $\Tins = T_i^*$ for all $i$, which implies that $|\f_i(\Tins) - \gamma_i(T_i^*)| \leq |R_i(\Tins)|$ for all $i$, thus proving that $\max_{i \in [K]} |\f_i(\Tins) - \gamma_i(T_i^*)| \leq \max_{i \in [K]} |R_i(\Tins)|$.

\tbf{Case~2: $\bm{\Delta T_k^* \neq 0}$ for some $k \in [K]$.} Since we have $\sum_{i=1}^K T_i^* = \sum_{i=1}^K \tilde{T}_i^*$, it means that $\sum_{i=1}^K \Delta T_i^* = 0$. Thus there must exist $i, j \in [K]$ such that $\Delta T_i^*>0$ and $\Delta T_j^*<0$. Define $i_0 = \argmin_{i \in [K]} \gamma_i(\Tins)$ and $i_1 = \argmax_{i\in [K]} \gamma_i(\Tins)$. By monotonicity assumption on the functions $\gamma_i$, we must have the following $\gamma_{i_0}(\tilde{T}_{i_0}^*) < \lambda < \gamma_{i_1}(\tilde{T}_{i_1}^*)$. We claim that this implies that 
\begin{align}
\label{eq:claim0}
\gamma_{i_0}\lp \tilde{T}_{i_0}^* \rp \leq \gamma_i(T_i^*) \leq \gamma_{i_1}(\tilde{T}_{i_1}^*). 
\end{align}
If~\eqref{eq:claim0} is true, then the result of the proposition immediately follows from the observation that  $\lambda - \max_{k \in [K]} |R_k(\tilde{T}_k^*)| \leq \; \gamma_i(\Tins) \; \leq \lambda + \max_{k \in [K] } |R_k(\tilde{T}_k^*)|$, which coupled with~\eqref{eq:claim0} implies that $\lambda - \gamma_i(T_i^*) \leq 2 \max_{k \in K} |R_k\lp \tilde{T}_k^* \rp |$.

Finally, it remains to show that~\eqref{eq:claim0} is true. We prove this by contradiction. Introduce the notation $\mu = \gamma_i(T_i^*)$ for all $i \in [K]$ (by the definition of the oracle allocation rule). Next, suppose that $\mu \not \in [\lambda - \max_{k}|R_k(\tilde{T}_k^*)|, \; \lambda + \max_k |R_k(\tilde{T}_k^*)|]$. Without loss of generality, assume that $\mu<\lambda - \max_{k}|R_k(\tilde{T}_k^*)|$ (the other case can be argued similarly). This means that $\mu = \gamma_i(T_i^*) < \gamma_i(\Tins)$ for all $i$. By the monotonicity of $\gamma_i$ this implies that all the $\Delta T_i^* = \tilde{T}_i^* - T_i^*$ have the same sign. However, since $\sum_{i}\Delta T_i^* = 0$, this can only happen if $\Delta T_i^* = 0$ for all $i$. This contradicts the defining assumption of \tbf{Case~2} above.

\end{proof}

\label{appendix:approximate} 

\section{Proof of General Tracking Lemma (Lemma~\ref{lemma:tracking1})}
\label{proof_tracking_lemma}


Assume that arm $k$ was played at least once during the period $K \leq t \leq n-1$. Let $t_k$ be the time at which arm $k$ was played for the last time. Recall that for any $t\geq 1$, we use $\Tit$ to denote the number of times the  arm $i$ is played  before the start of round $t$, and $T_i$ denotes the total number of times arm $i$ is played  until the total sampling budget is exhausted, i.e., $T_i = T_{i, n+1}$ 
Then we have the following for any $1 \leq j \leq K$: 
\begin{align}
\label{eq:tracking2}
    \f\lp u_{j,t_k}, T_{j,t_k}\rp & \stackrel{(a)}{\leq} \f\lp u_{k, t_k}, T_{k, t_k} \rp \stackrel{(b)}{\leq} \f\lp c_k + 2e_{k,t_k}, T_{k,t_k} \rp 
\end{align}
In the above display, \tbf{(a)} follows from the arm selection rule for $t \geq K$ and \tbf{(b)} follows from the fact that $u_{k,t_k} - c_k \leq 2e_{k,t_k}$ and that $\f$ is non-decreasing in its first argument. 

Next, we define $i = \argmin_{1 \leq k \leq K} \f(c_k, T_{k})$. Using the fact that $\f$ is non-increasing in its second argument, we have $T_{i} \geq \Tins$. We now consider two cases:

\paragraph{Case~1: $T_{i}=\Tins$}. In this case, we must have $T_{j} = \Tins[j]$ for all $1\leq j \leq K$. This is because if $\min_{k} \f(c_k, T_{k}) = \f(c_k, \Tins[k])$, then $\f(c_k, T_{k}) = \f(c_k, \Tins[k])$ for all $1\leq k\leq K$, and thus $T_{k} = \Tins[k]$ for all $1 \leq k \leq K$. In this case the result of Lemma~\ref{lemma:tracking1} holds trivially. 

\paragraph{Case~2: $T_{i} > \Tins$.} In this case, the arm $i$ must have been played at least once during the time interval $K \leq t \leq n-1$. Denote that time by $t_i$. 
 Defining $g_i^* \coloneqq \left. \frac{\partial \f (c_i, T)}{\partial T}\right \rvert_{T=\Tins}$,  we have the following sequence of inequalities for any $1\leq j \leq K$:
\begin{align}
   \f(c_j, T_{j}) &\stackrel{(a)}{\leq} \f(c_i + 2e_{i,t_i}, T_{i}-1 ) 
   \stackrel{(b)}{\leq} \f\lp c_i + 2e_i^*, \Tins\rp 
   \stackrel{(c)}{\leq} \f\lp c_i, \Tins \rp  + 
   2g_i^*e_i^* \nonumber\\
      & \stackrel{(d)}{\leq} \f(c_i, \Tins) + 2A e_i^*  \stackrel{(e)}{\leq} \f(c_j, \Tins[j] ) + 2A \tilde{e}_n. \label{eq:tracking2}
\end{align}
In the above display, \\
\tbf{(a)} follows from the fact that since arm $i$ was played at time $t_i$ we must have $\f(u_{j,t_i}, T_{j,t_i}) \leq \f(u_{i,t_i}, T_{i,t_i})$. Furthermore, since $u_{j, t_i} \geq c_j$ under the event $\mc{E}$ by definition, and the fact that $\f$ is non-decreasing in its first argument implies that $\f(c_j, T_{j,t_i}) \leq \f(u_{j,t_i}, T_{j, t_i})$, \\
\tbf{(b)} follows from the fact that $T_{i}-1 \geq \Tins$ (defining assumption of Case~2), that $\f$ is non-increasing in its second argument, and that $e_i(\cdot)$ is a non-increasing in its argument and $\f$ is non-decreasing in its first argument,\\
\tbf{(c)} follows from the fact that $\f$ is a concave function in its first argument and thus is majorized by its linear approximation, \\
\tbf{(d)} follows from the fact that $A = \max_{1 \leq j \leq K} \; g_j^* \geq g_i^*$ by definition, and \\
\tbf{(e)} uses the fact that by definition of $\Tins[j]$ we have $\f(c_j, \Tins[j]) = \f(c_i, \Tins[i])$ and that $e_i^* \leq \max_{1 \leq k \leq K} e_k^* \coloneqq \tilde{e}_n$. 

Next, by defining $h_j^* = \left. \frac{\partial \f\lp c, \Tins[j] \rp}{\partial c}\right\rvert_{c=c_j}$ and from the relation between the first and last terms of \eqref{eq:tracking2} we observe the following:
\begin{align}
\label{eq:tracking3}
\f(c_j, \Tins[j] ) + 2A\tilde{e}_n \geq     \f(c_j, T_{j}) & \stackrel{(a)}{\geq} \f(c_{j}, \Tins[j] ) + h_j^*\lp T_{j} - \Tins[j]  \rp. 
\end{align}
In the above display, \tbf{(a)} follows from the assumption that $\f$ is a convex function in its second argument. 
\eqref{eq:tracking3} implies that 
\begin{align*}
   & h_j^* (T_{j} - \Tins[j] ) \leq 2 A \tilde{e}_n 
\quad    \Rightarrow T_{j} - \Tins[j]   \stackrel{(a)}{\geq} -\frac{2 A \tilde{e}_n }{|h_j^*|} \stackrel{(b)}{\geq} - \frac{2 A \tilde{e}_n }{B} \quad  
\Rightarrow T_{j}  \geq \Tins[j]  - \frac{2 A \tilde{e}_n }{B}. 
\end{align*}
In the above display, \tbf{(a)} follows from the assumption that $\f$ is  non-increasing function of its second argument, and \tbf{(b)} follows from the definition of $B = | \max_{1 \leq j \leq K}\; h_j^*|$ and the assumption that $B>0$. This completes the proof for the lower bound on $T_{j}$ for $1 \leq j \leq K$. 

Next, we obtain the upper bound on $T_{j}$ as follows:
\begin{align*}
    T_{j} = n - \sum_{i\neq j} T_{i} \leq n - \sum_{i \neq j} \Tins + 2(K-1)A \tilde{e}_n/B = \Tins[j]  + 2(K-1)A\tilde{e}_n/B. 
\end{align*}

\section{Analysis for $\ell_2^2$ loss}
\label{appendix:analysis_ell2}

\subsection{Proof of Lemma~\ref{lemma:ell2_1}}
\label{proof_ell2_lemma1}

\begin{proof}
Write $\mc{E}_1 = \cap_{t, i,j} \mc{E}_1^{(t,i,j)}$ where the events $\mc{E}_1^{(t,i,j)}$ are defined implicitly from the definition of $\mc{E}_1$.
Then by union bound we have that $Pr\lp (\mc{E}_1)^c \rp \leq  \sum_{t=1}^\infty \sum_{i=1}^K
\sum_{j=1}^l Pr\lp \lp \mc{E}_1^{(t,i,j)}\rp^c\rp$. To complete the proof, it suffices to show that $Pr\lp
\lp \mc{E}_1^{(t,i,j)}\rp^c \rp \leq \delta_t = \frac{6 \delta}{K l t^2 \pi^2}$ which implies the result.
We proceed by using the multiplicative form of Chernoff's inequality. 
\begin{align*}
    Pr\lp |\hat{p}_{ij, t} - p_{ij}| > e_{ij,t}^{(\ell_2)} \rp 
    & = Pr\lp \hat{p}_{ij, t} > p_{ij}\lp 1 + \frac{e_{ij,t}^{(\ell_2)}}{p_{ij}}\rp \rp +  Pr \lp \hat{p}_{ij, t} < p_{ij}\lp 1 - \frac{e_{ij, t}^{(\ell_2)}}{p_{ij}} \rp   \rp \\
    & =\sum_{s=0}^t \lp  Pr\lp \hat{p}_{ij, t} > p_{ij}\lp 1 + \frac{e_{ij, t}^{(\ell_2)}}{p_{ij}}\rp, \; \Tit = s\rp +  Pr \lp \hat{p}_{ij, t} < p_{ij}\lp 1 - \frac{e_{ij, t}^{(\ell_2)}}{p_{ij}} \rp, \; \Tit=s   \rp  \rp \\
    & \stackrel{(a)}{\leq} 
     = 2 \sum_{s=0}^t Pr\lp \Tit = s\rp \exp\lp - \frac{3  p_{ij} s \log(2/\delta_t)}{3 p_{ij}s } \rp 
     = \delta_t. 
\end{align*}
In the above display, the inequality $(a)$ follows from applying the multiplicative form of Chernoff's inequality as derived in \cite[Eq.~(6)~and~(7)]{hagerup1990guided}.
\end{proof}

\subsection{Proof of Lemma~\ref{lemma:ell2_3}}
\label{proof_ell2_lemma3}

With the above result, we can construct a confidence interval for the parameter $c_i^{(\ell_2)}$. Under the
event $\mc{E}_1$, we have
\begin{align*}
\sum_{j=1}^l \hat{p}_{ij, t}^2 & \geq \sum_{j=1}^l \lp  p_{ij}^2 + (e_{ij,t}^{(\ell_2)})^2 - 2 e_{ij,t}^{(\ell_2)} p_{ij} \rp \\
\Rightarrow \sum_{j=1}^l \hat{p}_{ij, t}^2  & = \sum_{j=1}^l p_{ij}^2 + \frac{3\log(1/\delta_t)}{\Tit}\sum_{j=1}
^l p_{ij} - 2 \sqrt{ \frac{3\log(1/\delta_t)}{\Tit}}\sum_{j=1}^l p_{ij}^{3/2} \\
& \stackrel{(a)}{\geq} \sum_{j=1}^l p_{ij}^2 + \frac{3 \log(1/\delta_t)}{\Tit} - 2\sqrt{ \frac{3\log(1/\delta_t)}{\Tit}}\sum_{j=1}^l p_{ij},
\end{align*}
where $(a)$ follows from the fact that $\sum_{j=1}^l p_{ij}^{3/2} \leq \sum_{j=1}^l p_{ij} = 1$.

Similarly, we can obtain the following:
\begin{align*}
\sum_{j=1}^l \hat{p}_{ij, t}^2 & \leq \sum_{j=1}^l \lp p_{ij}^2 + (e_{ij, t}^{(\ell_2)})^2 + 2e_{ij, t}^{(\ell_2)} p_{ij} \rp 
 \leq \sum_{j=1}^l p_{ij}^2 + \frac{3 \log(1/\delta_t)}{\Tit} +
2\sqrt{ \frac{3\log(1/\delta_t)}{\Tit}}.
\end{align*}

Combining the inequalities from the previous two displays, we get the required confidence interval
for the term $c_i^{(\ell_2)}$ around its empirical counterpart $\hatciltwo \coloneqq 1-
\sum_{j=1}^l \hat{p}_{ij, t}^2$, as follows:
\begin{align*}
|c_i^{(\ell_2)} - \hatciltwo| &
\leq \min \lp  \frac{3 \log(1/\delta_t)}{\Tit}  +  2\sqrt{ \frac{3\log(1/\delta_t)}{\Tit}}, \; 1 \rp 
 \leq 3 \sqrt{ \frac{3\log(1/\delta_t)}{\Tit}} \coloneqq e_{i,t}^{(\ell_2)}.
\end{align*}

\subsection{Proof of Theorem~\ref{theorem:ell2_regret}}
\label{proof_ell2_theorem}

\begin{proof}
We begin by obtaining the constants $A$, $\tilde{e}_n$ and $B$ from Lemma~\ref{lemma:tracking1}. 
We introduce the notation $\lambda_i^{(\ell_2)} \coloneqq \frac{c_i^{(\ell_2)}}{\sum_{k=1}^K c_k^{(\ell_2)}}$, $\Cltwo \coloneqq \sum_{i=1}^K c_i^{(\ell_2)}$, $\lambda_{\max}^{(\ell_2)} = \max_{1\leq i \leq K} \lambda_i^{(\ell_2)}$, and $\lambda_{\min}^{(\ell_2)} = \min_{1 \leq i \leq K} \lambda_i^{(\ell_2)}$. 
\begin{align*}
    A &= \max_{1 \leq i \leq K} \frac{1}{\Tins} = \frac{1}{\lambda_{\min}^{(\ell_2)}n} = \mc{O} \lp \frac{1}{n} \rp,   \\
    B &= \min_{1 \leq i \leq K} \frac{c_{i}^{(\ell_2)}}{\lp \Tins \rp^2} = \frac{\Cltwo}{\lambda^{(\ell_2)}_{\max} n^2} = \mc{O} \lp \frac{1}{n^2} \rp, \\
    \tilde{e}_n &= \max_{1 \leq i \leq K} e_i^* \leq \sqrt{ \frac{27\log(1/\delta_n)}{\lambda^{(\ell_2)}_{\min} n}} = \mc{O} \lp \sqrt{\frac{\log n}{n}} \rp. 
\end{align*}

The above calculations imply that we have 
\begin{equation}
    \frac{A \tilde{e}_n }{B} = \frac{ \lambda_{\max}^{(\ell_2)} \sqrt{27 \log(1/\delta_n)}}{\lambda_{\min}^{(\ell_2)} \Cltwo } \sqrt{n} \coloneqq \Mltwo \sqrt{n}. 
\end{equation}

Before proceeding, recall that we drop the $n+1$ from the  subscript when referring to the final probability mass estimates, i.e., we use $\hat{p}_{ij}$ instead of $\hat{p}_{ij, n+1}$ to refer to the estimate of $p_{ij}$ after the end of $n$ rounds. Next, consider any arm $i$, and decompose the expected $\ell_2^2$ distance as follows: 
\begin{align*}
    \mbb{E} \lb D_{\ell_2}\lp \hat{P}_i, P_i \rp \rb &= \sum_{j=1}^l \mbb{E} \lb \lp  \hat{p}_{ij} - p_{ij} \rp^2 \rb = \sum_{j=1}^l \mbb{E} \lb \lp \hat{p}_{ij} - p_{ij} \rp^2 \lp \indi{\mc{E}_1} + \indi{\mc{E}_1^c} \rp \rb. 
\end{align*}
We now consider each term of the summation above separately. 
\begin{align*}
    \mbb{E} \lb\lp  \hat{p}_{ij} - p_{ij} \rp ^2 \indi{\mc{E}_1^c } \rb  &\leq 1 \times Pr \lp \mc{E}_1^c \rp \stackrel{(a)}{\leq} \delta \\
   \Rightarrow \sum_{j=1}^l \mbb{E}\lb (\hat{p}_{ij} - p_{ij})^2 \indi{\mc{E}_1^c } \rb &\leq l \delta.  
\end{align*}
In the above display, the inequality \tbf{(a)} follows from the statement of Lemma~\ref{lemma:ell2_1}. 

Next, we proceed as follows:
\begin{align*}
    \mbb{E} \lb \lp \hat{p}_{ij} - p_{ij}\rp^2 \indi{\mc{E}_1} \rb &= \mbb{E} \lb \frac{1}{T_{i}^2} \lp \sum_{s=1}^{T_{i}} Z_{ij}^{(s)} - p_{ij} \rp ^2 \indi{\mc{E}_1} \rb 
     \leq \mbb{E} \lb \lp \sup_{\omega \in \mc{E}_1 } \frac{1}{T_{i}^2} \rp \lp \sum_{s=1}^{T_{i}} Z_{ij}^{(s)} - p_{ij} \rp^2 \rb  \\
     &\stackrel{(a)}{\leq} \lp \frac{1}{T_{i}^* - A \tilde{e}_n/B} \rp^2 \mbb{E} \lb \lp \sum_{s=1}^{T_{i}} Z_{ij}^{(s)} - p_{ij} \rp^2 \rb  \\
    &\stackrel{(b)}{=} \lp \frac{1}{T_{i}^* - A \tilde{e}_n/B} \rp^2 \lp p_{ij}(1-p_{ij})\rp \mbb{E} \lb T_{i} \rb  \\
    & \stackrel{(c)}{\leq}      \lp \frac{1}{T_{i}^* - A \tilde{e}_n/B} \rp^2 \lp p_{ij}(1-p_{ij})\rp  \lp 1 \times \lp T_{i}^* + \frac{(K-1)A\tilde{e}_n}{B}\rp + \delta \times n \rp \\
    & \stackrel{(d)}{\leq} \frac{p_{ij}(1-p_{ij})}{\lp T_{i}^*\rp^2 } \lp 1 + \frac{6 A \tilde{e}_n }{B T_{i}^*} \rp \lp T_{i}^* + \frac{(K-1)A \tilde{e}_n}{B} + n \delta \rp \\
    & = \frac{p_{ij}(1-p_{ij})}{\lp T_{i}^*\rp^2 } \lp T_{i}^* + \frac{(K-1)A\tilde{e}_n}{B} + n\delta + \frac{6 A\tilde{e}_n}{B} + \frac{6(K-1)A^2 \tilde{e}_n^2}{B^2 T_{i}^*} + \frac{6A\tilde{e}_n n \delta}{B T_{i}^*}  \rp
\end{align*}
In the above display, 
\tbf{(a)} follows from the result of Lemma~\ref{lemma:tracking1}, \\
\tbf{(b)} follows from an application of Wald's Lemma, \\
\tbf{(c)} follows from the facts that $Pr(\mc{E}_1^c) \leq \delta\;$, $T_{i} \leq n$ a.s., and the bounds on $T_{i}$ under the event $\mc{E}_1$ given by Lemma~\ref{lemma:tracking1}, \\
\tbf{(d)} follows from the fact that the function $x\mapsto 1/x^2$ is convex, and thus for the function lies below the chord joining the points $(1, 1/1^2)$ and $\lp1/2, 1/(1/2)^2\rp$, and the assumption that $n$ is large enough to ensure that $\frac{A \tilde{e}_n}{B T_{i}^*} < 1/2$. A sufficient condition for this is that $n > \frac{4 (\Mltwo)^2}{\lp \lambda_{\min}^{(\ell_2)}\rp^2}$.

Next,  we have the following:
\begin{align*}
  \mbb{E} \lb (\hat{p}_{ij}^2 - p_{ij}^2 )^2 \indi{\mc{E}_1} \rb & \leq \frac{ (p_{ij}(1-p_{ij})}{T_{i}^*  }
  \lp 1  + \frac{n\delta}{T_{i}^*} \lp 1+ \frac{6\Mltwo \sqrt{n}}{T_{i}^*} \rp + \frac{\Mltwo \sqrt{n}}{T_{i}^*} \lp (K+5) + \frac{6(K-1)\Mltwo \sqrt{n}}{\sqrt{T_{i}^*}}   \rp  \rp. 
\end{align*}
Since $T_{i}^* = \lambda_i^{(\ell_2)}n$, we have 
\begin{align*}
\mbb{E}\lb \lp \hat{p}_{ij} - p_{ij}\rp^2 \indi{\mc{E}_1} \rb & \leq \frac{p_{ij}(1-p_{ij})}{T_{i}^*} \lp 1 + \frac{(K+5)\Mltwo }{\lambda_i^{(\ell_2)}\sqrt{n}} + \frac{\delta}{\lambda_i^{(\ell_2)}}\lp 1 + 6\Mltwo \sqrt{n}\rp + \frac{6(K-1)\lp \Mltwo\rp ^2}{\lp \lambda_i^{(\ell_2)}\rp^2 n} \rp 
\end{align*}
Finally, summing up over the values of $j$ in the range $\{1,2,\ldots, l\}$, we get 
\begin{align*}
\sum_{j=1}^l    \mbb{E}\lb \lp \hat{p}_{ij} - p_{ij} \rp^2 \indi{\mc{E}_1} \rb & \leq \sum_{j=1}^{l}\frac{p_{ij}(1-p_{ij})}{T_{i}^*} + \frac{(K+5)l\Mltwo }{\lp\lambda_i^{(\ell_2)}\rp^2 n^{3/2} } + \mc{O}\lp \frac{\delta l \sqrt{\log n} }{\sqrt{n}} + \frac{\log(n)}{n^2} \rp. 
\end{align*}
We can select $\delta$ small enough to ensure that the regret is of the order $\tilde{\mc{O}} \lp n^{-3/2}\rp$. A suitable choice for this is $\delta = n^{-5/2}$. 
\end{proof}

\section{Analysis for $\ell_1$ loss}
\label{appendix:analysis_ell1}
\subsection{Proof of Lemma~\ref{lemma:ell1_1}}
\label{proof_ell1_lemma1}
\begin{proof}
We can again write the event $\mc{E}_2 \coloneqq \cap_{t} \cap_{i=1}^K \cap_{j=1}^l \mc{E}_2^{(t,i,j)}$. It suffices to show that $Pr\lp \mc{E}_2^{(t,i,j)} \rp \geq 1 - \delta_t$, and the result follows from an application of the union bound  and  the fact that $\sum_{t, i, j} \delta_t = \delta$.

To show that $Pr\lp \mc{E}_2^{(t,i,j)}\rp \geq 1 - \delta_t$, we employ \cite[Theorem~10]{maurer2009empirical} to the collection of random variables $\lp Z_{ij}^{(s)} \rp_{s=1}^{\Tit}$ defined as $Z_{ij}^{(s)} = \indi{X_i^{(s)}=j}$. Then we have $\hat{p}_{ij, t} = \frac{\sum_{s=1}^t Z_{ij}^{(s)}}{t}$, and $Var\lp Z_{ij}^{(s)}\rp = p_{ij}(1-p_{ij})$, and we obtain the required result by applying \cite[Eq.~(3)~and~(4)]{maurer2009empirical}:
\begin{align*}
    Pr\lp \left \lvert \sqrt{\hat{p}_{ij, t}\lp 1- \hat{p}_{ij, t} \rp} - \sqrt{p_{ij}(1-p_{ij})} \right \rvert > \eijtlone \rp  \leq \delta_t 
\end{align*}
\end{proof}


\subsection{Proof of Theorem~\ref{theorem:ell1_regret}}
\label{proof_ell1_theorem}

\paragraph{Value of the constants $A$, $B$ and $\tilde{e}_n$.}
We begin by obtain the values of the constants $A$, $B$ and $\tilde{e}_n$ from Lemma~\ref{lemma:tracking1} corresponding to the $\ell_1$ loss function.

Recall the notation $\lambda_i^{(\ell_1)} \coloneqq \frac{ \lp \cilone \rp^2}{\sum_{k=1}^K\lp \cilone[k]\rp ^2}$, 
$\Clone^2 = \sum_{k=1}^K \lp \cilone[k]\rp^2$, 
$\lambda_{\max}^{(\ell_1)} = \max_{1 \leq i \leq K} \lambda_i^{(\ell_1)}$ and
$\lambda_{\min}^{(\ell_1)} = \min_{1 \leq i \leq K} \lambda_i^{(\ell_1)}$. 

Then we have the following:

\begin{align*}
    A &= \max_{1 \leq i \leq K} \frac{1}{\sqrt{\Tins}} = \frac{1}{\sqrt{\lambda_{\min}^{(\ell_1)}n}} = \mc{O} \lp \frac{1}{\sqrt{n}} \rp,   \\
    B &= \min_{1 \leq i \leq K} \frac{\cilone}{2\lp \Tins\rp^{3/2}} = \frac{\Clone}{\lambda_{\max} n^{3/2}} = \mc{O} \lp \frac{1}{n^{3/2}} \rp, \\
    \tilde{e}_n &= \max_{1 \leq i \leq K} e_i^*) 
    = \sqrt{ \frac{2 l^2 \log(2/\delta_n)}{\Tins-1}}
    \stackrel{(a)}{\leq} \sqrt{ \frac{4 l^2 \log(2/\delta_n)}{\Tins}}
    =\mc{O} \lp \frac{1}{\sqrt{n}} \rp. 
\end{align*}
{In the above display, the inequality \tbf{(a)} follows from the assumption that $n$ is large enough to ensure that $\Tins \geq 3$ for all $1\leq i \leq K$, which is implied by the assumption $n \geq 3/\lambda_{\min}^{(\ell_1)}$.}

\paragraph{Regret Derivation.}
Since we use the approximate objective function for $\ell_1$ loss, we note that the regret can be written as follows:
\begin{align}
\label{eq:ell1_regret_proof1}
    \mc{R}_n \lp \Alglone, \Dlone\rp & \leq \mc{L}_n \lp \Alglone, \Dlone \rp - \f(\cilone, \Tins) + 2 \max_{1 \leq i \leq K}|R_i\lp \Tins \rp |. 
\end{align}

We first bound the term in~\eqref{eq:ell1_regret_proof1},  $\mc{L}_n \lp \Alglone, \Dlone\rp - \f\lp \cilone, \Tins \rp$. Recall that in the final estimates of the probability mass function, we drop the $n+1$ from the subscript, i.e., we write $\hat{p}_{ij}$ and $\hat{P}_i$ instead of $\hat{p}_{ij, n+1}$ and $\hat{P}_{i, n+1}$. 
We next proceed as follows:
\begin{align*}
\mc{L}_n \lp \Alglone, \Dlone \rp =     \mbb{E}\lb \|\hat{P}_i - P_i\|_1 \rb & = \mbb{E} \lb \|\hat{P}_i - P_i \|_1 \lp \indi{\mc{E}_2} + \indi{\mc{E}_2^c}\rp \rb \leq \mbb{E} \lb \|\hat{P}_i - P_i \|_1 \indi{\mc{E}_2} \rb + l\times \delta. 
\end{align*}
where the inequality follows from the fact that $\|\hat{P}_i - P_i\|_1 \leq l$ almost surely, and that $Pr\lp \mc{E}_2^c\rp \leq \delta$. 
Next, we expand the remaining term:
    
\begin{align*}
    \mbb{E}\lb \|\hat{P}_i - P_i \|_1 \indi{\mc{E}_2} \rb & = \mbb{E} \lb \lp \sum_{j=1}^l \sqrt{ |\hat{p}_{ij} - p_{ij}|^2 } \rp \indi{\mc{E}_2}\rb = \mbb{E} \lb \lp \sum_{j=1}^l \sqrt{|\hat{p}_{ij}  - p_{ij}|^2 \indi{\mc{E}_2}}\rp \rb \\
    &\stackrel{(a)}{\leq} \sum_{j=1}^l \sqrt{\frac{2}{\pi} \mbb{E} \lb \lp \hat{p}_{ij} -p_{ij}\rp^2 \indi{\mc{E}_2}        \rb } + \mc{O} \lp \frac{1}{\lp \Tins - A\tilde{e}_n/B \rp^{3/2}} \rp \\
    & \stackrel{(b)}{=} \sum_{j=1}^l \sqrt{\frac{2}{\pi} \mbb{E} \lb \lp \hat{p}_{ij} -p_{ij}\rp^2 \indi{\mc{E}_2}        \rb } + \mc{O}\lp \frac{\sqrt{8}}{\lp \lambda_{\min}^{(\ell_1)} n \rp^{3/2}} \rp \\
    &\coloneqq \sum_{j=1}^l \sqrt{\alpha_{ij}}  + \mc{O}\lp \frac{\sqrt{8}}{\lp \lambda_{\min}^{(\ell_1)} n \rp^{3/2}} \rp.
\end{align*}
The inequality \tbf{(a)} in the above display follows from the  from an  approximation relation between the mean absolute deviation and the standard deviation of Binomial distributions, as proved in \citep[Eq.(2.4)]{blyth1980expected}, while the inequality \tbf{(b)} follows from the definition of the optimal static allocation values $\Tins = \lambda_i^{(\ell_1)} n$ and the values of $A$, $B$ and $\tilde{e}_n$ derived above along with the assumption that $n$ is large enough to ensure that {$ \frac{ A \tilde{e}_n}{B \lambda_{\min}^{(\ell_1)}} \leq 1/2$ or $n \geq 4 \lp \Mlone\rp^2/\lp \lambda_{\min}^{(\ell_1)}\rp^2$}. 

Finally, we analyze the terms $\alpha_{ij}$: 
\begin{align*}
    \alpha_{ij}  & \leq  \mbb{E}\lb \lp \sup_{ \omega \in \mc{E}_2} \frac{1}{T_{i}^2} \rp \lp \sum_{s=1}^{T_{i}}Z_{ij}^{(s)} - p_{ij}  \rp ^2 \rb \leq \frac{1}{\lp \Tins - \frac{A\tilde{e}_n}{B} \rp^2} p_{ij}\lp1-p_{ij}\rp \mbb{E} \lb T_{i} \rb \\
    & \leq \frac{p_{ij}(1-p_{ij})}{\lp \Tins\rp^2}\lp 1 + 6\frac{A\tilde{e}_n}{B \Tins}\rp \lp \Tins + \frac{(K-1)A\tilde{e}_n}{B}  + n \delta \rp \\
    & = \frac{p_{ij}(1-p_{ij})}{\lp \Tins\rp^2} \lp \Tins + \frac{(K-1)A\tilde{e}_n}{B} + n\delta + 
    6\frac{A\tilde{e}_n}{B} + \frac{6(K-1)A^2\tilde{e}_n^2}{B^2\Tins} + \frac{6A\tilde{e}_n n \delta}{B\Tins} \rp\\
    &\coloneqq \f\lp \cilone, \Tins \rp +  E_{ij}, 
\end{align*}
where  $E_{ij}$ represents all the higher order terms. 
Thus, we get the following:
\begin{align*}
    \sum_{j=1}^l \sqrt{\alpha_{ij}} & \leq \sum_{j=1}^{l} \sqrt{\frac{p_{ij}(1-p_{ij})}{\Tins} + E_{ij} }\stackrel{(a)}{\leq} \sum_{j=1}^l \sqrt{ \frac{p_{ij}(1-p_{ij})}{\Tins}} + \sum_{j=1}^l\sqrt{E_{ij}},  
\end{align*}
where \tbf{(a)} follows from the fact that $\sqrt{x + y} \leq \sqrt{x} + \sqrt{y}$ for $x, y>0$. 
Thus to complete the proof we need to analyze the term $E_{ij}$. First, we note that for the choice of the terms $A$, $B$ and $\tilde{e}_n$, we have 
\begin{align*}
    \frac{A\tilde{e}_n}{B} = \frac{\lp \sqrt{\frac{1}{\lambda_{\min}^{(\ell_1)}n}}\rp \lp \sqrt{\frac{4 l^2 \log(2/\delta_n)}{\lambda_{\min}^{(\ell_1)}n}}\rp }
    { \lp \frac{\Clone}{\lambda_{\max}^{(\ell_1)} n^{3/2}}  \rp} \coloneqq \Mlone \sqrt{n}. 
\end{align*}

We now rewrite $E_{ij}$ as follows:
\begin{align*}
    \frac{E_{ij}}{p_{ij}(1-p_{ij})} & = \frac{(K+5)\Mlone}{\lp \lambda_i^{(\ell_1)}\rp^2n^{3/2}} + \frac{\delta}{n \lp\lambda_i^{(\ell_1)}\rp^2} + \frac{6 (K-1)\lp \Mlone\rp^2}{\lp \lambda_i^{(\ell_1)}\rp^3 n^2} + \frac{6\Mlone\delta}{\lp \lambda_i^{(\ell_1)} \rp^3 n^{3/2}}. 
\end{align*}
By setting $\delta = 1/n$, we get the following: 
\begin{align}
    \sum_{j=1}^{l} \sqrt{E_{ij}} & \leq  \sum_{j=1}^{l} \lp 
    \frac{\sqrt{p_{ij}(1-p_{ij})}}{\lambda_i^{(\ell_1)}} \lp \sqrt{\frac{(K+5)\Mlone}{n^{3/2}}} + 
    \sqrt{ \frac{\delta}{n} + \frac{6 (K-1)\lp \Mlone\rp^2}{ \lambda_i^{(\ell_1)} n^2} + \frac{6\Mlone\delta}{ \lambda_i^{(\ell_1)} n^{3/2}}} \rp \rp \label{eq:ell1_regret1} \\
    &\stackrel{(a)}{\leq} \sqrt{\frac{(K+5)\Mlone}{n^{3/2}}} \sum_{j=1}^{l}\frac{\sqrt{p_{ij}(1-p_{ij})}}{\lambda_i^{(\ell_1)}} + \tilde{\mc{O}}\lp \frac{1}{n}\rp \nonumber \\
    & \stackrel{(b)}{\leq} \sqrt{\frac{(K+5)\Mlone}{n^{3/2}}}\lp \frac{l}{2 \lambda_{\min}^{(\ell_1)}} \rp  + \tilde{\mc{O}}\lp \frac{1}{n}\rp \nonumber
\end{align}
where \tbf{(a)} follows from the assumption that $n$ is large enough to ensure that the first term dominates the remaining $\mc{O}\lp 1/n \rp$ terms ({a sufficient condition for this is that $n \geq \lp 24 \lambda_{\min}^{(\ell_1)}\Mlone \rp$}, and  \tbf{(b)} uses the fact that $p(1-p) \leq 1/4$ for all $p \in [0,1]$. Thus we have obtained the regret in tracking the approx-oracle solution, i.e., 
\begin{align}
    \label{eq:ell1_regret_proof2}
    \mc{L}_n \lp \Alglone, \Dlone\rp - \f\lp \cilone, \Tins \rp
    & \leq \sqrt{\frac{(K+5)\Mlone}{n^{3/2}}}\lp \frac{l}{2 \lambda_{\min}^{(\ell_1)}} \rp  + \tilde{\mc{O}}\lp \frac{1}{n}\rp. 
\end{align}

Finally, to obtain the approximation error in~\eqref{eq:ell1_regret_proof1}, we again apply~\citep[Eq.~(2.4)]{blyth1980expected} to note that the approximation error, $\max_{i \in [K]} |\f(\cilone, \Tins) \gamma_i \lp \Tins \rp$ is $\mc{O} \lp 1/(\lambda_{\min}^{\ell_1})^{3/2}\rp$. 
This concludes the proof that the excess regret for the $\ell_1$ loss function is of the order of $\tilde{\mc{O}}\lp n^{-3/4} \rp$. 

\section{Regret bound for $f$-divergence}
\label{appendix:analysis_f_divergence}

\subsection{Proof of Lemma~\ref{eq:lemma:f_div_remainder}}
\label{subsubsec:proof-lemma5}

Note that for any fixed value of $\hat{p}_{ij}$, the remainder term $R_{ij, r+1}$ can be written in two ways as follows:
\begin{align}
\label{eq:remainder1}
    R_{ij, r+1} \stackrel{(a)}{=} \sum_{m \geq r+1} \frac{f^{(m)}(1)}{m! p_{ij}^{m-1}} \lp \hat{p}_{ij} - p_{ij} \rp^m \; \stackrel{(b)}{=} \frac{f^{(r+1)}(z)}{m! p_{ij}^r} \lp \hat{p}_{ij} - p_{ij} \rp^{r+1}. 
\end{align}
The equality \tbf{(b)} is the Lagrange form of remainder, where the value $z$ is a function of $\hat{p}_{ij}/p_{ij}$.

Introduce the definition $Q_{\epsilon} \coloneqq \{ q\;:\; |q-p_{ij}|/p_{ij} \geq \epsilon \}$ for a fixed $0<\epsilon<1$. In order to upper bound the expected value of $R_{ij, r+1}$, we consider two cases, \tbf{(1)} first when the random variable $\hat{p}_{ij}$ lies in $Q_{\epsilon}^c$, and \tbf{(2)} second, when $\hat{p}_{ij} \in Q_{\epsilon}$. 

When $\hat{p}_{ij} \not \in Q_\epsilon$, we use the equality \tbf{(a)} in~\eqref{eq:remainder1} to note that 
\begin{align}
   & R_{ij, r+1}\indi{\hat{p}_{ij} \not \in Q_{\epsilon}} \leq p_{ij}\lp \frac{\hat{p}_{ij} - p_{ij}}{p_{ij}}\rp^{r+1} \lp \max_{m \geq r+1} \frac{ f^{(m)}(1)}{m!} \rp \frac{1}{1-\epsilon} \nonumber \\
    \Rightarrow &\mbb{E} \lb R_{ij, r+1} \indi{ \hat{p}_{ij} \not \in Q_\epsilon} \rb  \leq \lp \frac{C_1}{p_{ij}^r (1-\epsilon)} \rp \mbb{E} \lb \lp  \hat{p}_{ij} - p_{ij} \rp^{r+1} \rb.  \label{eq:remainder2}
\end{align}

Next we consider the second case. Here we note that for any $q \in Q_\epsilon$, the remainder can be written as (with dependence on $q$ made explicit): 
\begin{align*}
    R_{ij, r+1}(q) & = p_{ij} \lp f\lp \frac{q}{p_{ij}} \rp - \sum_{m=0}^r \frac{f^{(m)}(1)}{m! } \lp \frac{q - p_{ij}}{p_{ij}} \rp^m \rp  \\
    & = \frac{ f^{(r+1)}(z_q)}{m! p_{ij}^r} \lp q - p_{ij} \rp ^{r+1}. 
\end{align*}
In the second equality, the term $z_q$ varies with $q$. 
Now, since the set $Q_{\epsilon}$ is compact, and by the local boundedness of the function $f$ (assumption~\tbf{(f1)} in \S~\ref{subsec:f_divergence}), we have $\sup_{q \in Q_{\epsilon}} R_{ij, r+1} \coloneqq \gamma_\epsilon < \infty$. This implies the following:
\begin{align}
    \sup_{q \in Q_\epsilon} \; \frac{ f^{(r+1)}(z_q)}{m!} &  \leq \frac{1}{p_{ij}} \lp \frac{\gamma_\epsilon}{\epsilon^{r+1}} \rp  \coloneqq  \frac{C_\epsilon}{p_{ij}}. 
    \label{eq:remainder6}
\end{align}
Using this inequality, we can proceed as follows:
\begin{align}
\label{eq:remainder3}
   \mbb{E} \lb  R_{ij, r+1} \indi{ \hat{p}_{ij} \in Q_\epsilon} \rb  & \leq C_\epsilon \mbb{E} \lb \lp\frac{ \hat{p}_{ij} - p_{ij} }{p_{ij}} \rp^{r+1}\rb 
\end{align}

Combining~\eqref{eq:remainder2} and~\eqref{eq:remainder3}, we get 
\begin{align}
    \mbb{E} \lb R_{ij, r+1} \rb & \leq \lp \inf_{0<\epsilon<1} \lp \frac{C_1 p_{ij}}{(1-\epsilon)} + C_\epsilon \rp \rp    \mbb{E} \lb \lp\frac{ \hat{p}_{ij} - p_{ij} }{p_{ij}} \rp^{r+1}\rb \label{eq:remainder6} \\
    & \stackrel{(a)}{\leq} \lp \inf_{0<\epsilon<1} \lp \frac{C_1 p_{ij}}{(1-\epsilon)} + C_\epsilon \rp \rp   \lp \frac{3 e^{2/e} (r+1)}{2}\rp^{r+1} (p_{ij}T_i)^{-(r+1)/2} 
    \label{eq:remainder4}\\
    & \coloneqq C_{f, r+1} (p_{ij}T_i)^{-(r+1)/2}. \label{eq:remainder5} 
\end{align}
In the inequality \tbf{(a)} above, we used the fact that the random variable $\hat{p}_{ij} - p_{ij}$ is subgaussian with parameter $\sigma = \sqrt{3/(2T_i)}$, and then used the upper bound  on the $(r+1)^{th}$ moment of a subgaussian random variable~\citep[Prop.~3.2]{rivasplata2012subgaussian}.

\subsection{Proof of Lemma~\ref{lemma:f_div_moments}}
\label{proof_f_div_lemma2}

Since $W_{ij, T_i} = \sum_{s=1}^{T_i} \tilde{Z}_{ij}^{(s)} = \sum_{s=1}^{T_i} Z_{ij}^{(s)} - T_i p_{ij}$, where $p_{ij} = \mbb{E}\lb Z_{ij}^{(s)}\rb$ for all $s \geq 1$, we can upper bound $W_{ij, T_i}$ by upper bounding the first term and lower bounding the second term in its definition. This is achieved by exploiting the non-negativity of ${Z}_{ij}^{(s)}$ and the bounds on the random variable $T_i$ under the event $\mc{E}_\delta$. The rest of the proof then proceeds by using the binomial expansion of the expression obtained (i.e., the term bounding $W_{ij, T_i}$). 

\begin{align*}
    \mbb{E} \lb \lp W_{ij,T_i}\rp^m \indi{\mc{E}_\delta} \rb  & 
    = \mbb{E} \lb \lp \sum_{s=1}^{T_i} Z_{ij}^{(s)} - T_i p_{ij} \rp^m \indi{\mc{E}_\delta} \rb 
     \stackrel{(a)}{\leq} \mbb{E} \lb \lp \sum_{s=1}^{\tau_{1,i}} Z_{ij}^{(s)} - \sum_{s=1}^{\tau_{0,i}}p_{ij} \rp ^m \indi{\mc{E}_\delta} \rb \\
 & \stackrel{(b)}{\leq} \mbb{E}\lb \lp \sum_{s=1}^{\tau_{0,i}} \tilde{Z}_{ij}^{(s)} + \sum_{s=\tau_{0,i}+1}^{T_i} 1 \rp ^m \indi{\mc{E}_\delta}\rb 
    = \mbb{E}\lb \lp W_{ij, \tau_{0,i}} + \lp \tau_{1,i}-\tau_{0,i}\rp \rp^m \indi{\mc{E}_\delta} \rb \\
    &\leq \mbb{E}\lb \lp W_{ij, \tau_{0,i}}\rp^m \rb +
     \sum_{k=1}^m \lp \tau_{1,i} - \tau_{0,i}\rp^k {m \choose k }\mbb{E}\lb \lp W_{ij, \tau_{0,i}} \rp^{m-k} \rb. 
\end{align*}
In the above display, \\
\tbf{(a)} follows from the fact that $Z_{ij}^{(s)} \geq 0$ a.s. and that under the event $\mc{E}_\delta$, we have $\tau_{0,i} \leq T_i \leq \tau_{1,i}$. This implies that $\sum_{s=1}^{T_i} Z_{ij}^{(s)} \leq \sum_{s=1}^{\tau_{1,i}} Z_{ij}^{(s)}$ and $T_ip_{ij} \geq \tau_{0,i}p_{ij}$. 
\\
\tbf{(b)} follows from the fact that $|\tilde{Z}_{ij}^{(s)}| \leq 1$ a.s.

\subsection{Proof of Theorem~\ref{theorem:regret_f_div}}
\label{proof_f_div_theorem}

Note that from~\eqref{eq:approx_reg2} in Prop.~\ref{prop:approximate_regret} in Appendix~\ref{appendix:approximate}, we have  $\mc{R}_n ( \Algfdiv, \Dfdiv) \leq \mc{L}_n ( \Algfdiv, \Dfdiv) - \f_i(\Tins) + 2 \max_{k \in [K]} |R_{k, r+1} (\Tins)|$.

The risk term can be further decomposed as follows:
\begin{align*}
    \mc{L}_n \lp \Algfdiv, \Dfdiv\rp &= \mbb{E}\lb \Dfdiv^{(r)}(\hat{P}_i, P_i) + R_{i, r+1}(T_i)\rb \\
    & \leq \mbb{E} \lb \Dfdiv^{(r)} \lp \hat{P}_i, P_i \rp \lp \indi{\mc{E}_\delta} + \indi{\mc{E}_\delta^c } \rp \rb + \mbb{E}\lb R_{i,r+1}(T_i)\rb. 
\end{align*}

We first obtain an upper bound on the term term $\mc{L}_n \lp \Algfdiv,\Dfdiv\rp$.
Suppose $\hat{P}_i$ is the empirical estimate of $P_i$ constructed from the samples collected through the adaptive scheme $\mc{A}_{(f)}$ after $n$ rounds. Then we have the following:
\begin{align}
\mc{L}_n \lp \Algfdiv, \Dfdiv\rp &=
\mbb{E} \lb D_f\lp \hat{P}_i , P_i \rp \rb = 
\mbb{E}\lb \Dfdiv^{(r)}(\hat{P}_i, P_i) + R_{i, r+1}(T_i)\rb \\
& = \mbb{E} \lb \sum_{m =1}^r \sum_{j=1}^l \frac{ f^{(m)}(1)}{m! p_{ij}^{m-1}} \lp \hat{p}_{ij}- p_{ij} \rp^m + \sum_{m \geq r+1} \sum_{j=1}^l \frac{ f^{(m)}(1)}{m! p_{ij}^{m-1}} \lp \hat{p}_{ij}- p_{ij} \rp^m \rb \nonumber\\
& \coloneqq \text{\tbf{term1}} + \text{\tbf{term2}} \label{eq:f_div_proof1}
\end{align}
We consider the two terms above separately. 
\begin{align*}
    \text{\tbf{term1}} &= \mbb{E} \lb \Dfdiv^{(r)} \lp \hat{P}_i, P_i \rp \lp \indi{\mc{E}_\delta} + \indi{\mc{E}_\delta^c } \rp \rb 
     = \mbb{E} \lb 
    \sum_{m=1}^r \sum_{j=1}^l
   \lp  \frac{f^{(m)}(1)}{m!p_{ij}^{m-1}} 
    \frac{ \lp \sum_{s=1}^{T_{i,n}} \tilde{Z}^{(s)}_{ij} \rp^m }{T_{i,n}^m }
    \rp
    \lp \indi{\mc{E}_\delta} + \indi{\mc{E}_\delta^c } \rp
    \rb \\
    & \stackrel{(a)}{\leq} \sum_{m=1}^r \sum_{j=1}^l \frac{f^{(m)}(1)}{m! p_{ij}^{m-1}} \lp \frac{1}{\tau_{0,i}^m}\mbb{E}\lb \lp \sum_{s=1}^{\tau_{0,i}} \tilde{Z}_{ij}^{(s)} \rp^m \rb \rp 
    + \sum_{m=1}^r \sum_{j=1}^l \lp \frac{f^{(m)}(1)}{\tau_{0,i}^m m! p_{ij}^{m-1}}\beta_m^{(ij)}\lp \tau_{0,i}, \tau_{1,i} \rp + \frac{f^{(m)}(1)\delta }{m! p_{ij}^{m-1}} \rp  \\
    & = \f_i\lp \tau_{0,i}\rp  + \psi_{2,i}  
    + \psi_{3,i}.
\end{align*}
The inequality \tbf{(a)} in the above display uses the following facts:\\
(i) Under the event $\mc{E}_\delta$, the term $T_{i}$ is lower bounded by $\tau_{0,i}$ and thus $1/T_{i}^m$ is upper bounded by $1/\tau_{0,i}^m$\\
(ii) We use Lemma~\ref{lemma:f_div_moments} to upper bound the moment of $\mbb{E}\lb \lp \sum_{s=1}^{\tau_{0,i}} \tilde{Z}_{ij}^{(s)} \rp^m \rb$, which gives us the additional term consisting of $\beta^{(ij)}_m\lp \tau_{0,i}, \tau_{1,i}\rp$. \\
(iii) Under the event $\mc{E}_\delta^c$, we can upper bound the required moment by its worst case value of $1$ multiplied by the bound on the probability of $\mc{E}_{\delta}^c$, i.e., $\delta$.

Next, we need to get the upper bound on the second term in~\eqref{eq:f_div_proof1}. 
For any $t \geq 1$, we recall the notation $W_{ij, T_i} = \sum_{s=1}^{T_i} \tilde{Z}_{ij}^{(s)}$.

Since \tbf{term2} is the expectation of the remainder of the $r$-term approximation of $D_f\lp \hat{P}_i, P_i \rp$ with $P_i$ as constructed by the adaptive scheme, we can write the following:
\begin{align*}
    \text{term}2 & \stackrel{(a)}{\leq} \mbb{E} \lb C_{f, r+1} \lp \frac{ W_{ij, T_i} }{p_{ij}T_{i}} \rp^{r+1} \rb = C_{f, r+1} \lp \mbb{E} \lb \lp \frac{ W_{ij, T_i} }{p_{ij}T_{i}} \rp^{r+1} \lp \indi{\mc{E}_\delta} + \indi{\mc{E}_{\delta}^c } \rp \rb \rp \\
    & \stackrel{(b)}{\leq} \frac{C_{f, r+1}}{(p_{ij})^{r+1}} \mbb{E} \lb \lp W_{ij, T_i} \rp^{r+1} \rb \lp \delta + \frac{1-\delta}{\tau_{0,i}^{r+1} } \rp \\
\end{align*}
In the above display, \\
\tbf{(a)} employs the upper bound on the remainder term derived in~\eqref{eq:remainder6} in  the proof of Lemma~\ref{lemma:f_div_moments}, \\
\tbf{(b)} uses the fact that $T_{i} \geq \tau_{0,i} \indi{\mc{E}_\delta} + \indi{\mc{E}_{\delta}^c}$.


It remains to obtain an appropriate upper bound on the term $\mbb{E} \lb \lp W_{ij, T_i} \rp^{r+1} \rb$. We first observe the following:
\begin{align*}
W_{ij, T_i}  \leq \max_{1 \leq s \leq n} W_{ij, s} \; \coloneqq \mc{W}_{ij, n}, 
\end{align*}
where the inequality holds in an almost sure sense, and it follows from the fact that by definition $T_{i} \leq n$ almost surely for all $1\leq i \leq K$.
Next, using the fact that $(W_{ij, s})_{s \geq 1}$ is a martingale sequence, we obtain by an application of Doob's $L_p$ maximal inequality~\citep[Theorem~4.4.6]{durrett2019probability}, the following inequality: 
\begin{align*}
    \mbb{E} \lb \lp W_{ij, T_i} \rp^{r+1}\rb & \leq \mbb{E}\lb \lp \mc{W}_{ij, n}\rp^{r+1} \rb \leq \lp \frac{r+1}{r+1-1} \rp^{r+1} \mbb{E} \lb \left \lvert W_{ij, n} \right \rvert^{r+1} \rb \\
    & \stackrel{(a)}{\leq} \lp \frac{r+1}{r+1-1} \rp^{r+1} 
    \lp \sqrt{\frac{3n}{2}} e^{1/e} \sqrt{r+1} \rp^{r+1} \\
    & \leq  e^2 \lp 3.2(r+1)\rp^{(r+1)/2} n^{(r+1)/2} \coloneqq C_{(r)} n^{(r+1)/2}. 
\end{align*}
The inequality \tbf{(a)} in the above display follows from the observation that $W_{ij, n}$ is a  subgaussian random variable with $\sigma=\sqrt{3n/2}$, and that the $r+1$ moment of a $\sigma$-subgaussian random variable is upper bounded by $\lp \sigma e^{1/e}\sqrt{r+1} \rp^{r+1}$~\citep[Prop.~3.2]{rivasplata2012subgaussian}. 
Thus we finally obtain 
\begin{align*}
    \text{term2} & \leq \frac{C_{f, r+1}}{(p_{ij})^{r+1}} \lp \delta + \frac{1-\delta}{\tau_{0,i}^{r+1}} \rp C_{(r)} n^{(r+1)/2} 
= \frac{C_{f, r+1} C_{(r)}}{(p_{ij})^{r+1}} \lp \delta n^{(r+1)/2} + \lp \frac{ \sqrt{n}}{\tau_{0,i}}\rp^{r+1} \rp \coloneqq \psi_{1,i}.  
\end{align*}


To summarize, we have shown that the risk $\mc{L}_n\lp \Algfdiv, \Dfdiv\rp$, i.e., the expected $f$-divergence between the empirical estimate $\hat{P}_i$ constructed using the adaptive sampling scheme $\mc{A}$ with a budget of $n$ samples can be upper bounded as 
\begin{align}
\label{eq:f_div_reg1}
    \mc{L}_n \lp \Algfdiv, D_f \rp & \leq \max_{1\leq i \leq K} \lp  \f_i\lp \tau_{0,i}\rp 
    + \psi_{1,i} 
    + \psi_{2,i} 
    + \psi_{3,i} \rp 
\end{align}

We can now apply the regret decomposition bound given in~\eqref{eq:approx_reg2} in Proposition~\ref{prop:approximate_regret} in Appendix~\ref{appendix:approximate} to get the following:
\begin{align*} 
\mc{R}_n \lp \Algfdiv, \Dfdiv \rp & \leq \mc{L}_n \lp \Algfdiv, \Dfdiv \rp - \f_i\lp \Tins \rp + 2\max_{1 \leq k \leq K}|R_{i,r+1}(\Tins)| \\
&  \leq \mc{L}_n \lp \Algfdiv, \Dfdiv \rp - \f_i\lp \Tins \rp + \psi_4 \\ 
& \leq  \max_{1 \leq i \leq K} \lp \f_i(\tau_{0,i}) - \f_i\lp \Tins \rp + \Psi_i \rp. 
\end{align*}
In the last inequality which follows from an application of~\eqref{eq:f_div_reg1}, we used the fact that $\f_i\lp \Tins \rp$ and $\psi_4$ do not depend on $i$.  This completes the proof. 



\section{Analysis for  KL divergence}
\label{appendix:analysis_kl}

We begin with a simple concentration result on the estimates $\hat{p}_{ij, t}$ of $p_{ij}$, which is needed for the analysis of the adaptive scheme for KL-divergence and the separation distance (Appendix~\ref{appendix:analysis_separation}).  This result is similar to Lemma~\ref{lemma:ell2_1}, with the difference that here we use the Hoeffding inequality instead of the relative Chernoff bound employed in Lemma~\ref{lemma:ell2_1}. This is because the confidence interval constructed in Lemma~\ref{lemma:ell2_1} depends on the unknown parameter $p_{ij}$, and hence this confidence interval is not \emph{observable}. This was not an issue in designing the adaptive scheme for $\ell_2$ distance, since we only needed the term $\sum_j e_{ij, t}^{(\ell_2)}p_{ij}$ in which we  used the fact that $\sum_{j=1}^l p_{ij}^{3/2} \leq 1$. 

We now state the concentration lemma.
\begin{lemma}
\label{lemma:concentration2}
Define \begin{small}$\delta_t \coloneqq 6 \delta /(K l \pi^2 t^2)$\end{small}, \begin{small}$e_{ij, t} \coloneqq \sqrt{2 \log (2/\delta_t)/\Tit}$\end{small}, and the event \begin{small}$\mathcal{E}_3 \coloneqq  \bigcap_{t\in[n]}  \bigcap_{i\in[K]} \bigcap_{j\in[l]}\{|\hat{p}_{ij, t} - p_{ij}| \leq e_{ij, t}\}$\end{small}. Then, we have \begin{small}$\mathbb{P}(\mc{E}_3) \geq 1- \delta$\end{small}.
\end{lemma}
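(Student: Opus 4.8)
The plan is to follow the proof of Lemma~\ref{lemma:ell2_1} verbatim in structure, substituting Hoeffding's inequality for the relative Chernoff bound; this is possible precisely because the new radius $e_{ij,t} = \sqrt{2\log(2/\delta_t)/\Tit}$ does not involve the unknown $p_{ij}$ and is therefore observable (which, as the surrounding text notes, is the whole point of introducing $\mc{E}_3$ rather than reusing $\mc{E}_1$). First I would write $\mc{E}_3 = \bigcap_{t,i,j} \mc{E}_3^{(t,i,j)}$ with $\mc{E}_3^{(t,i,j)} \coloneqq \{|\hat{p}_{ij,t} - p_{ij}| \leq e_{ij,t}\}$, and apply a union bound to get $\mbb{P}((\mc{E}_3)^c) \leq \sum_{t=1}^\infty \sum_{i=1}^K \sum_{j=1}^l \mbb{P}((\mc{E}_3^{(t,i,j)})^c)$. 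Since $\sum_{t\geq 1} t^{-2} = \pi^2/6$, the choice $\delta_t = 6\delta/(Kl\pi^2 t^2)$ makes $\sum_{t,i,j}\delta_t = \delta$, so it suffices to establish $\mbb{P}((\mc{E}_3^{(t,i,j)})^c) \leq \delta_t$ for each triple $(t,i,j)$.

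For a fixed triple, I would partition on the (random, adaptively chosen) number of pulls, writing $\mbb{P}((\mc{E}_3^{(t,i,j)})^c) = \sum_{s=0}^t \mbb{P}(|\hat{p}_{ij,t} - p_{ij}| > e_{ij,t},\, \Tit = s)$. On the event $\{\Tit = s\}$ the estimate $\hat{p}_{ij,t}$ is the empirical mean of the $s$ i.i.d.\ indicators $Z_{ij}^{(1)},\ldots,Z_{ij}^{(s)} \in [0,1]$ and the radius equals $\sqrt{2\log(2/\delta_t)/s}$, so Hoeffding's inequality bounds the conditional deviation probability by $2\exp(-2s\,e_{ij,t}^2) = 2\exp(-4\log(2/\delta_t)) = \delta_t^4/8 \leq \delta_t$. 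Weighting by $\mbb{P}(\Tit = s)$ and summing over $s$ then gives $\mbb{P}((\mc{E}_3^{(t,i,j)})^c) \leq \delta_t$, which is exactly the per-slice bound used in the proof of Lemma~\ref{lemma:ell2_1}; note that, unlike there, the constant is quite loose, since this radius is chosen to coincide with the ones appearing in Lemmas~\ref{lemma:ell1_1} and~\ref{lemma:kl_1} rather than to be tight.

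The only delicate point, shared with Lemma~\ref{lemma:ell2_1}, is the dependence of $\Tit$ on the observed history: because the allocation is adaptive, I cannot invoke Hoeffding on a fixed-length sum directly. The slicing on $\{\Tit = s\}$ is what resolves this — conditioned on having drawn exactly $s$ samples from arm $i$, those draws are i.i.d.\ Bernoulli$(p_{ij})$ and the per-slice concentration bound is uniform in $s$, while the radius $e_{ij,t}$ is itself determined by $\Tit = s$. Since this is the identical device already deployed in the $\ell_2$ analysis, I expect no genuine obstacle; the proof is essentially a one-line substitution of concentration inequalities into the existing template.
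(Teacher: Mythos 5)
Your proposal is correct and follows essentially the same route as the paper's proof: union bound over $(t,i,j)$, slicing on $\{\Tit = s\}$, and a per-slice Hoeffding/additive-Chernoff bound. The only difference is cosmetic --- the paper uses the form $2\exp(-s\,e_{ij,t}^2/2)$, which gives exactly $\delta_t$, while your sharper exponent $2\exp(-2s\,e_{ij,t}^2)$ gives $\delta_t^4/8 \leq \delta_t$; both suffice.
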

\begin{proof}
Write $\mc{E}_3 = \cap_{t, i,j} \mc{E}_3^{(t,i,j)}$ where the events $\mc{E}_3^{(t,i,j)}$ are defined implicitly from the definition of $\mc{E}_3$.
Then by union bound we have that $Pr\lp (\mc{E}_3)^c \rp \leq  \sum_{t=1}^\infty \sum_{i=1}^K
\sum_{j=1}^l Pr\lp \lp \mc{E}_3^{(t,i,j)}\rp^c\rp$. To complete the proof, it suffices to show that $Pr\lp
\lp \mc{E}_3^{(t,i,j)}\rp^c \rp \leq \delta_t = \frac{6 \delta}{K l t^2 \pi^2}$ which implies the result.
We proceed by using the additive form of Chernoff's inequality. 
\begin{align*}
    Pr \lp |\hat{p}_{ij, t} - p_{ij}| \geq e_{ij, t} \rp 
     &= \sum_{s=0}^t \lp Pr \lp |\hat{p}_{ij, t} - p_{ij}|\geq e_{ij, t}, \; \Tit = s \rp      \rp
 \leq 2 \sum_{s=0}^t Pr \lp \Tit = s\rp \exp \lp -\frac{ e_{ij,t}^2\, s}{2} \rp  \\
    & = 2\sum_{s=0}^t Pr\lp \Tit = s\rp \exp \lp -\frac{ 2 \log (2/\delta_t) s}{2s} \rp 
     = \delta_t 
\end{align*}

This completes the proof. 
\end{proof} 


\subsection{Proof of Lemma~\ref{lemma:kl_1}}
\label{proof_kl_lemma1}

The first inequality of~\eqref{eq:kl1} follows directly from the definition of the event $\mc{E}_3$, under which $p_{ij} \geq \hat{p}_{ij, t} - e_{ij, t}$. 

We next claim that under the conditions of the lemma, we must have $\hat{p}_{ij, t} - e_{ij, t}>\eta/2$. We prove this statement by contradiction.
Assume that $ \eta/2 \geq \hat{p}_{ij, t} - e_{ij, t} \geq 7/2e_{ij, t}-e_{ij, t} = 5/2e_{ij, t}$, or equivalently $5e_{ij, t} \leq \eta$. Since $\hat{p}_{ij, t} + e_{ij, t} \geq p_{ij}$ under the event $\mc{E}_1$, we also have the following chain of inequalities: 
\begin{align*}
\eta < p_{ij} \leq \hat{p}_{ij, t} + e_{ij, t} =     \hat{p}_{ij, t} - e_{ij, t} + 2e_{ij, t}  \leq \eta/2 + 2e_{ij, t} 
\end{align*}
which implies that $e_{ij, t}> \eta/4$ or equivalently $4e_{ij, t} > \eta$. This gives us the required contradiction. 

We now invoke the convexity of the mapping $x \mapsto 1/x$ to  claim the following:
\begin{align*}
    \frac{1}{\hat{p}_{ij, t}- e_{ij, t}} & \leq  \frac{1}{p_{ij}} + \lp\frac{ 2/\eta - 1/p_{ij}}{\eta/2 - p_{ij}}\rp \lp \hat{p}_{ij, t} - e_{ij, t} - p_{ij} \rp \\
    &  = \frac{1}{p_{ij}} + \frac{2 \lp p_{ij} - \hat{p}_{ij, t} + e_{ij, t}\rp }{\eta p_{ij}} \leq \frac{1}{p_{ij}} + \frac{ 4e_{ij, t}}{\eta p_{ij}} \leq \frac{1}{p_{ij}} + 
    \sqrt{\frac{32 \log(2/\delta_t)}{\Tit \eta^4}}.
\end{align*}
The result in~\eqref{eq:kl2} follows by taking the summation over $1 \leq j \leq l$. 

\subsection{Deviation of $\Tins$ from the uniform allocation}
\label{proof_kl_lemma2}
Before proceeding to the proof of Theorem~\ref{theorem:kl}, we first present a result that shows that the optimal static allocation for the approximate tracking objective does not deviate much from the uniform allocation (i.e., the first order approximation). 
\begin{lemma}
\label{lemma:kl2}
Suppose $P_i \in \Delta_l^{(\eta)}$ for $1\leq i \leq K$. Define $T_0 = n/K$, and let $\Tins$ for $1\leq i\leq K$ be the optimal static allocation according to the first two terms of~\eqref{eq:kl1}. 
For $1 \leq i \leq K$, define $c_i^{(KL)} \coloneqq \frac{1}{12}\lp \sum_{j=1}^l \frac{1}{p_{ij}} - 1\rp$ and let $c_{\max}^{(KL)}$ and $c_{\min}^{(KL)}$ denote the maximum and minimum values of $c_i^{(KL)}$ as $i$ varies from $1$ to $K$.
Then we have the following:
\begin{align*}
\left \lvert \Tins - T_0 \right \rvert & \leq \frac{c_{\max}^{(KL)} - c_{\min}^{(KL)}}{l-1} \quad \forall \; 1 \leq i \leq K. 
\end{align*}
\end{lemma}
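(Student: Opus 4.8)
The plan is to use the defining feature of the approx-oracle allocation for this problem, namely that $\Algapprox$ equalizes the approximate KL objective $\f(c_i^{(\text{KL})},T_i)=\frac{l-1}{2T_i}+\frac{c_i^{(\text{KL})}}{T_i^2}$. Hence there is a common level $\lambda\ge 0$ for which
\begin{equation*}
\frac{l-1}{2\,\Tins}+\frac{c_i^{(\text{KL})}}{(\Tins)^2}=\lambda,\qquad i\in[K],
\end{equation*}
and these are tied together by the budget $\sum_{i=1}^K\Tins=n=K\,T_0$. The whole argument reduces to reading off how $\Tins$ depends on $c_i^{(\text{KL})}$ through these two relations and showing the dependence is mild.

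First I would make the equalization explicit by multiplying through by $(\Tins)^2$, which turns it into the quadratic $\lambda(\Tins)^2-\frac{l-1}{2}\Tins-c_i^{(\text{KL})}=0$. Since its roots have negative product, the admissible (positive) root is
\begin{equation*}
\Tins=\frac{1}{2\lambda}\Big(\tfrac{l-1}{2}+g\big(c_i^{(\text{KL})}\big)\Big),\qquad g(c):=\sqrt{\big(\tfrac{l-1}{2}\big)^2+4\lambda c}.
\end{equation*}
Averaging over $i$ and invoking $\sum_i\Tins=K T_0$ expresses the uniform value as $T_0=\frac{1}{2\lambda}\big(\frac{l-1}{2}+\frac1K\sum_k g(c_k^{(\text{KL})})\big)$, so the constant $\frac{l-1}{2}$ offset cancels in the difference and
\begin{equation*}
\Tins-T_0=\frac{1}{2\lambda}\Big(g\big(c_i^{(\text{KL})}\big)-\frac1K\sum_{k=1}^K g\big(c_k^{(\text{KL})}\big)\Big).
\end{equation*}

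The final step bounds the bracket. Because $g$ is increasing, both $g(c_i^{(\text{KL})})$ and the average lie in $[g(c_{\min}^{(\text{KL})}),g(c_{\max}^{(\text{KL})})]$, so the bracket is at most $g(c_{\max}^{(\text{KL})})-g(c_{\min}^{(\text{KL})})=\frac{4\lambda\,(c_{\max}^{(\text{KL})}-c_{\min}^{(\text{KL})})}{g(c_{\max}^{(\text{KL})})+g(c_{\min}^{(\text{KL})})}$ in magnitude. Using the elementary lower bound $g(c)\ge\frac{l-1}{2}$, which holds because $c_k^{(\text{KL})}\ge 0$ (indeed $c_k^{(\text{KL})}=(\sum_j 1/p_{kj}-1)/12\ge 0$), makes the factor $2\lambda$ cancel and leaves an $n$-independent bound of the claimed order $(c_{\max}^{(\text{KL})}-c_{\min}^{(\text{KL})})/(l-1)$. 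I expect the main obstacle to be precisely this cancellation: $\lambda=\lambda(n)$ vanishes like $1/n$, so each individual term in the closed form diverges, and the content of the lemma is that the differences conspire to make $\lambda$ drop out, leaving a problem-dependent constant governed only by the spread $c_{\max}^{(\text{KL})}-c_{\min}^{(\text{KL})}$ and the leading coefficient $\frac{l-1}{2}$. A secondary point is to verify that $\lambda>0$ (equivalently, that the objectives are finite), which follows from $P_i\in\Delta_l^{(\eta)}$, so that the positive root above is the correct allocation.
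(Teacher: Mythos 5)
Your route is genuinely different from the paper's. The paper argues indirectly: it sandwiches the common optimal value $\f(c_i^{(KL)},\Tins)$ between $\f(c_{\min}^{(KL)},T_0)$ and $\f(c_{\max}^{(KL)},T_0)$, deduces $|\f(c_i^{(KL)},T_0)-\f(c_i^{(KL)},\Tins)|\le (c_{\max}^{(KL)}-c_{\min}^{(KL)})/T_0^2$, and then converts this value gap into an allocation gap through convexity and a lower bound on the $T$-derivative at $T_0$. You instead solve the equalization condition exactly, which is cleaner: the quadratic gives $\Tins=\frac{1}{2\lambda}\lp\frac{l-1}{2}+g(c_i^{(KL)})\rp$ with $g(c)=\sqrt{((l-1)/2)^2+4\lambda c}$, the budget constraint makes $T_0$ the same expression evaluated at the average of the $g$'s, and $\lambda$ cancels in the difference, exactly as you anticipated. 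Your identity also sidesteps a case the paper glosses over: the first-order bound $G\ge|g_i(T_0)|\,|\Tins-T_0|$ for a convex decreasing function is only valid on the side $\Tins<T_0$.

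The one thing to flag is the constant. Your chain ends at
\[
|\Tins-T_0|\;\le\;\frac{2\lp c_{\max}^{(KL)}-c_{\min}^{(KL)}\rp}{g(c_{\max}^{(KL)})+g(c_{\min}^{(KL)})}\;\le\;\frac{2\lp c_{\max}^{(KL)}-c_{\min}^{(KL)}\rp}{l-1},
\]
twice the stated bound. This is not slack you can squeeze out: expanding $g$ for small $\lambda$ in your exact formula gives $\Tins-T_0\to 2\lp c_i^{(KL)}-\bar{c}^{(KL)}\rp/(l-1)$ as $n\to\infty$, where $\bar{c}^{(KL)}=\frac1K\sum_k c_k^{(KL)}$; taking $K\ge3$ with one arm at $c_{\max}^{(KL)}$ and the rest at $c_{\min}^{(KL)}$, this limit equals $\frac{2(K-1)}{K}\cdot\frac{c_{\max}^{(KL)}-c_{\min}^{(KL)}}{l-1}$, which exceeds the lemma's bound. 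So the stated constant is itself off: the paper's proof computes $|g_i(T_0)|$ as $\frac{l-1}{T_0^2}+\frac{2c_i^{(KL)}}{T_0^3}$, but the derivative of $\frac{l-1}{2T}$ contributes only $\frac{l-1}{2T_0^2}$, and with that correction the paper's method also yields the factor $2$. Your bound is the correct one, and since the lemma is only used to show that $|\Tins-T_0|$ is an $n$-independent constant, the factor of $2$ changes nothing downstream.
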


\begin{proof}

Let $\f(c_i^{(KL)}, T) \coloneqq \frac{l-1}{2T} + \frac{c_i^{(KL)}}{T^2}$.
Then we note that for $1\leq i \leq K$, we have $\f\lp c_{\min}^{(KL)}, T_0\rp \leq  \f(c_i^{(KL)}, T_0) \leq \f\lp c_{\max}^{(KL)}, T_0 \rp$.
Furthermore, we know that by definition of $\Tins$, we must have $\f(c_i^{(KL)}, \Tins) = \f(c_j^{(KL)}, \Tins)$ for all $1 \leq i,j \leq K$. 
This implies that $\f\lp c_{\min}^{(KL)}, T_0\rp \leq  \f(c_i^{(KL)}, \Tins) \leq \f\lp c_{\max}^{(KL)}, T_0 \rp$ for $1\leq i \leq K$.
Together, these inequalities imply that $\left \lvert  \f(c_i^{(KL)}, T_0) - \f\lp c_i^{(KL)}, \Tins\rp \right \rvert \leq \frac{c_{\max}^{(KL)} - c_{\min}^{(KL)}}{T_0^2} \coloneqq G$.
Due to the convexity of the function $\f(c_i^{(KL)}, \cdot)$, a necessary condition for this to hold is that $G \geq |g_i(T_0)|  \lp \left \lvert \Tins - T_0 \right \rvert \rp $ where $g_i(T_0) = \left .\frac{\partial \f(c_i^{(KL)}, T)}{\partial T} \right \rvert_{T=T_0}$.
This implies that $|\Tins - T_0| \leq \frac{G}{|g_i(T_0)|}$.
The result then follows from the fact that $|g_i(T_0)| = \frac{l-1}{T_0^2} + \frac{2c_i^{(KL)}}{T_0^3} \geq 
\frac{l-1}{T_0^2}$. 
\end{proof}

\begin{remark}
\label{remark:kl_chi}
We can show that by using the uniform allocation, the regret incurred in learning distribution in terms of KL-divergence is $\mc{O} \lp n^{-2} \rp$. This regret is asymptotically slower than the $\mc{O} \lp n^{-5/2}\rp$ rate presented in Theorem~\ref{theorem:kl}. However, under some parameter regimes, this rate may actually  be faster than the one presented in Theorem~\ref{theorem:kl} due to the larger hidden constants in front of the $\mc{O}\lp n^{-5/2}\rp$ regret bound of Theorem~\ref{theorem:kl}. 
\end{remark}

\subsection{Proof of Theorem~\ref{theorem:kl}}
\label{proof_kl_theorem}

{We assume that $n$ is large enough to ensure that $\Tins \geq n/(2K)$ for $1 \leq i \leq K$. A sufficient condition for this is $n \geq \frac{K(l-\eta)}{6(l-1)}$.}

\paragraph{Value of Constants $A$, $B$ and $\tilde{e}_n$.} 
We have the following:
\begin{align*}
    A &= \max_{1\leq i \leq K}\left . \frac{ \partial \f(c, \Tins)}{\partial c} \right \rvert_{c=c_i^{(KL)}} = \max_{1 \leq i \leq K} \frac{1}{\lp \Tins\rp^2}
    \leq \frac{4K^2}{n^2}. \\
    B &= \max_{1 \leq i \leq K} \left. \frac{ \partial \f(c_i^{(KL)}, T)}{\partial T} \right \rvert_{T = \Tins} = \min_{1 \leq i \leq K} \frac{l-1}{\lp 2\Tins\rp^2} + \frac{2c_{i}^{(KL)}}{\lp \Tins\rp^3} 
    \geq \frac{(l-1)K^2}{n^2}.\\
    \tilde{e}_n &= \max_{1 \leq i \leq K} \frac{1}{\eta^3} \sqrt{ \frac{48 \log(\delta_n)}{\Tins}}
    \leq \frac{1}{\eta^3} \sqrt{ \frac{96 K \log(\delta_n)}{n}}. 
\end{align*}

Thus we have
\begin{align}
    \frac{A\tilde{e}}{B} &\leq \frac{4 \sqrt{96K \log(\delta_n)}}{(l-1) \eta^3 \sqrt{n}} \coloneqq M^{(KL)} \frac{1}{\sqrt{n}}.
\end{align}

We next proceed to analyze the regret of the adaptive scheme when compared  to the oracle static allocation scheme. As suggested by Theorem~\ref{theorem:regret_f_div}, we separately obtain upper bounds on the three terms $\psi_{1,i} \lp \rp$, $\psi_4$ and $\mbb{E} \lb D_f^{(r)}\lp \hat{P}_i, P_i \rp \rb$. 

\subsubsection{Bound on $\bm{\psi_{1,i}}$.}
We proceed as follows:
\begin{align*}
\psi_{1,i}  & = \sum_{j=1}^l \frac{C_{KL, r+1} C_{(r)}}{p_{ij}^{r+1}} \lp \delta n^{(r+1)/2} + \lp \frac{3K}{\sqrt{n}}\rp^{r+1} \rp 
    \leq \frac{ l C_{KL, r+1}, C_{(r)}}{\eta^{r+1}} \lp \delta n^{(r+1)/2} + (3K)^{(r+1)}  n^{-(r+1)/2} \rp. 
\end{align*}
In the above display, $C_{KL, r+1}$ is the instance of the constant $C_{f, r+1}$ for the case of KL-divergence.  Since $r=5$, we see that an appropriate choice of $\delta$ above is {$\delta = (3K/n)^{6}$}, which gives us 
\begin{align}
\label{eq:kl_proof0}
    \psi_{1,i} = \frac{ 2l C_{KL, 6} C_{(5)} (3K)^6}{\eta^6}n^{-3} = \mc{O} \lp n^{-3} \rp. 
\end{align}

\subsubsection{Bound on $\bm{\psi_4}$.} 
We proceed as follows:
\begin{align*}
\psi_4 & = \max_{1 \leq i \leq K } \lp  \sum_{j=1}^l C_{KL, r+1} \lp \Tins p_{ij} \rp^{-(r+1)/2} \rp 
    \stackrel{(a)}{\leq} \sum_{j=1}^l C_{KL, r+1} \lp \eta n/(2K) \rp ^{-(r+1)/2}. 
\end{align*}
In the above display, we used the fact that since $P_i \in \Delta_l^{(\eta)}$ we must have $p_{ij} \geq \eta$ and that $n$ is large enough to ensure that $\Tins \geq n/2K$. 
For the value of $r=5$ in the Taylor's expansion of the KL-divergence, we get the following bound on $\psi_4$
\begin{align}
    \label{eq:kl_proof00}
    \psi_4 \leq l C_{KL, 6}  \lp \frac{2K}{\eta n} \rp^{3}. 
\end{align}

\subsubsection{Bound on $\bm{\psi_{3,i}}$}
We have the following:
\begin{align}
\label{eq:kl_proof10}
\psi_{3,i} = \delta \sum_{m=1}^r \sum_{j=1}^l \frac{ f^{(m)}(1)}{m! p_{ij}^{m-1}} \leq \frac{\delta l 5}{\eta^4} = \frac{5l (3K)^6}{\eta^4}n^{-6}. 
\end{align}
The first inequality in the above display uses the following facts: since $f(x) = x\log x$ for KL-divergence, we have $f^{(m)}(1)/m! \leq 1$, and that since $P_i \in \Delta_l^{(\eta)}$ we have $p_{ij} \geq \eta$. Finally the second inequality uses the value of $\delta = (3K)^6n^{-6}$.

\subsubsection{Bound on $\varphi_r\lp \tau_{0,i}\rp - \varphi_r\lp \Tins\rp $.}

Next we analyze the contribution of the term: $\varphi_r\lp \tau_{0,i} \rp - \varphi_r \lp \Tins\rp$ to the regret.
 We proceed as follows:
\begin{align*}
    \varphi_r\lp \tau_{0,i} \rp - \varphi_r\lp \Tins \rp & \coloneqq \frac{l-1}{2\tau_{0,i} } - \frac{l-1}{2\Tins} + \frac{c_i^{(KL)}}{\lp \tau_{i,n}\rp^2} - \frac{c_i^{(KL)}}{\lp \Tins\rp^2}
\end{align*}
Consider the first term $(l-1)/(2\tau_{0,i})$. 
\begin{align}
\frac{l-1}{2\tau_{0,i}} & = \frac{l-1}{2\Tins \lp 1 - \frac{\Tins - \tau_{0,i}}{\Tins} \rp}
\stackrel{(a)}{\leq} \frac{l-1}{2\Tins} \lp 1 + 2 \frac{\Tins - \tau_{0,i}}{\Tins} \rp  \nonumber \\
& \leq \frac{l-1}{2\Tins} + \frac{ (l-1)(\tau_{1,i} - \tau_{0,i})}{\lp \Tins\rp^2}
\stackrel{(b)}{\leq} \frac{l-1}{2\Tins} + \frac{ 4K^2(l-1) \lp c_{\max}^{(KL)} - c_{\min}^{(KL)}\rp }{n^{5/2}}. 
\label{eq:kl_proof1}
\end{align}    
In the above display, \\
\tbf{(a)} follows from the convexity of the mapping $x \mapsto 1/x$, and the graph of this mapping lies below the chord connecting $(1,1)$ and $(1/2, 2)$. \\
\tbf{(b)} follows from the assumption that $n$ is large enough to ensure that $\Tins \geq n/(2K)$ for all $1 \leq i \leq K$. 

Proceeding in a similar way, we can obtain the following bound on the second term $\frac{c_i^{(KL)}}{\tau_{0,i}^2}$:  
\begin{align}
    \frac{c_i^{(KL)}}{\tau_{0,i}^2} & \leq \frac{c_i^{(KL)}}{\lp \Tins\rp^2 } + 
    \frac{ 32 K^3 c_i^{(KL)} \lp c_{\max}^{(KL)} - c_{\min}^{(KL)}\rp }{n^{7/2}}. 
    \label{eq:kl_proof2}
\end{align}
Together,~\eqref{eq:kl_proof1} and~\eqref{eq:kl_proof2} imply that 
\begin{align}
    \varphi_r\lp \tau_{0,i} \rp - \varphi_r \lp \Tins\rp &\leq 
    \frac{ 4K^2(l-1) \lp c_{\max}^{(KL)} - c_{\min}^{(KL)}\rp }{n^{5/2}} + 
    \frac{ 32 K^3 c_i^{(KL)} \lp c_{\max}^{(KL)} - c_{\min}^{(KL)}\rp }{n^{7/2}} \nonumber \\ 
   & \leq  \frac{ 8K^2(l-1) \lp c_{\max}^{(KL)} - c_{\min}^{(KL)}\rp }{n^{5/2}},
\label{eq:kl_proof3}
\end{align}
where the second inequality follows from the assumption that { $n \geq 8Kc_{\max}^{(KL)}/(l-1)$}.

\subsubsection{Bound on the term $\bm{\psi_{2,i}}$.}

Finally, we obtain the required bounds on the term $\psi_{2,i} \coloneqq \sum_{m=1}^r \sum_{j=1}^l \beta^{(ij)}_m /\lp \tau_{0,i}^m m! p_{ij}^{m-1} \rp$, where $\beta^{(ij)}_m\lp \tau_{0,i} ,\tau_{1,i}\rp$ was introduced in~\eqref{eq:define_q_m}. 
We consider several cases: 

\paragraph{$\bm{m=1}$.} This is the simplest case. 
\begin{align*}
   \sum_{j=1}^l \frac{\beta^{(ij)}_1}{\tau_{0,i}} =  \sum_{j=1}^l \frac{1}{\tau_{0,i}} \lp \lp \tau_{1,i} - \tau_{0,i} \rp\times 1 \times \mbb{E} \lb W_{\tau_{0,i}} \rb \rp = 0, 
\end{align*}
which follows from the fact that $\mbb{E} \lb W_{\tau_{0,i}} \rb = 0$. 

\paragraph{$\bm{m=2}$.} 
\begin{align}
    \sum_{j=1}^l \frac{1}{\tau_{0,i}^2 2! p_{ij}} \beta^{(ij)}_m & = \sum_{j=1}^l \frac{1}{\tau_{0,i}^2 2! p_{ij}} \lp \lp \tau_{1,i} - \tau_{0,i} \rp {2 \choose 1} \mbb{E} \lb W_{\tau_{0,i}} \rb 
    + \lp \tau_{1,i}-\tau_{0,i}\rp^2 {2 \choose 2} \times 1 \rp  \nonumber \\
    & =  \sum_{j=1}^l  \frac{ \lp \tau_{1,i} - \tau_{0,i} \rp^2} {2 \tau_{0,i}^2  p_{ij}}
    \leq \frac{ \lp c_{\max}^{(KL)} - c_{\min}^{(KL)}\rp^2 9K^2 l}{n^3 \eta}
    \label{eq:kl_proof6}
\end{align}

{ assumed that $n$ is large enough to ensure that $\tau_{0,i} \geq n/(3K)$ which is true if $n$ is large enough to ensure that $ n \geq \lp \frac{24K \sqrt{96 K \log(1/\delta_n)}}{(l-1)\eta^3}\rp^{2/3}$.
}

.

\paragraph{$\bm{m \geq 3}$.} Finally we consider the case where $m \geq 3$. 

\begin{align}
    \sum_{j=1}^l \frac{1}{\tau_{0,i}^m m! p_{ij}^{m-1} } \beta^{(ij)}_m & = 
    \sum_{j=1}^l \frac{1}{\tau_{0,i}^m m! p_{ij}^{m-1} } \lp \sum_{k=1}^{m} \lp \tau_{1,i}-\tau_{0,i}\rp^k {m \choose k} \mbb{E} \lb W_{\tau_{0,i}}^{m-k} \rb \rp \nonumber \\
    & \stackrel{(a)}{=}  \sum_{j=1}^l \frac{1}{\tau_{0,i}^m m! p_{ij}^{m-1} } \lp \sum_{k=1}^{m-2} \lp \tau_{1,i}-\tau_{0,i}\rp^k {m \choose k} \mbb{E} \lb W_{\tau_{0,i}}^{m-k} \rb + 
    \lp \tau_{1,i} - \tau_{0,i} \rp^m \times 1 \rp \nonumber \\ 
    & \leq  \sum_{j=1}^l p_{ij} \frac{ \tau_{1,i} - \tau_{0,i}}{\tau_{0,i}^m m! \eta^m } \lp (m-2) m! \lp \sqrt{m-2} e^{1/e} \sqrt{3 \tau_{0,i}/2} \rp^{m-2} \rp \nonumber \\
    & \qquad \qquad + \frac{ \lp c_{\max}^{(KL)} - c_{\min}^{(KL)}\rp^m \lp 2K\rp^m l}{\eta^m n^{7/2}}  \nonumber\\
    & = \frac{ a_m \lp c_{\max}^{(KL)} - c_{\min}^{(KL)}\rp}{\eta^m n^{-5/2}} + \frac{ \lp c_{\max}^{(KL)} - c_{\min}^{(KL)}\rp^m \lp 2K\rp^m l}{\eta^m n^{7/2}}. 
    \label{eq:kl_proof7}
\end{align}
where $a_m \coloneqq (m-2)^{m/2}(3K)^m e^{(m-2)/e} (3/2)^{m/2-1}$ is implicitly defined in the last equality.  

Thus we can obtain the following bound on the term $\psi_{2,i}$ using~\eqref{eq:kl_proof6} and~\eqref{eq:kl_proof7}. 
\begin{align}
    \psi_{2,i} &\leq \frac{ \lp c_{\max}^{(KL)} - c_{\min}^{(KL)}\rp^2 9K^2 l}{n^3 \eta} + \lp r-3\rp \lp 
    \frac{ a_r \lp c_{\max}^{(KL)} - c_{\min}^{(KL)}\rp}{\eta^r n^{-5/2}} + \frac{ \lp c_{\max}^{(KL)} - c_{\min}^{(KL)}\rp^r \lp 2K\rp^r l}{\eta^r n^{7/2}} \rp  \nonumber \\
   & = \frac{(r-3) a_r \lp c_{\max}^{(KL)} - c_{\min}^{(KL)}\rp}{\eta^r n^{-5/2}} + \tilde{\mc{O}} \lp n^{-3} \rp. 
   \label{eq:kl_proof8}
\end{align}

The final bound on the regret for the adaptive sampling scheme with loss $D_{KL}$ is obtained by summing up the contributions outlined in equations ~\eqref{eq:kl_proof0},~\eqref{eq:kl_proof00},~\eqref{eq:kl_proof10},~\eqref{eq:kl_proof6}  and~\eqref{eq:kl_proof8}, we get that 
\begin{align}
    \mc{R}_n \lp \mc{A}_{KL}, D_{KL} \rp & \leq 
    \frac{ 8K^2(l-1) \lp c_{\max}^{(KL)} - c_{\min}^{(KL)}\rp }{n^{5/2}} + 
    \frac{(r-3) a_5 \lp c_{\max}^{(KL)} - c_{\min}^{(KL)}\rp}{\eta^5 n^{-5/2}} + 
    \tilde{\mc{O} } \lp n^{-3} \rp . 
    \label{eq:kl_proof9}
\end{align}

This result holds if $n$ is large enough to ensure all of  the following:
\begin{align}
    \label{eq:n_large_enough_kl}
    \begin{split}
    n &\geq \frac{ K(l-\eta)}{6(l-1)} \\
     n &\geq 8Kc_{\max}^{(KL)}/(l-1)\quad \text{ and} \\
    n &\geq \lp \frac{24K \sqrt{96 K \log(1/\delta_n)}}{(l-1)\eta^3}\rp^{2/3}
   \end{split} 
\end{align}

\subsection{Derivations for $D_{\chi^2}$}
\label{appendix:chi_squared}

The $\chi^2$-distance between two discrete distributions $P$ and $Q$ is defined as $D_{\chi^2}(P, Q) = \sum_{j=1}^l \frac{ (p_j - q_j)^2}{q_j}$. If $\hat{P}_i$ denotes the empirical estimator of a distribution $P_i$ using $T_i$ i.i.d. samples, then we can compute the expected value of $D_{\chi^2}$ between $\hat{P}_i$ and $P_i$ as follows:
\begin{align*}
\mbb{E}\lb D_{\chi^2}\lp \hat{P}_i, P_i \rp \rb &= \sum_{j=1}^l \mbb{E} \lb  \frac{ \lp \hat{p}_{ij} - p_{ij} \rp^2}{p_{ij}} \rb = \sum_{j=1}^l \mbb{E} \lb \frac{ \hat{p}_{ij}^2 + p_{ij}^2 - 2p_{ij}\hat{p}_{ij}}{p_{ij}} \rb \\
    & = \sum_{j=1}^l\lp \mbb{E} \lb \frac{ \hat{p}_{ij}^2}{p_{ij}} \rb - p_{ij} \rp   
    = \sum_{j=1}^l \lp  \frac{ p_{ij}(1-p_{ij})}{T_i p_{ij}} - p_{ij} \rp \\
    &  = \frac{ l-1}{T_i} - 1. 
\end{align*}

Thus the tracking objective function is given by  $\f(c_i, T_i) = \frac{l-1}{2T_i}$. Note that the objective function can be computed exactly  and lies in the function class $\mc{F}$.  Hence we have $T_i^*=\tilde{T}_i^*$. Furthermore, since the objective function only depends on the support size of the distributions, and we assume all $P_i \in \Delta_l$ for the same $l$, the optimal allocation is the uniform allocation.

\section{Regret bound for Separation Distance}
\label{appendix:analysis_separation}

\subsection{Proof of Lemma~\ref{lemma:separation1}}
\label{proof_separation_lemma1}

\begin{proof}

Recall that  $Z_{ij}^{(t)}$ is the indicator that the output of the $s^{th}$ pull of arm $i$ was $x_j$.  
 and define $V_j \coloneqq \frac{\sum_{s=1}^{T_i} p_{ij} - Z_{ij}^{(s)}}{\sqrt{T_i} p_{ij}}$. Next, let $\rho: \mbb{R}^l \mapsto \{1, 2, \ldots, l\}$ represent the $\argmax$ operation which breaks ties by returning the smallest index, i.e., for $\Vec{v}=(v_1,v_2,\ldots,v_l) \in \mbb{R}^l$,  $\rho\lp \Vec{v} \rp = \min \{ j : v_j = \max_{1\leq k \leq l}v_k \}$. With this definition of $\rho$, and with $\Vec{V} \coloneqq (V_1, V_2, \ldots, V_l)$, we introduce the events $\Omega_j \coloneqq \{\rho(\Vec{V}) = j\}$ for $j=1,2,\ldots, l$. 

We can now proceed as follows:
\begin{align*}
    \mbb{E}\lb D_s\lp \hat{P}_i, P_i \rp \rb &\coloneqq \mbb{E} \lb \max_{1 \leq j \leq l} \lp 1- \frac{\hat{p}_{ij}}{{p_{ij}}}\rp \rb = \mbb{E}\lb \max_{1\leq j \leq l} \sum_{t=1}^{T_i} \frac{p_{ij}-Z_{ij}^{(t)} }{\sqrt{p_{ij}(1-p_{ij})T_i}}\sqrt{ \frac{1-p_{ij}}{p_{ij}T_i}}\rb  \\
    & = \frac{1}{\sqrt{T_i}} \mbb{E}\lb \max_{1 \leq j \leq l} V_j \lp \sum_{k=1}^l \mathbbm{1}_{\Omega_k} \rp  \rb \stackrel{(a)}{=}  \frac{1}{\sqrt{T_i}} \mbb{E}\lb \sum_{j=1}^l V_j \mathbbm{1}_{\Omega_j}  \rb\\
    &\stackrel{(b)}{\leq} \frac{1}{\sqrt{T_i}} \sum_{j=1}^l \mbb{E} \lb V_j \indi{V_j \geq 0} \rb \stackrel{(c)}{\leq}
    \sqrt{\frac{1}{2\pi T_i}} \lp \sum_{j=1}^l \sqrt{\frac{1-p_{ij}}{p_{ij}}} \rp + C^{(s)}_i \frac{1}{T_i}.   
\end{align*}
In the above display, \tbf{(a)} follows from the fact that on the event $\Omega_j$, $\max_{1\leq k\leq l}V_k = V_j$, \tbf{(b)} follows from the observation that $\Omega_j \subset \{V_j \geq 0\}$, and \tbf{(c)} follows from an application of Theorem~3.2 of \cite{ross2011fundamentals} and  the term $C_i^{(s)}$ is given by
\begin{equation}
\label{eq:Cis}
C_i^{(s)} = \sum_{j=1}^l\lp  (1-p_{ij})(1+2p_{ij}^2 -2p_{ij}) + \lp 2(1-p_{ij})\lp (1-p_{ij})^3 + p_{ij}^3 \rp/\pi \rp^{1/2} \rp. 
\end{equation}

Next, in order to obtain the lower bound we need some additional notation. For any subset $S$ of $[l]$, such that $|S| <l$, we introduce the terms
\begin{align}
\label{eq:bunch1}
    \mf{p}_{i,S} = \sum_{j\in S} p_{ij}, \quad \text{and} \quad 
    \hat{\mf{p}}_{i,S} = \sum_{j\in S} \hat{p}_{ij}. 
\end{align}
Define the event that  $\Omega_0 \coloneqq \{\hat{\mf{p}}_i \geq \mf{p}_i \}$. 
Then we have the following:
\begin{align*}
    \mbb{E} \lb \max_{1 \leq j \leq l} \lp 1 - \frac{ \hat{p}_{ij}  }{ p_{ij}  } \rp \rb & 
   \stackrel{(a)}{ \geq} \mbb{E} \lb \max \lp 1 - \frac{\hat{\mf{p}}_{i,S}}{\mf{p}_{i,S}}, \; 1 - \frac{ 1- \hat{\mf{p}}_{i,S}}{1 - \mf{p}_{i,S}} \rp \rb  \\
   &= \mbb{E} \lb \lp 1 - \frac{\hat{\mf{p}}_{i,S}}{\mf{p}_{i,S}} \rp \indi{\Omega_0} + \lp 1 - \frac{ 1- \hat{\mf{p}}_{i,S}}{1 - \mf{p}_{i,S}} \rp \indi{\Omega_0^c} \rb  \\
   & \stackrel{(c)}{\geq } \sqrt{ \frac{1}{2\pi T_i }} \lp \sqrt{ \frac{ 1- \mf{p}_{i,S}}{\mf{p}_{i,S}}} + \sqrt{ \frac{ \mf{p}_{i,S}}{1-\mf{p}_{i,S}}} \rp  - \frac{\tilde{C}_i^{(s)}(\mf{p}_{i,S}}{T_i}. 
\end{align*}
In the above display, \\
\tbf{(a)} follows from the argument that `\emph{bunching together}' probability mass can only result in a lower value of the term inside the expectation as formally proved in Lemma~\ref{lemma:separation_loss2}. 
\tbf{(c)} follows from an application of Theorem~3.2 of \cite{ross2011fundamentals} and the term $\tilde{C}_i^{(s)}(S)$ is defined as follows:
\begin{equation}
    \label{eq:CisTilde1}
    \tilde{C}_i^{(s)}(S) \coloneqq \sum_{p \in \{ \mf{p}_{i,S}, 1-\mf{p}_{i,S} \} }
    \lp 
  (1-p)(1+2p^2-2p) + \lp 2(1-p)\lp (1-p)^3 + p^3\rp/\pi \rp^{1/2}
    \rp
\end{equation}
Finally, the result follows by first introducing the definition
$\tilde{c}_i \coloneqq \max \left \{\lp \sqrt{ \frac{ 1- \mf{p}_{i,S}}{\mf{p}_{i,S}}} + \sqrt{ \frac{ \mf{p}_{i,S}}{1-\mf{p}_{i,S}}} \rp \; \mid \; S\subset [l], \; 1 \leq |S| <l \right\}$. Let $S^*$ denote the subset of $[l]$ at which the maximum in the definition of $\tilde{c}_i^{(s)}$ is achieved. Using this we define the remaining term to complete the proof
\begin{equation}
    \label{eq:CisTilde2}
    \tilde{C}_i^{(s)} \coloneqq \tilde{C}_i^{(s)}\lp S^*\rp. 
\end{equation}

\end{proof}
\begin{lemma}
\label{lemma:separation_loss2}
Given two probability distributions $P = (p_1, \ldots, p_l)$ and $Q= (q_1, \ldots, q_l)$ in $\Delta_l$  (the $l-1$ dimensional probability simplex), a set $S\subset [l]$ such that $1\leq |S| <l$, and $\mf{p}_{S}$ and $\mf{q}_{S}$ be defined as in~\eqref{eq:bunch1}. Then we have the following:

\begin{align*}
    D_s\lp P, Q\rp \geq \max \lp 1 - \frac{\mf{p}_{S}}{\mf{q}_S}, \; 1-\frac{ 1-\mf{p}_S}{1-\mf{q}_S}\rp. 
\end{align*}
\end{lemma}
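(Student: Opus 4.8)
The plan is to reduce the claim to the basic defining property of the separation distance. Writing $a \coloneqq D_s(P,Q) = \max_{j\in[l]}\lp 1 - p_j/q_j\rp$, the starting point is the observation (recalled in Section~\ref{subsec:separation_distance}) that $a$ is precisely the smallest constant for which the coordinatewise domination $p_j \geq (1-a)\,q_j$ holds for every $j\in[l]$; indeed, the inequality $1 - p_j/q_j \leq a$ for all $j$ is just the statement that the maximum defining $a$ is attained, equivalently that $P = (1-a)Q + aV$ for some distribution $V$. This single family of inequalities is all that is needed, and the proof amounts to summing it separately over $S$ and over its complement.

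For the first term, I would sum $p_j \geq (1-a)q_j$ over $j \in S$ to get $\mf{p}_S \geq (1-a)\,\mf{q}_S$. Assuming $\mf{q}_S > 0$ (the relevant case, since the lemma is applied to a fully supported $Q$ inside the proof of Lemma~\ref{lemma:separation1}), dividing gives $\mf{p}_S/\mf{q}_S \geq 1-a$, i.e.\ $a \geq 1 - \mf{p}_S/\mf{q}_S$. For the second term, I would sum the same inequality over the complement $[l]\setminus S$; since $\sum_{j\notin S}p_j = 1 - \mf{p}_S$ and $\sum_{j\notin S}q_j = 1 - \mf{q}_S$, this yields $1 - \mf{p}_S \geq (1-a)(1-\mf{q}_S)$, and dividing by $1-\mf{q}_S > 0$ (valid because $|S|<l$ forces $\mf{q}_S<1$ for fully supported $Q$) gives $a \geq 1 - (1-\mf{p}_S)/(1-\mf{q}_S)$. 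Taking the maximum of the two lower bounds on $a = D_s(P,Q)$ is exactly the claimed inequality.

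Alternatively, I could phrase each bound as a mediant inequality: $\mf{p}_S/\mf{q}_S$ is a $q_j$-weighted average of the ratios $\{p_j/q_j : j \in S\}$, hence at least $\min_{j}p_j/q_j = 1 - a$, and likewise for the complement. Either route is elementary.

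I do not anticipate a genuine obstacle here, since the content is just the additivity of the dominating inequality $p_j \geq (1-a)q_j$ across the two blocks $S$ and $[l]\setminus S$. The only point requiring a word of care is the degenerate case $\mf{q}_S \in \{0,1\}$, where one of the two ratios is undefined; in the setting where this lemma is invoked $Q$ has full support and $1 \leq |S| < l$, so $\mf{q}_S \in (0,1)$ and both divisions are legitimate, which I would state explicitly at the outset.
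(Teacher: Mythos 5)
Your proof is correct and is essentially the paper's own argument: the paper picks the maximizing index $j_0$, notes $p_{j_0}/q_{j_0}\leq p_j/q_j$ for all $j$, and bounds $p_{j_0}/q_{j_0}$ by the $q$-weighted average of the ratios over $S$ and over $S^c$ — which is exactly your summed inequality $\mf{p}_S \geq (1-a)\mf{q}_S$ in the mediant phrasing you mention as an alternative. Your explicit remark about the degenerate cases $\mf{q}_S\in\{0,1\}$ is a small point of care the paper leaves implicit.
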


\begin{proof}
Suppose $j_0$ is the index of the maximizer in the definition of $D_s$, i.e., $1-p_{j_0}/q_{j_0} \geq 1-p_j/q_j$ for $j \in [l]$, $j\neq j_0$.
Equivalently, we have $p_{j_0}/q_{j_0} \leq p_j/q_j$ for all $j \in [l]$, $j \neq j_0$.
We then have the following:
\begin{align*}
    \frac{p_{j_0}}{q_{j_0}} &
     \leq \sum_{j\in S} \lp \frac{p_j}{q_j}\rp
      \frac{ q_j}{\sum_{j'\in S} q_{j'}}
      = \frac{ \mf{p}_S}{\mf{q}_S}, \qquad \text{and} \qquad
     \frac{p_{j_0}}{q_{j_0}}      \leq \sum_{j\in S^c} \lp \frac{p_j}{q_j}\rp
      \frac{ q_j}{\sum_{j'\in S^c} q_{j'}}
      = \frac{ 1- \mf{p}_S}{1-\mf{q}_S}. 
\end{align*}
This implies that $D_s\lp P, Q \rp = (1-p_{j_0}/q_{j_0}) \geq \max \{ 1- \mf{p}_S/\mf{q}_S, \; 1- (1-\mf{p}_S)/(1-\mf{q}_S)\}$, which completes the proof. 

\end{proof}

\subsection{Proof of Lemma~\ref{lemma:separation_loss1}}
\label{proof_separation_lemma2}

Assume that the event $\mc{E}_2$ defined in Lemma~\ref{lemma:ell1_1} holds, that $\hat{p}_{ij, t} \geq (7e_{ij, t})/2$ for all $1\leq j \leq l$ and that the distribution $P_i$ lies in $\Delta_l^{(\eta)}$ for $1\leq i \leq K$. 

From the proof of Lemma~\ref{lemma:kl_1}, under the above assumptions we have 
\begin{align*}
    \frac{1}{p_{ij}} - 1 & \leq \frac{1}{\hat{p}_{ij, t} - e_{ij, t}} - 1 \leq \frac{1}{p_{ij}} - 1 + \frac{1}{\eta^2} \sqrt{ \frac{32 \log(2/\delta_t)}{\Tit}}. \\
\end{align*}
We can then proceed as follows:
\begin{align*}
    \sqrt{ \frac{1}{\hat{p}_{ij, t} - e_{ij, t}} - 1 } & \stackrel{(a)}{\leq} \sqrt{ \frac{1}{p_{ij}} - 1} + \frac{1}{\eta}\lp \frac{32 \log(2/\delta_t)}{\Tit} \rp^{1/4} \quad \text{and} \\
    \sqrt{\frac{1}{\hat{p}_{ij, t} - e_{ij, t}} - 1} & \stackrel{(b)}{\leq} \sqrt{ \frac{1}{p_{ij}} -1 } + 
    \frac{1}{2\eta^{5/2}} \sqrt{ \frac{32 \log(2/\delta_t)}{\Tit}}.
\end{align*}
In the above display, \\
\tbf{(a)} follows from the fact that $\sqrt{z_1 + z_2} \leq \sqrt{z_1} + \sqrt{z_2}$ for $z_1, z_2 \geq 0$, and \\
\tbf{(b)} uses the fact that the function $h_2(x) = \sqrt{x}$ is concave, and thus majorized by its first order approximation at the point $x = 1/p_{ij} - 1$. Furthermore, since the derivative of $h_2(x)$ is $1/\lp 2\sqrt{x}\rp$, we also use the fact that $p_{ij} \leq 1-\eta$ as $P_i \in \Delta_l^{(\eta)}$. 

Define the terms $a_{i,t}$ and  $b_{i,t}$ as follows:
\begin{align}
a_{i,t} \coloneqq \lp \frac{ 32 \log( 2/\delta_t)}{\Tit} \rp^{1/4} 
\label{eq:sep_proof1}
 \quad   b_{i,t} \coloneqq  \frac{l}{\eta} a_{i,t}  \min \lp 1, \frac{a_{i,t}}{2\eta^{3/2}} \rp.
\end{align}
Then we have the following:
\begin{align*}
    \sum_{j=1}^l \sqrt{ \frac{1}{p_{ij}} - 1}  \leq 
    \sum_{j=1}^l \sqrt{ \frac{1}{\hat{p}_{ij, t} - e_{ij, t}} - 1} &\leq 
    \sum_{j=1}^l \sqrt{ \frac{1}{p_{ij}} - 1} + b_{i,t}.\\ 
\end{align*}
We will refer to the $a_{i,n}$ and $b_{i,n}$ corresponding to the oracle allocation scheme (i.e., with $T_{i, t} = T_{i}^*$) as $a_{i,n}^*$ and $b_{i,n}^*$ respectively and furthermore, define 
\begin{align}
\label{eq:bnstar}
a_n^* \coloneqq \max_{1 \leq i \leq K} a_{i,n}^*, \quad 
  b_n^* \coloneqq \max_{1 \leq i \leq K} b_{i,n}^*  
\end{align}

\subsection{Proof of Theorem~\ref{theorem:separation}}
\label{proof_separation_theorem}

\paragraph{Calculate the values $A$, $\tilde{e}_n$ and $B$.}
As before, we have $\Tins = \lambda_i^{(s)} n$ where we have $\lambda_i^{(s)}  \frac{(c_i^{(s)})^2 n}{(C_s)^2}$ and $(C_s)^2 = \sum_{k=1}^K c_k^2$. We also introduce $\lambda^{(s)}_{\min} = \min_{1 \leq i \leq K} \lambda_i^{(s)}$ and $\lambda^{(s)}_{\max} = \max_{1 \leq i \leq K} \lambda_i^{(s)}$. 
We proceed as follows:
\begin{align*}
    A &\coloneqq \max_{1 \leq i \leq K} \left. \frac {\partial \f\lp c, \Tins \rp}{ \partial c} \right \rvert_{c=c_i^{(s)}} = \frac{1}{\sqrt{\lambda^{(s)}_{\min}n}} \\
     B & \coloneqq \max_{1 \leq i \leq K} \left. \frac{ \partial \f(c_i^{(s)}, T)}{\partial T} \right \rvert_{T = \Tins} = \min_{1 \leq i \leq K} \frac{ c_i^{(s)}}{2 \lp \Tins \rp^{3/2}} = 
     \frac{C_s}{\lambda^{(s)}_{\max} n^{3/2} } \\
     \tilde{e}_n & = b_n^*, 
\end{align*}
where $b_n^*$ is defined in~\eqref{eq:bnstar}. In the rest of this section, we will use the notation $E \coloneqq \frac{ A \tilde{e}_n}{B}$. Note that by definition we have 
\begin{align}
\label{eq:sep_def_E}
    E = n\frac{ \lambda^{(s)}_{\max} }{\sqrt{\lambda^{(s)}_{\min}} C_s} \frac{l a_n^* }{\eta}\min \lp 1, \frac{a_n^*}{2\eta^{3/2}}\rp. 
\end{align}
Introduce the term  
\begin{equation}
N_0 \coloneqq \min \{ n \geq 1\; : \; (n/\log(2/\delta_n)) \geq 2/(\lambda_{\min}^{(s)} \eta^6)\},  
\label{eq:sep_def_N0}
\end{equation}
where $\delta_n$ is defined in Lemma~\ref{lemma:concentration2}. 
 Then for $n \leq N_0$, the term $E$ is $\tilde{\mc{O}}\lp n^{1/2}\rp$, while for larger values of $n$, $E$ is $\tilde{\mc{O}}\lp n^{3/4}\rp$. 

\paragraph{Regret bound derivation.}
Since we are using an approximate objective function, we employ the decomposition of the regret given in~\eqref{eq:approx_reg2} in the statement of Proposition~\ref{prop:approximate_regret} in Appendix~\ref{appendix:approximate}:
\begin{align}
\label{eq:sep_regret_decomposition}
    \mc{R}_n\lp \Algsep, \Dsep \rp & \leq \mc{L}_n \lp \Algsep, \Dsep \rp - \f \lp \cisep, \Tins \rp + 2\max_{1 \leq i\leq K } \left \lvert  R_i\lp \Tins \rp   \right \rvert. 
\end{align}

We first note that the remainder term can be upper bounded by an application of Lemma~\ref{lemma:separation1} as 
\begin{align} 
 2\max_{1 \leq i\leq K } \left \lvert R_i\lp \Tins \rp   \right \rvert \leq
2\max_{1 \leq i \leq K} \lp  \lp c_i^{(s)} - \tilde{c}_i^{(s)} \rp \sqrt{\frac{1}{2\pi \Tins}}  +  \frac{C_i^{(s)} - \tilde{C}_i^{(s)}}{\Tins} \rp \nonumber \\
= 2\lp c_i^{(s)} - \tilde{c}_i^{(s)} \rp \sqrt{\frac{1}{2\pi \lambda_{\min}^{(s)}}} + \mc{O}\lp \frac{1}{n} \rp.   \label{eq:sep_remainder_bound}
\end{align}

First we get an upper bound on $\mc{L}_n\lp \Algsep, \Dsep \rp$. 
As in the earlier sections, we consider the probability $1-\delta$ event under which we have $\tau_{0,i} \leq T_i \leq \tau_{1,i}$, and $\tau_{1,i} - \tau_{0,1} \leq 2A\tilde{e}_n/B$. Then we replace the random sum in the computation of the expectation with a sum of a (deterministic) constant number of terms.
\begin{align*}
\mc{L}_n \lp \Algsep, \Dsep \rp & \coloneqq
\mbb{E} \lb D_s\lp \hat{P}_i, P_i \rp \rb 
=\frac{1}{\sqrt{T_i}} \mbb{E} \lb \max_{1 \leq j \leq l} \sum_{t=1}^{T_i} \frac{ p_{ij} - Z_{ij}^{(t)}}{\sqrt{p_{ij}(1-p_{ij}) T_i}} \sqrt{\frac{1 - p_{ij}}{p_{ij}}}  \rb  \\ 
& \stackrel{(a)}{\leq} \frac{\delta}{\eta} +
\frac{1}{\sqrt{\tau_{0,i}}} \mbb{E} \lb \max_{1 \leq j \leq l} \sum_{t=1}^{\tau_{0,i}} \frac{ p_{ij} - Z_{ij}^{(t)}}{\sqrt{p_{ij}(1-p_{ij}) \tau_{0,i}}} \sqrt{\frac{1 - p_{ij}}{p_{ij}}}  \rb +
c_i^{(s)}\sqrt{\frac{2(\tau_{1,i} - \tau_{0,i})}{\tau_{0,i}^2} }\\
& \stackrel{(b)}{\leq} \frac{\delta}{\eta} + \f(\tau_{0,i}, c_i^{(s)}) + \frac{C_i^{(s)}}{\tau_{0,i}} + c_i^{(s)}\sqrt{\frac{4E}{\tau_{0,i}^2}}
\end{align*}
In the above display, \\
\tbf{(a)} follows from the fact that the event $\mc{E}_3$ (defined in Lemma~\ref{lemma:concentration2}) occurs with probability at least $1-\delta$, that the separation distance is always upper bounded by $1/\eta$ for $P_i \in \Delta_l^{(\eta)}$, and that under the event $\mc{E}_3$, we have $\tau_{0,i} \leq T_i \leq \tau_{1,i}$, \\
\tbf{(b)} follows from Lemma~\ref{lemma:separation_loss1} and the definition of the term $E$ in~\eqref{eq:sep_def_E}.

Next, by exploiting the convexity of the mappings $x \mapsto 1/x$ and $x \mapsto 1/\sqrt{x}$, we can obtain the following relations. 
\begin{align}
    \f\lp \tau_{0,i},  c_i^{(s)} \rp &= \frac{c_i^{(s)}}{\sqrt{\Tins\lp 1 - \frac{\Tins - \tau_{0,i}}{\Tins } \rp}}
    \leq \frac{c_i^{(s)}}{\sqrt{\Tins}} + \frac{ 2 c_i^{(s)} E}{\lp\lambda^{(s)}_{\min} n\rp^{3/2} }\\
    \frac{C_i^{(s)}}{\tau_{0,i}} & \leq \frac{C_i^{(s)}}{\Tins} + \frac{ 2 C_i^{(s)} E}{\lp \Tins\rp^2} 
    \leq  \frac{ C_i^{(s)}}{\Tins}  + \frac{ 2 C_i^{(s)} E}{\lp \lambda^{(s)}_{\min} n\rp^2 }
    \\
   \frac{2c_i^{(s)} \sqrt{E}}{\tau_{0,i}} & \leq \frac{2c_i^{(s)} \sqrt{E}}{\Tins} + \frac{ 4c_i^{(s)} E^{3/2}}{ \lp \Tins\rp^2} 
\end{align}
In the above display, we have assumed that $n$ is large enough to ensure that $(\Tins - \tau_{0,i})/\Tins \leq 1/2$ for all $1\leq i \leq K$. A sufficient condition for this is that 
\begin{equation}
    \label{eq:n_large_enough_sep}
E \leq (\lambda_{\min}^{(s)} n)/2, 
\end{equation} where $E$ is defined in~\eqref{eq:sep_def_E}. 

 Combining the relations in the previous display, we get the following bound for the expected separation distance between the empirically constructed $\hat{P}_i$ and the true distribution $P_i$. 
\begin{align}
    \mbb{E} \lb D_s \lp \hat{P}_i, P_i \rp \rb -  \f \lp c_i^{(s)}, \Tins \rp &\leq \frac{\delta}{\eta} + \frac{ C_i^{(s)} + 2c_i^{(s)} \sqrt{E}}{\Tins} + 
    \frac{2c_i^{(s)}E}{\lp \lambda^{(s)}_{\min}n\rp^{3/2}} +
    \frac{ 2C_i^{(s)} E}{\lp \lambda^{(s)}_{\min} n \rp^2} + 
    \frac{ 4c_i^{(s)}E^{3/2}}{\lp \lambda^{(s)}_{\min} n \rp^2 } \label{eq:sep_regret_term1}
\end{align}

The final expression for regret stated in Theorem~\ref{theorem:separation} follows by plugging~\eqref{eq:sep_regret_term1} and~\eqref{eq:sep_remainder_bound} in the regret decomposition given in~\eqref{eq:sep_regret_decomposition}.  

Now we show that the first term in RHS of~\eqref{eq:sep_regret_decomposition} is either $\tilde{\mc{O}} \lp n^{-3/4}\rp$ or $\tilde{\mc{O}}\lp n^{-5/8}\rp$ depending on whether $n \geq $ or $< N_0$. 
\paragraph{Case~1: $\bm{n \geq N_0}$.} In this case we have $E = \lp \frac{\lambda^{(s)}_{\max} (a_n^*)^2}{C \sqrt{\lambda^{(s)}_{\min}} 2\eta^{5/2}}\rp n^{1/2} \coloneqq M_1 \sqrt{n}$. Then we have 
\begin{align}
    \mbb{E} \lb D_s \lp \hat{P}_i, P_i \rp \rb -  \f \lp c_i^{(s)}, \Tins \rp & \leq   \frac{2c_i^{(s)} \sqrt{M_1}}{\lambda^{(s)}_{\min}n^{3/4}} + 
    \frac{\delta}{\eta} + \frac{1}{n}\lp \frac{C_i^{(s)}}{\lambda^{(s)}_{\min}} + \frac{2c_i^{(s)} M_1}{\lp \lambda^{(s)}_{\min}\rp^{3/2} n} 
 + \frac{4M_1^2}{\lp \lambda^{(s)}_{\min}\rp^2}    \rp  + \frac{2C_i^{(s)} M_1}{\lp \lambda^{(s)}_{\min}\rp^2 n^{3/2}}. \label{eq:sep_regret0}
\end{align}
If $\delta \leq \eta/n$, then we have that 
\begin{equation}
    \label{eq:sep_regret1}
    \mbb{E} \lb D_s \lp \hat{P}_i, P_i \rp \rb -  \f \lp c_i^{(s)}, \Tins \rp  \leq  \frac{2c_i^{(s)} \sqrt{M_1}}{\lambda^{(s)}_{\min}n^{3/4}}
    + \tilde{\mc{O}}\lp \frac{1}{n} \rp. 
\end{equation}

\paragraph{Case~2: $\bm{n <N_0}$.} In this case, we have  $E = n^{3/4}\frac{ \lambda^{(s)}_{\max} }{\sqrt{\lambda^{(s)}_{\min}} C_s} \frac{l a_n^* }{\eta} \coloneqq M_2 n^{3/4}$, which implies that 
\begin{align}
      \mbb{E} \lb D_s \lp \hat{P}_i, P_i \rp \rb -  \f \lp c_i^{(s)}, \Tins \rp & \leq  \frac{2c_i^{(s)} \sqrt{M_2}}{\lambda^{(s)}_{\min}}n^{-5/8} + 
    \frac{\delta}{\eta} + \frac{2C_i^{(s)} M_2 n^{-3/4}}{\lp \lambda^{(s)}_{\min}\rp^2} + 
    \frac{2C_i^{(s)} M_2 n^{-5/4}}{\lp \lambda^{(s)}_{\min}\rp^2} 
      + \frac{ 4c_i^{(s)} M_2^{3/2} n^{-7/8}}{\lp \lambda^{(s)}_{\min}\rp^2}, 
    \label{eq:sep_regret00}
\end{align}
which implies that in this case we have 
\begin{align}
    \label{eq:sep_regret2}
    \mbb{E} \lb D_s \lp \hat{P}_i, P_i \rp \rb -  \f \lp c_i^{(s)}, \Tins \rp  \leq  
     \frac{2c_i^{(s)} \sqrt{M_2}}{\lambda^{(s)}_{\min}}n^{-5/8}  + 
      \tilde{\mc{O}}\lp n^{-3/4} \rp. 
\end{align}

\section{Deferred Proofs from Section~\ref{sec:lower}}
\label{appendix:lower}
In this section, we present the proof of Theorem~\ref{theorem:lower} and Corollary~\ref{corollary:lower} which were stated in Section~\ref{sec:lower}.

\subsection{Proof of Theorem~\ref{theorem:lower}}
\label{subsec:lower_theorem_proof} 

For any adaptive allocation scheme $\mc{A}$, recall that $T_1$ and $T_2$ refer to the number of samples drawn from the first and second arms respectively. Since $K=2$, we have $T_1 + T_2 = n$ where $n$ is the sampling budget.
Denote by $\Omega = \{0,1 \}^n$, the observation space. 
The allocation scheme $\mc{A}$ along with the distributions of the two arms induce a probability measure on $\Omega$. We use $\mbb{P}_1$ and $\mbb{P}_2$ to represent the two probability measures on $\Omega$ corresponding to the two problem instances $\mc{P}_1$ and $\mc{P}_2$ respectively. Finally recall that we use $(\Tins)_{i=1}^2$ to represent the oracle allocation scheme corresponding to the objective functions $\f(c_i, T_i) = c_i/T_i^{\alpha}$ for $\alpha>0$. For the case of problem $\mc{P}_1$, we have $c_1 = \cfunc(p_0)$ and $c_2=\cfunc(p_0-\epsilon)$ and these terms are swapped for the problem $\mc{P}_2$. Note that since $K=2$, we have $|\widetilde{T}_1^*-n/2| =|\widetilde{T}_2^* - n/2|$ for both problem instances.

Now, without loss of generality we assume that $\cfunc{p_0} < \cfunc{p_0-\epsilon}$, and define the event $\mc{E} \coloneqq \{T_1 < n/2\}$. The other case, (i.e., $\cfunc{p_0}< \cfunc{p_0-\epsilon}$) can be handled in an analogous manner by considering the event $\mc{E}^c$. By an application of  the standard change-of-measure result~\cite[Lemma~1]{kaufmann2017learning}, we have 
\begin{align}
\label{eq:lower1}
    \kl\lp \mbb{P}_1 \lp \mc{E} \rp, \mbb{P}_2 \lp \mc{E} \rp \rp & \leq
     \mbb{E}_1 \lb T_1 \rb \kl \lp p_0, p_0 - \epsilon \rp + \mbb{E}_1 \lb T_2 \rb \kl \lp p_0 - \epsilon, p_0 \rp. 
\end{align}
In the above display, we use the notation $\kl (p,q)$ to represent the $\kl$-divergence between $\Ber{p}$ and $\Ber{q}$ random variables. 
Next, we note that for any $p,q \in (0,1)$ we have $2 (p-q)^2 \leq \kl (p, q)$ by an application of Pinsker's inequality. Furthermore, due to the concavity of $\log(x)$, we can also show that $\kl(p,q) \leq \frac{(p-q)^2}{q(1-q)}$. Combining these two observations with~\eqref{eq:lower1}, along with the assumption that $\epsilon<p_0-1/2$, we have 
\begin{align*}
    2 \lp \mbb{P}_1 \lp \mc{E} \rp - \mbb{P}_2 \lp \mc{E} \rp \rp^2  \leq 
    & \mbb{E}_1 \lb T_1 \rb \frac{ \epsilon^2}{(p_0-\epsilon)(1-p_0 + \epsilon) } 
    + \mbb{E}_1 \lb T_2 \rb \frac{ \epsilon^2}{ p_0 (1-p_0)} 
     \leq \frac{ 2\epsilon^2}{1-p_0} \lp \mbb{E}_1 \lb T_1 + T_2 \rb \rp 
    = \frac{ 2 \epsilon^2 n}{1-p_0}. 
\end{align*}
From the above display, we can conclude that 
\begin{equation}
    \label{eq:lower2}
    |\mbb{P}_1 \lp \mc{E} \rp - \mbb{P}_2 \lp \mc{E} \rp | \leq \epsilon \sqrt{ \frac{n}{1-p_0}}. 
\end{equation}
Introducing the notation $\delta = \frac{ 1- \epsilon\sqrt{ n/(1-p_0)}}{2}$, we note that the inequality~\eqref{eq:lower2} implies that at least one the following two statements is true: \tbf{(1)} $\mbb{P}_1 \lp \mc{E}^c \rp \geq \delta$, or \tbf{(2)} $\mbb{P}_2 \lp \mc{E} \rp \geq \delta$.
With the notation $\tau \coloneqq |\Tins - n/2|$, we note that under the event $\mc{E}$ for the problem instance $\mc{P}_1$, we have $|T_1 - \widetilde{T}_1^*| \geq \tau$ and under the event $\mc{E}^c$ for problem instance $\mc{P}_2$ we have $|T_2 - \widetilde{T}_1^*| \geq \tau$. This implies that we have the following:
\begin{align}
\label{eq:lower3}
    \max_{\mc{P}_1, \mc{P}_2} \; \max_{i=1,2}\; \mbb{E} \lb |T_i - \Tins| \rb \geq 
    \delta \tau = \lp \frac{\lp 1 - \epsilon \sqrt{n/(1-p_0)}\rp\tau }{2} \rp . 
\end{align}

Finally, the result of the statement follows from the following two  observations:
\begin{align*}
|\widetilde{T}_1^* - \widetilde{T}_2^*| &= 2|\widetilde{T}_1^* - n/2| = 2 \tau \; \text{ and} \\
|\widetilde{T}_1^* - \widetilde{T}_2^*| &= n \left \lvert \frac{ \lp \cfunc{p_0}\rp^{1/\alpha} - 
\lp \cfunc{p_0-\epsilon} \rp^{1/\alpha}}{\lp \cfunc{p_0} \rp^{1/\alpha} + \lp \cfunc{p_0-\epsilon} \rp^{1/\alpha} } \right \rvert. 
\end{align*}

\subsection{Proof of Corollary~\ref{corollary:lower}}
\label{subsec:proof_lower_corollary} 
We first note that with a choice of $\epsilon = \frac{1}{4 \sqrt{n}}$ and $p_0 = 3/4$, we have $\delta  =  \frac{1 - \epsilon\sqrt{n/(1-p_0)}}{2} = 1/4$, and the RHS of~\eqref{eq:lower3} becomes $\tau/4$.  
To complete the proof, it suffices to show that $| \cfunc{p_0}^{1/\alpha} - \cfunc{p_0-\epsilon}^{1/\alpha}| = \Omega(\epsilon)$, for the $\cfuncs$ corresponding to the three loss functions $\Dltwo, \Dlone$ and $\Dsep$. The result then follows from the definition of $\tau$ and the choice of $\epsilon$.  We consider the three cases separately. 

\paragraph{$\bm{\ell_2^2}$-distance.} In this case, we have $\cfunc{p} = 1 - p^2 - (1-p)^2$ and $\alpha=1$. Thus we have $|\cfunc{p_0} - \cfunc{p_0 - \epsilon}| = \epsilon - 2\epsilon^2 \geq \epsilon/2$, where the last inequality follows from the fact that for $\epsilon=1/4\sqrt{n}$, we have $\epsilon^2 \leq \epsilon/2$. 

\paragraph{ $\bm{\ell_1}$-distance. } In this case, we have $\cfunc{p} = 2\sqrt{p(1-p)}$ and $\alpha=1/2$.  Thus, we have for $p_0=3/4$  $|\cfunc{p_0}^{1/\alpha} - \cfunc{p_0-\epsilon}^{1/\alpha}| = 4(\epsilon/2 - \epsilon^2) \geq \epsilon$. In the last inequality we used the fact that for the choice of $\epsilon$ mentioned earlier, we have $\epsilon^2 \leq \epsilon/4$. 

\paragraph{ Separation distance.} In this case, we have $\cfunc{p} = \sqrt{1/p - 1} + \sqrt{1/(1-p) + 1}$ and $\alpha=1/2$. Thus, we have for $p_0=3/4$, $|\cfunc{p_0}^{1/\alpha} - \cfunc{p_0-\epsilon}^{1/\alpha}| = \frac{ 1 - 2(1-p_0)}{p_0(1-p_0)} - \frac{1 - 2(1-p_0+\epsilon)}{(p_0-\epsilon)(1-p_0+\epsilon)} \coloneqq \zeta(1/4) - \zeta(1/4+\epsilon)$, where we have defined $\zeta(z) = \frac{1 - 2x}{x(1-x)}$. Note that for $x \in [1/4, 1/2]$ the function $\zeta$ is decreasing and convex, with $\zeta(1/2)=0$. Thus we have $\zeta(1/4) - \zeta(1/4+\epsilon) \geq \lp \frac{ 0 - \zeta(1/4)}{1/2 - 1/4} \rp \lp 1/4 - (1/4 + \epsilon)\rp = \frac{2\epsilon}{3}$. In the last inequality, we used the fact that $\zeta(x)$ is majorized by the straight line joining $(1/4, \zeta(1/4))$ and $(1/2,0)$ due to the convexity of $\zeta$ in this domain.  

Thus for all the three distances, we have $\tau = (n/2) \frac{|\cfunc{p_0}^{1/\alpha} - \cfunc{p_0-\epsilon}^{1/\alpha}|}{|\cfunc{p_0}^{1/\alpha} + \cfunc{p_0-\epsilon}^{1/\alpha}|} = \Omega \lp \epsilon n \rp = \Omega \lp \sqrt{n}\rp$ as required. 


\end{appendix}

\end{document}